\numberwithin{equation}{section}
\theoremstyle{plain}                 
\theoremstyle{plain}
\newtheorem{theorem}{Theorem}[section]
\newtheorem{lemma}[theorem]{Lemma}
\newtheorem{proposition}[theorem]{Proposition}
\newtheorem{corollary}[theorem]{Corollary}
\theoremstyle{definition}            
\theoremstyle{definition}
\newtheorem{definition}[theorem]{Definition}
\newtheorem{condition}[theorem]{Condition}
\theoremstyle{remark}
\DeclareMathOperator*{\argminA}{argmin}
\DeclareMathOperator{\Var}{Var}
\newcommand\myeq{\mathrel{\overset{\makebox[0pt]{\mbox{\normalfont\tiny\sffamily def}}}{=}}}
\newcommand\myeqD{\mathrel{\overset{\makebox[0pt]{\mbox{\normalfont\tiny\sffamily d}}}{\sim}}}
\newcommand\myApproxD{\mathrel{\overset{\makebox[0pt]{\mbox{\normalfont\tiny\sffamily d}}}{\approx}}}
\newcommand{\R}{\mathbb{R}^{d}}
\newcommand{\w}{\textbf{w}}
\begin{document}
\begin{frontmatter}
\title{High-Dimensional Change Point Detection via Graph Spanning Ratios}
\runtitle{High-dimensional CPD using graph spanning ratio}

\begin{aug}
\author[A]{\fnms{Katerina}~\snm{Papagiannouli}\ead[label=e1]{aikaterini.papagiannouli@unipi.it}},
\author[B]{\fnms{Yangwen}~\snm{Sun}\ead[label=e2]{yangwen.sun@hu-berlin.de}}
\and
\author[C]{\fnms{Vladimir}~\snm{Spokoiny}\ead[label=e3]{spokoiny@wias-berlin.de}}
\address[A]{Department of Mathematics,
University or Pisa\printead[presep={,\ }]{e1}}

\address[B]{Department of Mathematics,
Humboldt University of Berlin\printead[presep={,\ }]{e2}}
\address[C]{Weierstrass Institut\printead[presep={,\ }]{e3}}
\runauthor{K. Papagiannouli et al.}
\end{aug}

\begin{abstract}
    \sloppy
    Inspired by graph-based methodologies, we introduce a novel graph-spanning algorithm designed to identify changes in both offline and online data across low to high dimensions. This versatile approach is applicable to Euclidean and graph-structured data with unknown distributions, while maintaining control over error probabilities. Theoretically, we demonstrate that the algorithm achieves high detection power when the magnitude of the change surpasses the lower bound of the minimax separation rate, which scales on the order of $\sqrt{nd}$. Our method outperforms other techniques in terms of accuracy for both Gaussian and non-Gaussian data. Notably, it maintains strong detection power even with small observation windows, making it particularly effective for online environments where timely and precise change detection is critical.
\end{abstract}
\end{frontmatter}
\section{Introduction}\label{Sec:Intro}
Since the 1950s, as quality control became an integral part of continuous mass production processes, change-point detection (CPD) has gained prominence across various fields. The seminal works by~\cite{girshick1952bayes},~\cite{page1954continuous},~\cite{shiryaev1961problem}, and~\cite{lorden1971procedures} contributed significantly to the development of CPD methodologies. Over time, CPD expanded its scope of application to include areas such as finance~\cite{spokoiny2009multiscale}, biology~\cite{chen2011parametric}, and engineering.
In addition to traditional CPD problems involving data structured in vector space, the detection of changes in graph-structured data has gained popularity in recent decades. Network change point detection, which focuses on identifying distributional changes in dynamic graph-structured data, has applications in monitoring and analyzing network evolution. Nowadays, as sensing and communication technologies evolves, high-dimensional data are generated seamlessly. Hence, high dimensionality, online (timely), and algorithm robustness constitute major challenges to modern change-point detection problem, and are reshaping change-point detection methodology. Motivating high dimensional change-point detection problems are, for example (a) adaptive learning in online machine learning and adaptive systems; (b) indicator of changes in financial structures and market movements \cite{grundy2020high} and detect changes in diagnostic data \cite{kuleshov2016synthetic}; (c) track abrupt changes in the dynamic evolution of network data, such as social networks, brain connectivity, and the electric grid. \par

Statistically, a change-point can be characterized as a point in sequential observations $Y_i, i=1, 2, \dots$, $Y_i \in \mathbb{R}^d$ where the probability distribution before and after the point in the sequence differs, that is $\exists \tau > 0, H_0: Y_i \sim \mathcal{F}_0$, for $ i< \tau$, otherwise $H_1$: $Y_i \sim \mathcal{F}_1$. Traditional parametric approaches face limitations with high-dimensional data, as the number of parameters to be estimated can exceed the available observations. Examples include Hotelling's $T^2$ test \cite{baringhaus2017hotelling}, and generalized likelihood ratio test \cite{james1992asymptotic}. The assumptions required for the distribution of each individual dimension are challenging to establish, as the underlying distributions are typically highly context-specific \cite{siegmund2011detecting}. In contrast, nonparametric approaches, such as the kernel-based method \cite{harchaoui2009kernel}, offer advantages for high-dimensional data. However, as the dimensionality increases, selecting an appropriate kernel function and bandwidth becomes an optimization challenge.

To address the complexity of the change-point detection (CPD) problem in high-dimensional settings, a common strategy is to project the multi-dimensional data into a lower-dimensional metric space and then apply univariate CPD methods to identify change points. For instance, \cite{wang2018high} investigate the optimal projection of CUSUM statistics to enhance the detection of changes in the mean. Similarly, \cite{boracchi2018quanttree} propose a method for detecting distributional changes in multivariate data streams using histograms. Notably, the graph-based CPD approach, initially introduced by \cite{friedman1979multivariate}, employs a two-sample test based on the minimum spanning tree (MST), which effectively captures the similarity structure between observations. Additionally, \cite{rosenbaum2005exact} introduce a test based on minimum-distance pairing (MDP), which relies on the rank of distances within pairs, thereby confining the approach to the MDP graph. More recently, \cite{chen2015graph} utilize both MST and MDP graph representations to construct a test statistic by counting the number of edges connecting data points before and after a potential change point. This approach demonstrates enhanced detection power in high-dimensional data compared to parametric methods. However, its sensitivity to variance changes is relatively limited, and it is specifically designed for offline retrospective detection within a fixed dataset.
In summary, the key features of our proposed framework are as follows.
\begin{enumerate}
    \item \textbf{Adaptability and Data-Orientation}: The method is designed to detect both mean and variance changes, making it versatile for various applications.
    \item \textbf{Generality}: It can be applied to low- and high-dimensional vector or network data, even when the underlying distribution is unknown.
    \item \textbf{Timeliness and Efficiency}: The framework maintains high detection power with small scanning windows, enabling prompt identification of change points in high-dimensional online data.
\end{enumerate}
  Inspired by the graph structure, we devise graph-spanning ratios to map the dimensional data into metrics that have distributions corresponding to the mean and variance change of the original data. The detection of variance change can be applied to many practical problems where the volatility is an important factor.

We demonstrate that the GSR method can be extended to i.i.d. data with unknown distributions through permutation or bootstrap procedures. These procedures can be applied to determine an appropriate quantile, provided that a training dataset with no (or a low probability of) change-points is available. A suitable multiplicity correction can also be derived from the resampling distribution. Under mild moment conditions, the distribution of each test statistic can be well approximated by a generalized F-distribution. Through theoretical analysis, we establish that the lower bound of the minimax separation rate for testing over the alternative hypothesis is of the order $\sqrt{nd}$, which aligns with the rates identified in \cite{enikeeva2019high} and \cite{liu2021minimax}.

We adapt the spanning-ratio CPD framework for online detection in real-world settings. Multiple scanning windows are employed to capture incoming data, enabling timely detection. Our proposed graph spanning-ratio framework facilitates online change-point detection while maintaining accuracy even with small scanning windows. The structure of this paper is as follows: Section \ref{Sec:Online} details the graph spanning-ratio algorithms for both static and online change-point detection, Section \ref{Sec:Theo} provides the theoretical foundation for the algorithm, and Section \ref{Sec:Simu} presents empirical validation of these results.
\section{Method: graph spanning ratio CPD}\label{Sec:Online}
We now introduce the test statistics for the change-point detection taking into account the similarity properties from a graph. Then we define the $\alpha$-quantiles for the test statistics and provide the algorithms for the estimation of the critical values.\par
\subsection{Notation}\label{subsec:graph_simi}
We observe data: $\{Y_i\}_{i=1, \ldots, \infty}$, where $Y_i \in \mathbb{R}^d$, and $n \in \mathbb{N}$ and denote $\mathcal{N}(\mu, \Sigma^2)$ the Gaussian distribution with mean $\mu$, and variance $\Sigma^2$; $\chi^2_{df}$ as the chi-squared distribution with $df$ degrees of freedom; $F_{df_1, df_2}$ as the Fisher distribution with $df_1$, and $df_2$ degree of freedom.
Let us consider an undirected graph $G=(V,E)$, in which vertices $V=[n]$ represent a block of $n$ consecutive observations $\{1,\ldots,n\}$ from the sequential data. Edges set $E$ indicates the connectivity of two nodes. We define edge weight $W_{ij} $ as the Euclidean distance between the nodes, that is $W_{ij} = \sqrt{\| Y_i - Y_j\|^2} $.  The graph spanning distance of a graph $G$ with nodes $\{1,\ldots,n\}$  is defined as $ \| W_{G} \|^2 = \sum_{\{ij\} \in E} \|Y_i -Y_j\|^2$, where $\| W_G \|^2 $ is the sum of squared distance between nodes in a graph $G$.
For a timestamp $t$, we define a scanning window that covers $n$ data points before and after $t$, i.e., $\{t-n, \ldots, t+n-1\}$. Given a reference $k \in [2, 2n-2]$, where $k \in \mathbb{N}$, let $G_{2n}(t)$ denote the graph constructed using the data points $\{Y_{t-n}, \ldots, Y_{t+n-1}\}$. Similarly, let $G^l_k(t)$ represent the graph constructed using the data points $\{Y_{t-n}, \ldots, Y_{t-n+k-1}\}$, and let $G^r_{2n-k}(t)$ represent the graph constructed using the data points $\{Y_{t-n+k}, \ldots, Y_{t+n-1}\}$.

At a reference point $k \in (2, 2n-2)$, the graphs $G_{2n}$, $G^l_k$, and $G^r_{2n-k}$ are constructed based on the specified data. The graph choices include the minimum spanning tree (MST), nearest neighbor graph (NNG), complete graph (CG), among others. Figure \ref{fig:CG} illustrates $G_{2n}$ constructed using the graph choices of CG, MST, and NNG, corresponding to (a) a change in mean and (b) a change in variance.
\begin{figure}[ht]
  \centering
  \textbf{CG} \hspace{60pt} \textbf{MST} \hspace{60pt} \textbf{NNG}\\
  \includegraphics[width=.16\textwidth]
   {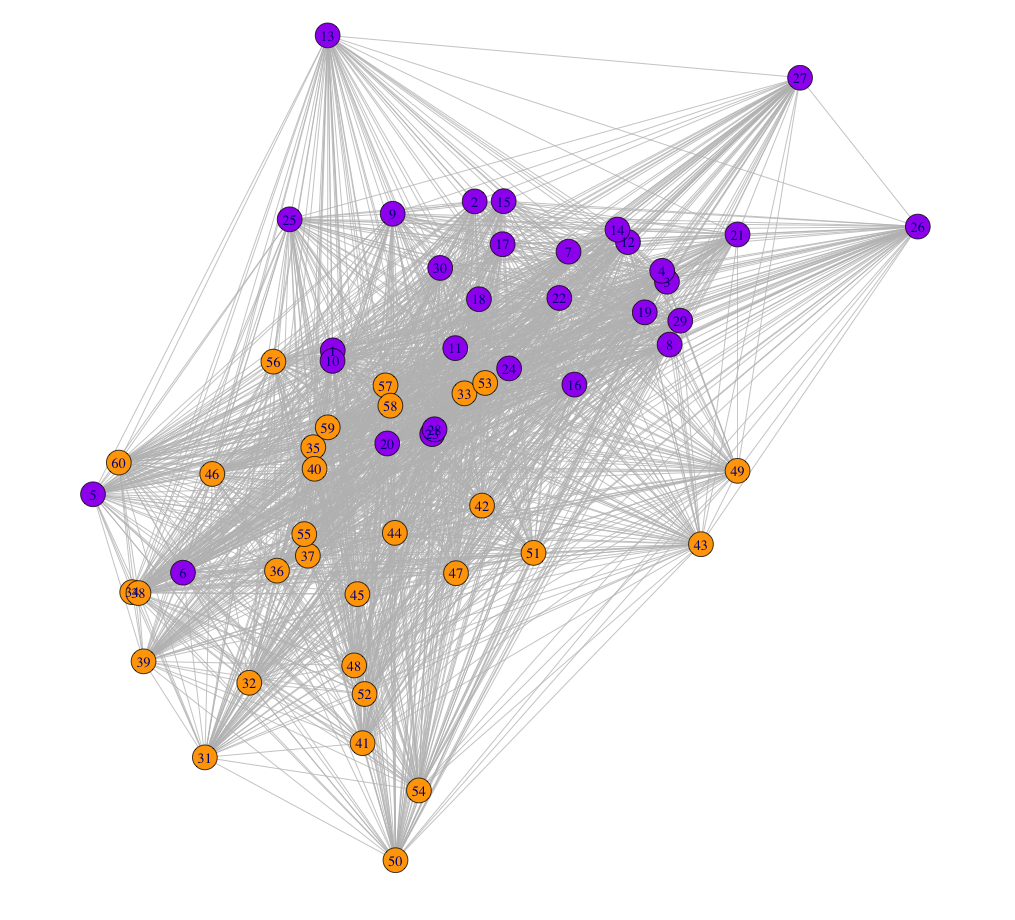}
   \includegraphics[width=.15\textwidth]
   {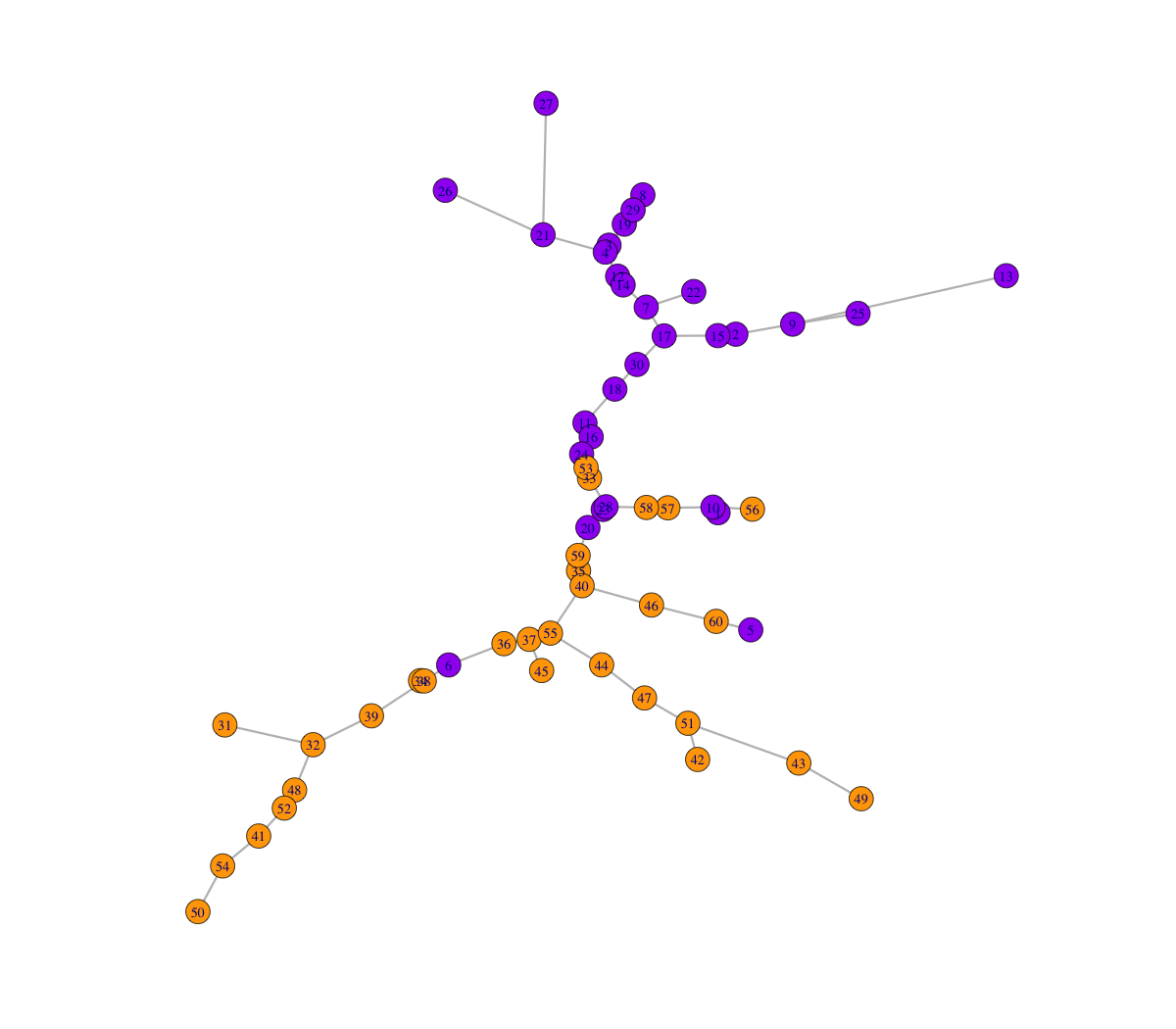}
   \includegraphics[width=.16\textwidth]
   {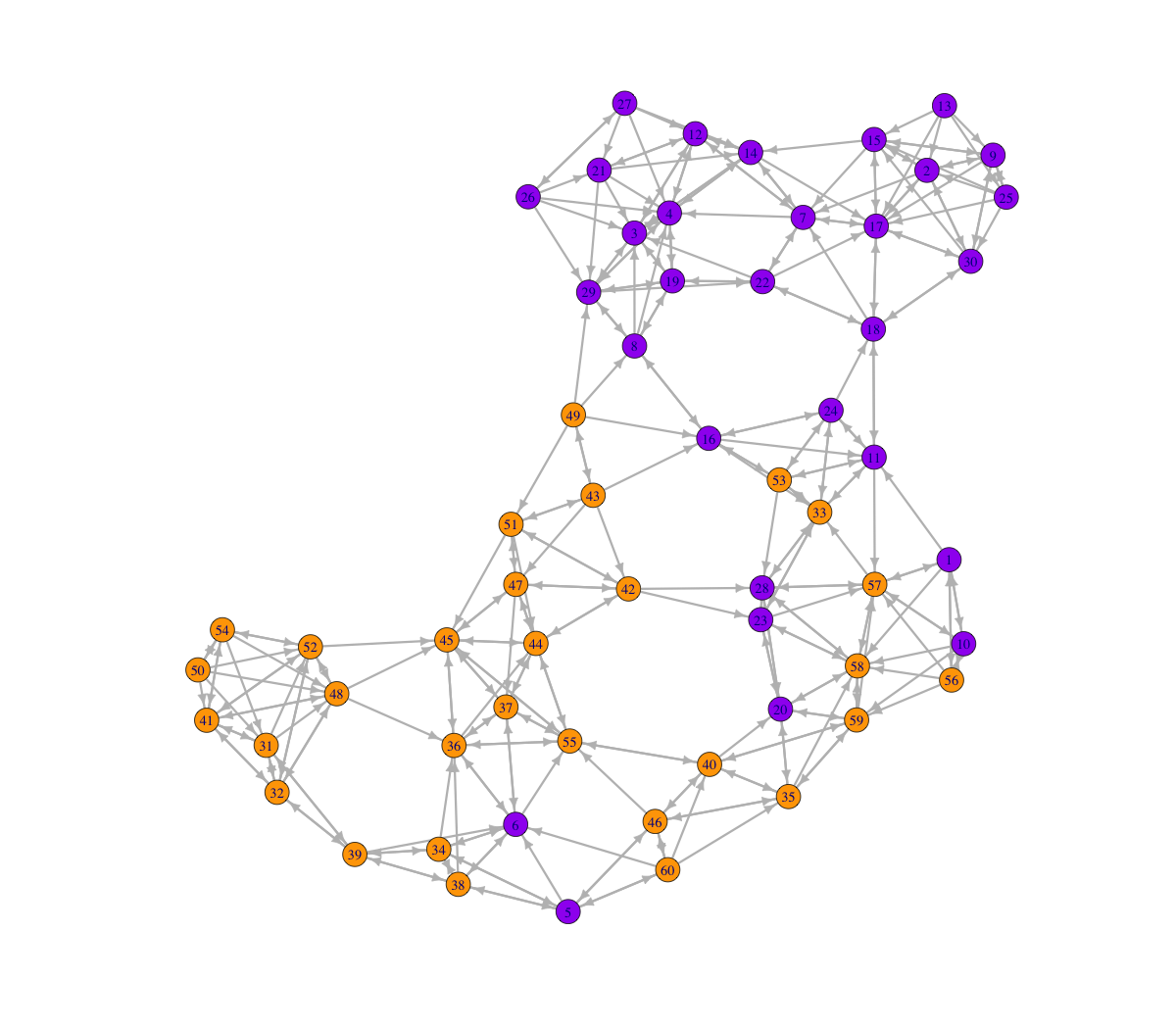}\\
  \includegraphics[width=.16\textwidth]
   {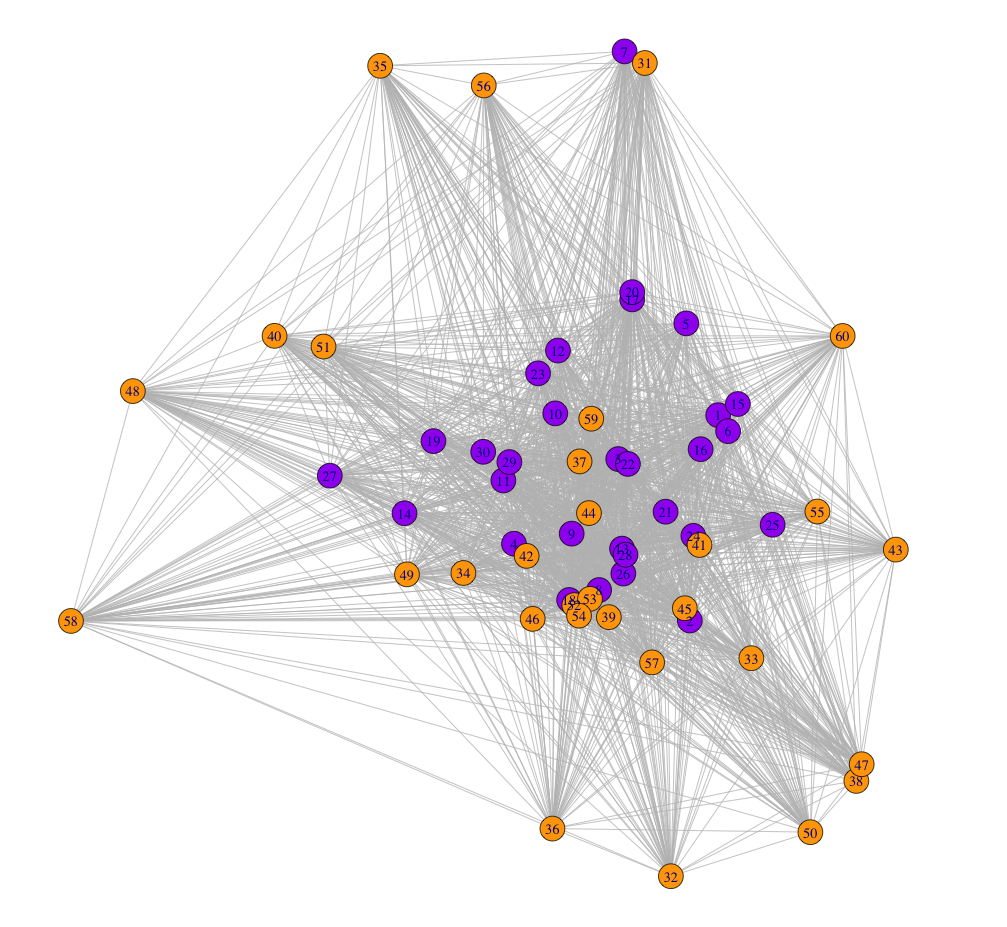}
   \includegraphics[width=.15\textwidth]
   {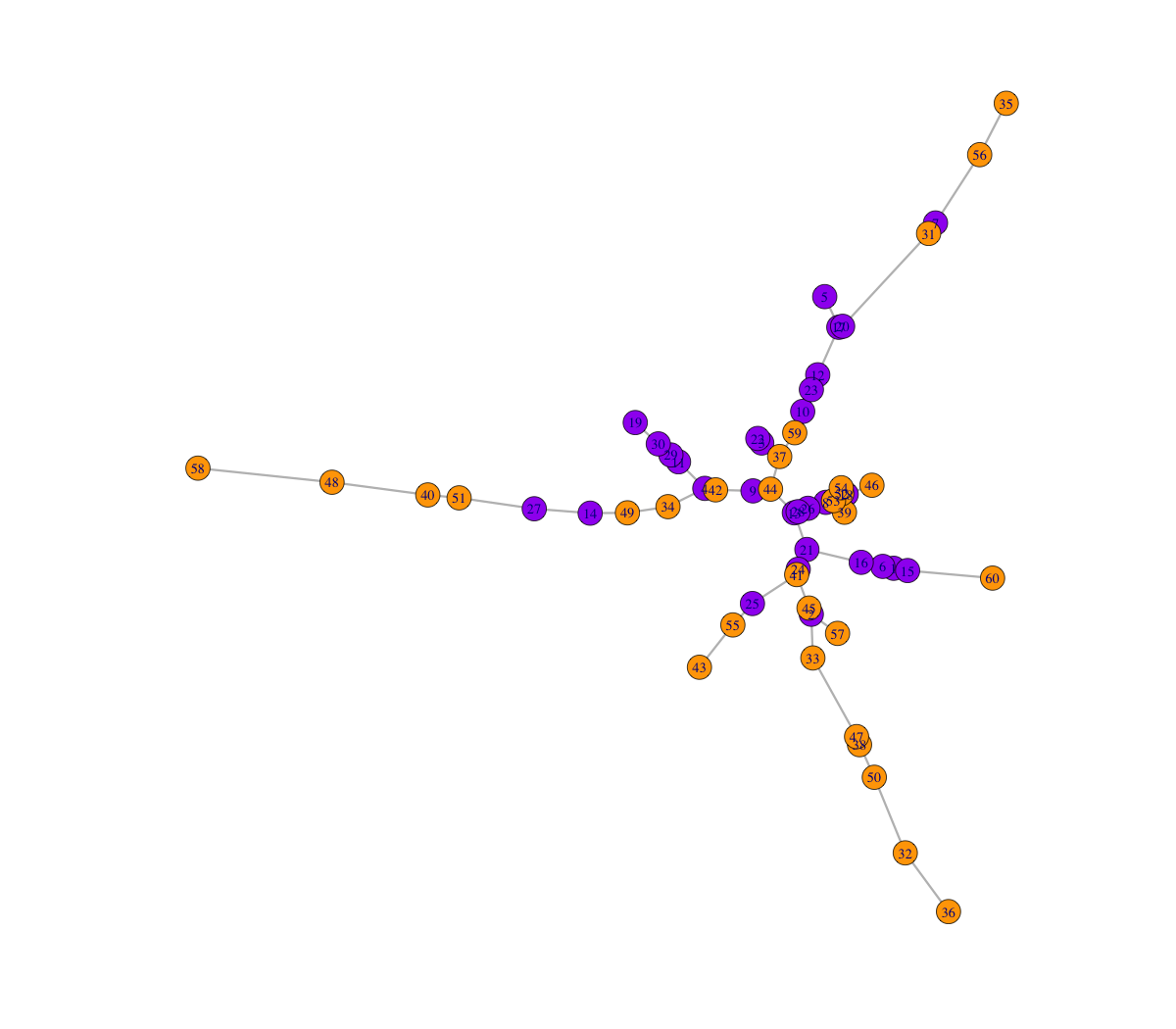}
   \includegraphics[width=.16\textwidth]
   {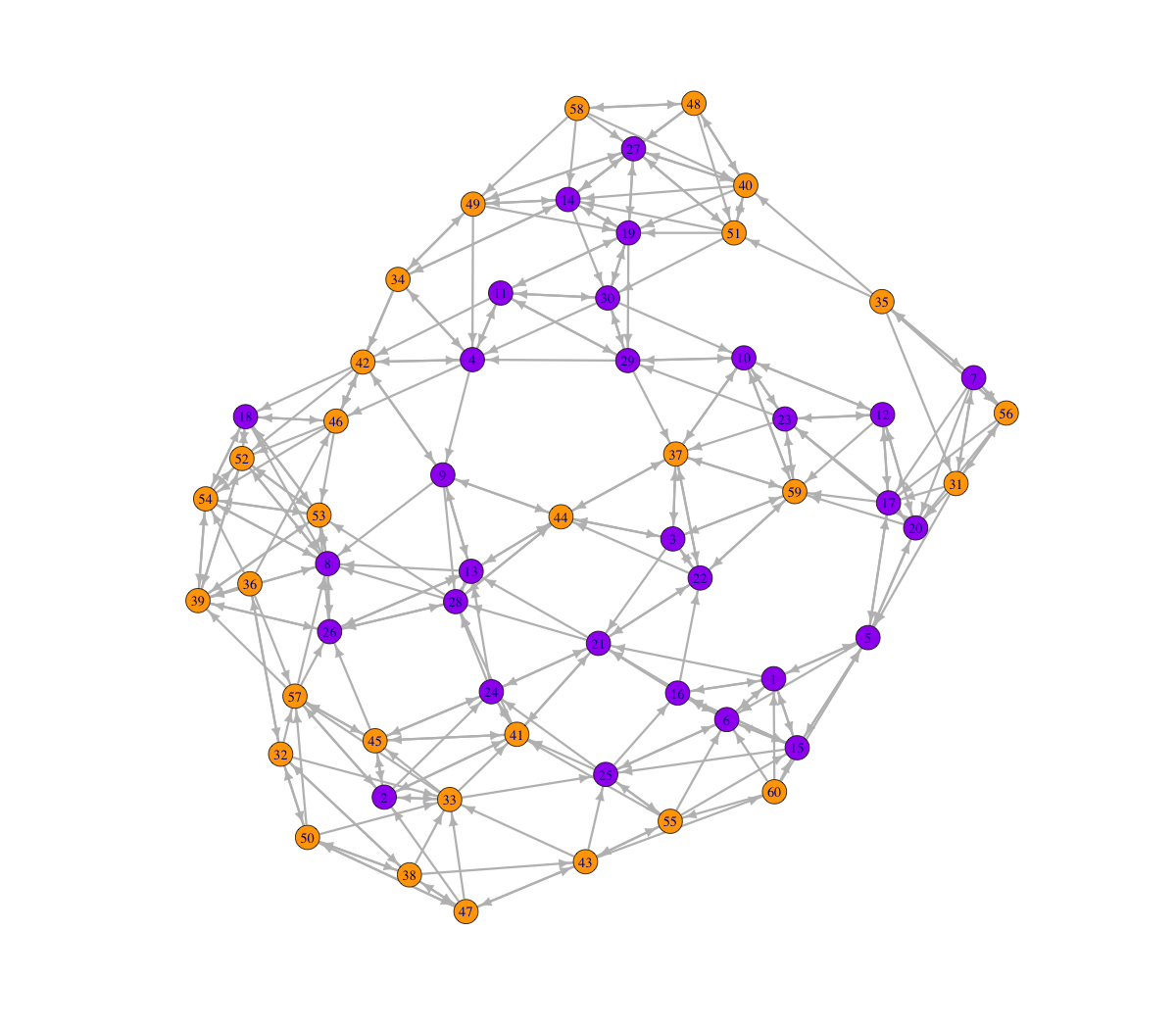}\\   
   \setlength{\belowcaptionskip}{0pt}
  \caption{Graph representation of a two-dimensional sequential data. Complete graphs, MST graphs, and NNG graphs are constructed from 60 i.i.d. normal distributed observations with first 30 observations (in orange) from standard normal, the second 30 observations (purple) with (upper row) change in mean, (lower row) change in variance.}
 \label{fig:CG}
\end{figure}  
We now define the GSR for test of the local mean change:
\begin{align*} 
R_{\mu,n,k}(t) 
 =  \frac{\| W_{G_{2n}(t)}\|^2-\frac{2n}{k} \|W_{G^l_k(t)} \|^2-\frac{n}{n-k} \|W_{G^r_{2n-k}(t)}\|^2}{\frac{2n}{k}\| W_{G^{l}_{k}(t)} \|^2+\frac{2n}{2n-k} \|W_{G^r_{2n-k}(t)}\|^2},
\end{align*}
where $n$ and $d$ are the window length and dimension of the data, respectively.

Similarly, for the detection of the local variance change, we have 
\[
	R_{\sigma+, n,k}(t)
	= \frac{(k-1) \| W_{G^r_{2n-k}(t) }\|^2}{(2n-k-1) \|W_{G^l_k(t)} \|^2}
\] 
\[
	R_{\sigma-, n,k}(t)
	= \frac{(2n-k-1) \|W_{G^l_k(t)} \|^2}{(k-1) \| W_{G^r_{2n-k}(t) }\|^2},
\]
	where $\| W_{G^{l}_{k}(t)}\|^{2}, \| W_{G^{r}_{2n-k}(t)}\|^{2}$ are the distances spanned by graphs before and after reference point $k$ within the scanning window.
Note that the graph-spanning ratio of the graphical mean $R_{\mu,n}(t)$ is devised in such a way that it increases when a change of mean occurs. Similarly for $R_{\sigma+,n}(t)$ and $R_{\sigma+,n}(t)$. Figure \ref{fig:AsymTstat} illustrates how the GSR varies as the location $k$ changes within the range ${2, \ldots, 2n-2}$. The red dotted line corresponds to the GSR of a dataset with a change point located in the middle, whereas the blue dotted line represents the GSR of a dataset without a change point.
 \begin{figure}[ht]
  \centering
  \includegraphics[width=.45\textwidth]
   {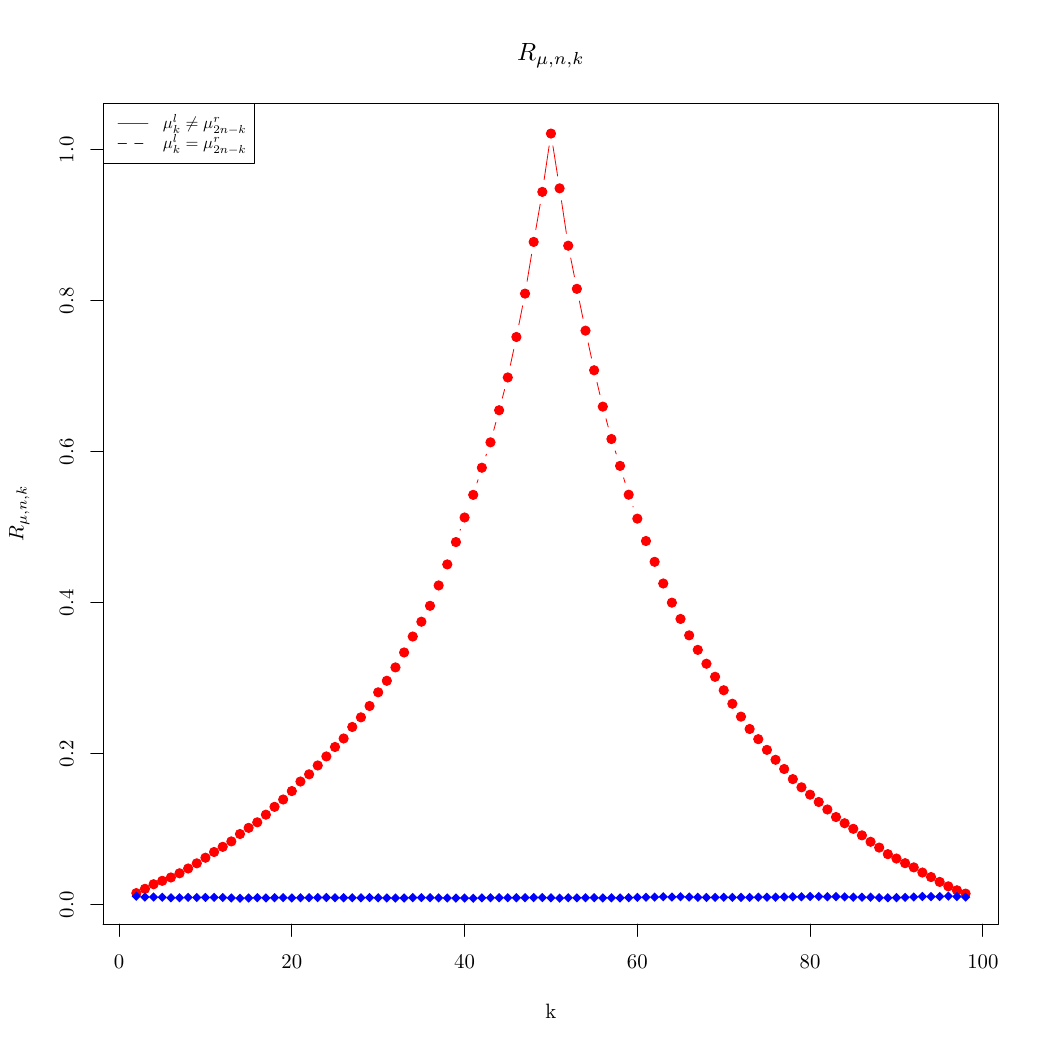}
   \includegraphics[width=.45\textwidth]
   {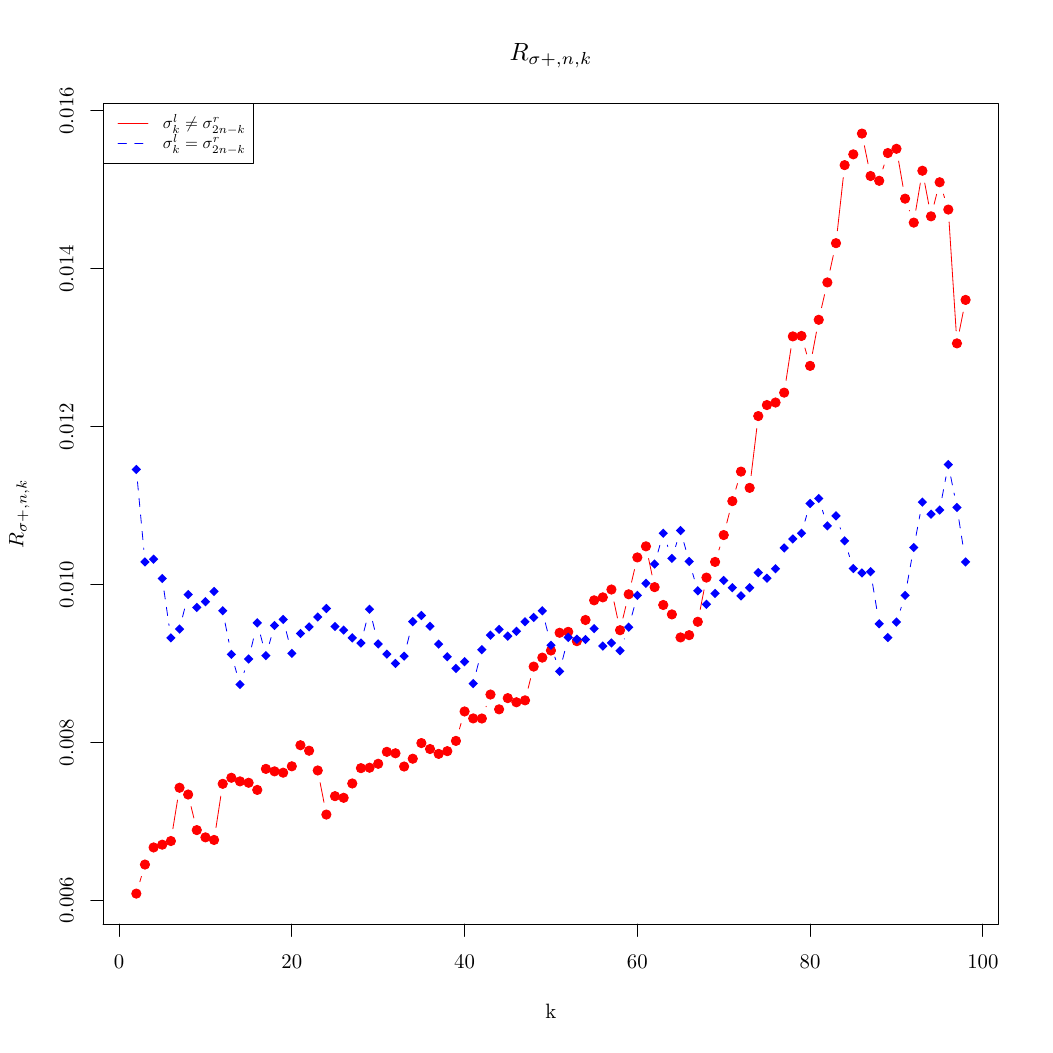}
   \setlength{\belowcaptionskip}{0pt}
  \caption{The GSRs are calculated for a data with a dimensionality of $d = 300$ and a length of $2n = 100$, as $k$ varies within the range ${2, \ldots, 2n-2}$. The red dotted line corresponds to a dataset with a change point located at the midpoint, while the blue dotted line represents a dataset without a change point.}
 \label{fig:AsymTstat}
\end{figure}
\subsection{Threshold for detection}
In the online setting, we need to take into account the small sample dependency structure, see for example \cite{kirch2008bootstrapping}. More precisely, the consecutive scanning statistics $R_{\mu, n}(t), R_{\mu, n}(t+1)$ are correlated due to the fact that we receive the data sequentially. To circumvent this problem, one can apply permutation or bootstrap procedure to determine a proper quantile provided that we are given a training data set that has no (or low probability) of change-point. Assume that we receive a sequence of $N$ i.i.d. random variables, $Y_1, \ldots, Y_N$, with $N \geq 2n$, as our training sample. We define a zone $A_n = {n+1, \ldots, N-n+1}$, where $n$ represents the size of the scanning window. For a fixed window size $n$, we perform the Bootstrap procedure (resample $Y_i$ with replacement) or the Permutation procedure (resample $Y_i$ with replacement). Base on $Y^b_1, \ldots, Y^b_N$ from the resampling, we calculate the GSR metrics ${R^b_{\mu,n,k}}(t)$, ${R^b_{\sigma+,n,k}}(t)$, and ${R^b_{\sigma-,n,k}}(t)$ for each $t \in A_n$ as follows:
         \[ R_{\mu,n,k}^{max}:= \max_{t \in A_n} R^b_{\mu,n,k}(t), \]
	    \[ R_{\sigma+,n,k}^{max}= \max_{t \in A_n} R^b_{\sigma+,n,k}(t),\]
	    \[ R_{\sigma-,n,k}^{max}= \max_{t \in A_n} R^b_{\sigma-,n,k}(t).\]

By resampling, we generate $Y^b_1, \ldots, Y^b_N$ and then repeat the procedure multiple times to estimate the quantile function of $R^b_{\mu,n,k}$. That is for $z \in  [0, 1]$,  
         \[ \rho^b_{\mu,n,k}(z) :=\inf\{ x: \mathds{P}^{b} 	\big( R^{max}_{\mu,n,k}  \geq x \big) \leq z \},\]
where $ \mathds{P}^b$ denotes the probability measure under resampling.
Resampling calibration can be used for online setting to control the false alarm rate \cite{avanesov2018change}. To lower the false alarm rate, we calibrate for all $k \in \{2,\ldots, (2n-2)\}$, 
    \[  \alpha^*_{\mu} := sup\{ z : \exists k \in \{2, \ldots, (2n-2)\},\]
    \[\mathds{P}^b\big(R^{max}_{\mu,n,k} > \rho^b_{\mu,n,k}(z) \big) <  \alpha\}, \]
The $\alpha$-quantile, $\rho^b_{\mu,n,k}(\alpha^*_\mu)$, serves as a critical value to test the change of mean. Similarly, we can calibrate the test statistics for the variance. Similarly, we estimate $\rho^b_{\sigma+,n,k}(\alpha^*_{\sigma+}) $ and $\rho^b_{\sigma-,n,k}(\alpha^*_{\sigma-}) $.
Detailed online detecting procedures are specified in Algorithm \ref{algo.onlineRaoSimple} and Algorithm \ref{algo.OnlineAsymDetectionSimple}.
\paragraph{\textbf{Asymmetric and symmetric window.}}
The reference point $k$ serves as an moving cursor to segregate the data frame to left partition and right partition which graphs $G^l_k$ and $G^r_{2n-k}$ are based on. We denote this as an asymmetric window. For case when $k = n$, when the left and right partition has exact same size of data, we denote the case as symmetric window. The symmetric window is useful when one wish to discover the seasonal structural change of the data such as month-to-month, quarter-to-quarter change. Multiple window family-wise test can be in-place to detect the change with respect to desired time frame. For simplicity, we denote the GSR test statistics for the symmetric window as $R_{\mu,n}(t)$, $R_{\sigma+,n}(t)$, and $R_{\sigma-,n}(t)$ with respect to their thresholds $\rho_{\mu,n}$,  $\rho_{\sigma+,n}$, and $\rho_{\sigma-,n}$.
\paragraph{\textbf{Offline vs. online detection}}
The above method is applicable to both offline and online data. In the online setting, the timestamp $t$ indicates the middle point of the scanning window $Y_{t-n}, \ldots, Y_{t+n-1}$. While the offline detection, the timestamp can be leave out for a closed set data  so that it is data points are labeled as $Y_1, \ldots, Y_{2n}$ for further detection.
\begin{algorithm}[ht]
   \caption{Critical value Estimation ($Y, B, n, \alpha$)}\label{algo.onlineRaoSimple}
\begin{algorithmic}
   \FOR{$b=1$ {\bfseries to} $B$}
        \STATE Generate $Y^b_1,\ldots, Y^b_{N}$ by resampling 
        \FOR{$k=t_0$ {\bfseries to} $N-t_0$}
            \FOR{$t=n+1$ {\bfseries to} $N-n+1$}
           		\STATE Calculate  test statics: $R^b_{\mu,n,k}(t)$
            \ENDFOR
            \STATE Calculate $R^b_{\mu,n,k}= \max_t R^b_{\mu,n,k}(t)$
        \ENDFOR
   \ENDFOR
\STATE  Calibrate the critical value $\rho^b_{\mu,n,k}$
\end{algorithmic}
\end{algorithm}
\begin{algorithm}[ht]
   \caption{CPD ($Y$, $\rho^b_{\mu,n,k})$}\label{algo.OnlineAsymDetectionSimple}
\begin{algorithmic}
   \STATE \textbf{Initialize} $t_0=2$, $t_L=2n-2t_0+1$, $I_{\mu}=I_{\sigma+}=I_{\sigma-}=0$    
   \REPEAT 
   \FOR{$k=t_0$ {\bfseries to} $t_L+1$ }
   		\STATE  Calculate  test statics:$R_{\mu,n,k}(t)$
        \IF{$R_{\mu,n,k}(t)> \rho^b_{\mu,n,k}$}
               \STATE $I_{\mu}=1$ \textbf{return} Mean change at $t-n+k$
        \ENDIF
    \ENDFOR
   \UNTIL{$ I_{\mu}>0$}
\end{algorithmic}
\end{algorithm}
\section{Theoretical validation}\label{Sec:Theo}
The quality of a test $\phi$ is typically measured by the type I (false positive) and type II (false negative) error probabilities. Under the null hypothesis $H_0$, the type I error probability is $\alpha = \textbf{P}_0 (\phi = 1)$. $\alpha$ is defined as the level of the test, representing the probability of rejecting $H_0$ when $H_0$ is true. This value is specified when estimating the detection threshold to control the type I error probability. If there is a change point, that is, $Y_i \sim \mathcal{F}_1$ for $i \geq t$, the type II error probability is defined as $\beta = \textbf{P}_1(\phi = 0)$, which is the probability of not rejecting $H_0$ when it is false. The quantity $1 - \beta$ is referred to as the power of the test $\phi$ at $\mathcal{F}_1$.
In this section, the $\alpha$-level and $(1- \beta)$ power are theoretically verified to ensure the quality of the proposed GSR test. Without loss of generality, the notation of test statistics under symmetric window setting, $R_\mu,n$, is used for representing the theoretical property. For a concise expression, we omit the time stamp $t$.
\subsection{ Type I error:
Level of the test for multiple windows} \label{section:P1}

For multiple window and online tests, let us fix some $\alpha \in (0,1)$, and denote the pooled test-statistic $\mathds{T}_{\mu}$ as 
\begin{equation} \label{eq:poolT}
\mathds{T}_{\mu}=\sup_{n \in \mathfrak{N}} \Big\{  T_{\mu,n} \Big\} =\sup_{n \in \mathfrak{N}} \Big\{  R_{\mu,n} - \rho_{\mu,n}(\alpha_{\mu,n}) \Big\},
\end{equation}
where  $\rho_{\mu,n}(\alpha_{\mu,n})= \argminA_{\rho} \{\mathds{P}\big( R_{\mu,n}  \geq \rho \big) \leq \alpha_{\mu,n}\}$.
 $ \{ \alpha_{\mu,n}, n \in \mathfrak{N} \}$ is a collection of numbers in $(0,1)$, such that $\forall Y_i \sim \mathcal{F}_0$, $i \in G_{2n}$, 
$ \mathds{P}_0 ( \mathds{T}_{\mu} > 0 ) \le \alpha_{\mu}.$ We reject the null hypothesis when $ \mathds{T}_{\mu} >0$. 
To verify the consistency and accuracy of the Bootstrap procedure, we apply the result of bootstrap approximation by \cite{zhilova2022new} and the delta theorem for bootstrap by \cite{wellner2013weak}. Let $Y_1, Y_2, \ldots, Y_n \overset{i.i.d}{\sim} \mathcal{F}$, $Y_i \in \mathbb{R}^d$. We denote $Y_i = (Y_{i1}, \ldots, Y_{ij})^T$, then $Y_{ij}$ is the $j$-th coordinate of $Y_i$. Assume $Y_i$ is centered, that is, $E[Y_{ij}] = 0$ and $E[Y_{ij}^2] < \infty$ for all $i=1,\ldots,n$ and $j=1,\ldots,d$. Following the Bootstrap procedure, we resample with replacement from these observation data to generate an ordered bootstrap sample: $Y^b_1, \ldots, Y^b_n$. Note that there is $1/n$ probability that $Y^b_i=Y^b_j$, $i \neq j$.
Let us assume the following condition on the random vector $Y$.
\begin{condition}
\noindent \hypertarget{SubGaussian}{[\textcolor{blue}{Sub-Gaussian condition}]}
Let $Y \in \mathbb{R}^d$ satisfy $\mathbb{E}(Y) =0$. Let $\Var(Y) \le \mathbb{I}_d$. For some $C_{Y} >0$ and $g>0$, assume that the characteristic function of $Y$ is well defined and fulfills:
\begin{equation}
    |\log \mathbb{E}e^{i\langle  u, Y \rangle} | \le \frac{C_{Y}\|u\|^2}{2}, \ \ \ \ \ \ u \in \mathbb{R}^d, \  \|u\| < g,
\end{equation}
where $i=\sqrt{-1}$.
\end{condition}
The sub-Gaussian condition states that the logarithm of the characteristic function is bounded on a ball. 

\begin{theorem}[Bootstrap validity: online]\label{theorem:BootValidOnline}

Suppose that $Y_i$ satisfies the \hyperlink{SubGaussian}{sub-Gaussian condition} and $\mathbb{E}|Y_i^{\otimes 4}| < \infty $, then
\[\left|\mathds{P}(\max_{t \in A_n} R_{\mu,n}(t) \le \rho^b_{max\mu,n}(\alpha)) - (1-\alpha) \right| \xrightarrow{}0\]
where 
\[
\rho^b_{max\mu,n}(\alpha) = \inf\{ x: \mathds{P}^{b} 	\big( R^{max}_{\mu,n}  \geq x \big) \leq \alpha \}
\]
\end{theorem}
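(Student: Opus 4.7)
The plan is to proceed in four steps, matching the structure of the claim: reduce $R_{\mu,n}(t)$ to a smooth functional of empirical sums, invoke the bootstrap high-dimensional CLT of~\cite{zhilova2022new}, transfer the approximation to the ratio statistic via the bootstrap delta theorem of~\cite{wellner2013weak}, and finally convert the distributional approximation into one for the $\alpha$-quantile of the scanning maximum.

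First I would rewrite the graph-spanning ratio as a smooth function of a low-dimensional vector of quadratic statistics. For a fixed $t$, the three quantities $\|W_{G_{2n}(t)}\|^2$, $\|W_{G^l_k(t)}\|^2$ and $\|W_{G^r_{2n-k}(t)}\|^2$ are each sums of squared Euclidean distances $\|Y_i-Y_j\|^2$ over edge subsets of $G$; when $G$ is the complete graph, they are linear combinations of $\sum_i \|Y_i\|^2$ and $\|\sum_i Y_i\|^2$, and for MST/NNG one can use a coupling with the complete-graph counterpart (since the dominating contribution comes from the sum of squared norms). Thus $R_{\mu,n}(t)=\Phi(S_n(t))$ where $S_n(t)$ is a low-dimensional empirical average of functionals of the $Y_i$'s satisfying $\mathbb{E}|\cdot|^2<\infty$ under the assumed $\mathbb{E}|Y_i^{\otimes 4}|<\infty$, and $\Phi$ is smooth on the event that the denominator is bounded away from zero.

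Next I would invoke the bootstrap approximation of~\cite{zhilova2022new}: under the sub-Gaussian condition and existence of fourth moments, the joint law of $S_n(t)$, appropriately centered and standardised, is approximated in Kolmogorov distance by its bootstrap counterpart $S_n^b(t)$, uniformly over sufficiently rich classes of convex/half-space sets. I would then apply the functional delta theorem for the bootstrap (Theorem~3.9.11 of~\cite{wellner2013weak}) to the Hadamard-differentiable map $\Phi$; the denominator concentrates at a positive limit so differentiability holds with probability tending to one. This yields, for each fixed $t$, that $R_{\mu,n}(t)$ and its bootstrap analogue $R^b_{\mu,n}(t)$ have the same limiting law conditionally on the data.

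To lift this to the scanning maximum I would proceed as follows. The process $\{R_{\mu,n}(t):t\in A_n\}$ is a smooth functional of the empirical process indexed by the sliding-window positions, and by the same reduction above it can be embedded in a finite-dimensional (or low-complexity) Gaussian approximation, so that both $\max_t R_{\mu,n}(t)$ and $\max_t R^b_{\mu,n}(t)$ converge to the same supremum of a Gaussian-type process. Passing from distributional approximation to the statement about the quantile $\rho^b_{\max\mu,n}(\alpha)$ requires an anti-concentration argument: the limiting distribution of $\max_t R_{\mu,n}(t)$ admits a bounded density in a neighbourhood of its $\alpha$-quantile, so convergence of distributions implies convergence of quantiles, and hence $\mathds{P}(\max_t R_{\mu,n}(t)\le \rho^b_{\max\mu,n}(\alpha))\to 1-\alpha$.

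The main obstacle I anticipate is the uniform control over $t\in A_n$. Consecutive windows share $2n-1$ observations, so $R_{\mu,n}(t)$ and $R_{\mu,n}(t+1)$ are highly correlated, and for MST/NNG the edge set itself changes as the window slides. To handle this I would either (i) exploit the complete-graph representation, for which the process has an explicit quadratic form in $Y_i$ and standard Gaussian-process comparison applies, and then transfer to MST/NNG by a concentration argument showing the graph-specific weights are close to their complete-graph counterparts up to a negligible remainder; or (ii) discretise $A_n$ and control the modulus of continuity of $R_{\mu,n}$ using the fourth-moment assumption. Getting a sharp anti-concentration bound for the supremum of the limit process—so that the bootstrap quantile is well-identified in the limit—will be the technical crux.
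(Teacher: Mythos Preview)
Your first three steps---embedding $R_{\mu,n}(t)$ as a smooth functional of moment sums of the $Y_i$'s, invoking the Zhilova bootstrap CLT, and pushing through via the bootstrap delta method---are exactly how the paper establishes the single-window (offline) result (its Theorem~\ref{theorem:BootValidOffline}). So on that front you and the paper agree.

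Where you diverge is in the passage from a fixed $t$ to $\max_{t\in A_n}$. You propose a fairly heavy program: embed the whole scanning process into a Gaussian approximation, control the modulus of continuity, and use anti-concentration for the supremum so that distributional closeness transfers to the quantile. The paper takes a far shorter route. It simply writes
\[
\mathds{P}\Bigl(\max_{t\in A_n} R_{\mu,n}(t)\le z\Bigr)=\mathds{P}\bigl(R_{\mu,n}(t_1)\le z,\ldots,R_{\mu,n}(t_k)\le z\bigr),
\]
does the same on the bootstrap side, and then asserts that the supremum over $z$ of the difference of these joint CDFs is bounded by the supremum over $z$ of the difference of the \emph{marginal} CDFs at a single $t_1$; the offline theorem is then invoked and the proof ends. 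So the paper's reduction to the offline case is a one-line inequality, not a process-level argument, and none of the machinery you flag (Gaussian comparison, anti-concentration, modulus of continuity, MST/NNG coupling) appears.

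What each approach buys: the paper's argument is dramatically shorter, but the key inequality---bounding a difference of joint probabilities by a difference of marginals---is stated without justification and is not true in general for arbitrary distributions, so if you want a fully rigorous proof you would have to supply that step yourself (e.g.\ via a union bound picking up a factor $|A_n|$, or by arguing joint weak convergence of the finite-dimensional vector $(R_{\mu,n}(t))_{t\in A_n}$ under both the original and the bootstrap laws). Your route, if carried out, would give exactly that joint control and an honest anti-concentration step, at the cost of substantially more work than the paper undertakes.
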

See proof in Appendix \ref{Proof:BootValidOnline}. Similar results apply to the test statistics of variance. 
\subsection{Power of test for multiple windows}
Our aim is to determine the test's ability to detect a change when the distribution shift exceeds a threshold. That is, with $\mathds{P}$-probability greater than $1-\beta$, where $\beta \in (0,1)$, the test can detect the change when the mean shift exceeds a specified threshold $\Delta$, where $\beta$ represents the false negative rate.
We first focus on the complete graph with normally distributed observations, then extend the analysis to other graph types, and eventually to unknown distributions. When observations follow a Gaussian distribution, the GSR test statistics based on a complete graph follow a Fisher distribution. We now introduce a definition related to the spanning distance of the gap between $G_{2n}$ and $G^{l}_{n}$, $G^{r}_{n}$. This quantity is essential for determining how far the mean separates the data before and after the change point. For simplicity and without loss of generality, we omit the time stamp $t$ in this section to ensure concise expressions.
\begin{definition}
We define an gap-spanning distance:
\[\| W_{gap,n}\|^2 = \sum_{i \in G^l_n, j\in G^r_n, {i,j}\in E_{G_{2n}}} \| Y_i -Y_j\|^2,\]
which is the total spanning distance between $G_n^l$ and $G_n^r$. Let its mean value be $\| \mu_{gap,n} \| ^2$.
\end{definition}
Next, we show the theoretical separation gap (gap-spanning distance) for the static test to detect a change, and its corresponding type II error rate ($\beta$)

\begin{theorem}[Power of the test]\label{Theo:Delta_mu}
Let $ \mathds{T}_{\mu}$ be the test statistics specified in Equation (\ref{eq:poolT}), and $\beta \in (0,1)$. Then
$\mathds{P} (  \mathds{T}_{\mu} > 0) \ge 1-\beta$, if 
\begin{align*}
  &\sup_{n \in \mathfrak{N}} \{ \| \mu_{gap,n} \| ^2 - \Delta_{\mu}(n) \} \geq 0, \\
  &\Delta_{\mu}(n) = C_1 \Big( \| \mu^{l}_{G_{n}}\|^2+\| \mu^{r}_{G_{n}} \|^2 +  C_2 \sigma^2 \Big),
\end{align*}
where $\| \mu_{gap,n} \| ^2$,   $ \|\mu^{l}_{G_{n}}\|^2$,  and $\| \mu^{r}_{G_{n}} \|^2$ are the expected gap-spanning distance and the expected spanning distance of subgraphs $G^{l}_n$ and $G^{r}_n$, respectively.   
\begin{align*}
 C_1 = &5 \frac{N_n}{D_n} F^{-1}_{N_n, D_n} (\alpha_{\mu,n}),  \\
 C_2 = &
 \Bigg( D_n + 2 \sqrt{D_n \log\Big(\frac{2}{\beta}\Big)} +4 \log\Big(\frac{2}{\beta}\Big) \Bigg) - \frac{5}{4} \Bigg(N_n  - 2 \sqrt{N_n \ log \Big(\frac{2}{\beta}\Big)} - 10 \log \Big(\frac{2}{\beta}\Big) \Bigg),
\end{align*}
where $N_n = d$ and $ D_n = 2(n-1)d$.
\end{theorem}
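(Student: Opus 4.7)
The plan is to reduce the multi-window statement to a single-window one: because $\mathds{T}_{\mu}$ is a supremum over $n\in\mathfrak{N}$, it suffices to exhibit a single $n^{\ast}$ for which $R_{\mu,n^{\ast}}>\rho_{\mu,n^{\ast}}(\alpha_{\mu,n^{\ast}})$ holds with $\Pm$-probability at least $1-\beta$, and then apply that pointwise analysis to the $n$ achieving $\|\mu_{gap,n}\|^2\ge \Delta_{\mu}(n)$. I would therefore fix such an $n$ and argue at one scale, taking the sup only at the end.

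The next step is a deterministic algebraic reformulation. Using the additive identity for the complete-graph spanning distance under the symmetric window,
\[
\| W_{G_{2n}}\|^2 \;=\; \| W_{G^l_n}\|^2 + \| W_{G^r_n}\|^2 + \| W_{gap,n}\|^2,
\]
the numerator of $R_{\mu,n}$ becomes proportional to $\|W_{gap,n}\|^2$ minus a positive linear combination of $\|W_{G^l_n}\|^2,\|W_{G^r_n}\|^2$, while the denominator is itself a positive linear combination of the same two quantities. Rearranging, the rejection event $\{R_{\mu,n}>\rho_{\mu,n}(\alpha_{\mu,n})\}$ is equivalent to an inequality of the form
\[
\| W_{gap,n}\|^2 \;>\; \kappa\bigl(\rho_{\mu,n}(\alpha_{\mu,n})\bigr)\,\bigl(\| W_{G^l_n}\|^2 + \| W_{G^r_n}\|^2\bigr),
\]
for an explicit affine function $\kappa$. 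Under the Gaussian complete-graph model the left- and right-hand ratios reduce to a Fisher variable with $(N_n,D_n)=(d,2(n-1)d)$ degrees of freedom, which is where the factor $C_1=5(N_n/D_n)\,F^{-1}_{N_n,D_n}(\alpha_{\mu,n})$ originates.

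To turn this into a bound on $\|\mu_{gap,n}\|^2$ I would control the randomness on both sides. The complete-graph spanning distances of $N(\mu,\sigma^2 I_d)$ samples are shifted chi-squared: $\|W_{G^{l/r}_n}\|^2$ has mean $\|\mu^{l/r}_{G_n}\|^2$ plus a $\sigma^2\chi^2$-type fluctuation of total degree $D_n$, and $\|W_{gap,n}\|^2$ has mean $\|\mu_{gap,n}\|^2$ plus an independent chi-squared fluctuation of degree $N_n=d$. Applying the Laurent--Massart bounds
\begin{align*}
    \Pm\!\bigl(\chi^2_D \ge D + 2\sqrt{D\log(2/\beta)}+2\log(2/\beta)\bigr) &\le \beta/2, \\
    \Pm\!\bigl(\chi^2_D \le D - 2\sqrt{D\log(2/\beta)}\bigr) &\le \beta/2,
\end{align*}
to the denominator and numerator pieces and combining them by a union bound yields, with probability at least $1-\beta$, a deterministic upper bound on $\|W_{G^l_n}\|^2+\|W_{G^r_n}\|^2$ and lower bound on $\|W_{gap,n}\|^2$. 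Substituting these into the rearranged rejection condition and solving for $\|\mu_{gap,n}\|^2$ reproduces $\Delta_{\mu}(n)$, with the positive piece of $C_2$ coming from the $D_n$-degree denominator slack and the $-\tfrac{5}{4}(\cdots)$ piece absorbing the slack for the $N_n$-degree numerator.

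The main obstacle I anticipate is the bookkeeping of the multiplicative slack: carrying the Fisher-quantile factor through the ratio and pinning down the exact numerical constants $5$ and $5/4$ requires choosing the concentration event so that the random denominator can be replaced by a deterministic upper bound without inflating the threshold beyond $C_1\sigma^2 C_2$. A secondary but lighter issue is that, when the Gaussians have nonzero mean, cross-terms of the form $\langle Y_i-\mu_i,\mu_j\rangle$ enter the spanning distances; these are zero-mean Gaussians and can be absorbed into a mild enlargement of the same chi-squared radius, so I would handle them by a concentration-plus-union argument rather than a separate device. Once these bookkeeping steps are in place, the stated separation condition is immediate, and the sup over $n$ closes the proof.
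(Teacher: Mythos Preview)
Your outline is correct and matches the paper's route at the structural level: reduce to one window via $\mathds{P}(\mathds{T}_\mu\le 0)\le \inf_{n\in\mathfrak N}P(n)$, rewrite the rejection region as $\|W_{gap,n}\|^2>\kappa\cdot(\|W_{G^l_n}\|^2+\|W_{G^r_n}\|^2)$, bound the denominator from above and the numerator from below each with probability $1-\beta/2$, and solve for $\|\mu_{gap,n}\|^2$. The paper differs from your plan in one streamlining device: rather than treating the mean shift as ``cross-terms'' to be absorbed into an enlarged central-$\chi^2$ radius, it recognises each spanning distance as a \emph{non-central} chi-squared outright and invokes the Birg\'e--Massart quantile bounds for that law,
\[
Q(a,D,u)\le D+a+2\sqrt{(D+2a)\log(1/u)}+2\log(1/u),\qquad
Q(a,D,1-u)\ge D+a-2\sqrt{(D+2a)\log(1/u)},
\]
so the non-centrality parameters $\|\mu^{l}_{G_n}\|^2,\|\mu^{r}_{G_n}\|^2,\|\mu_{gap,n}\|^2$ enter the inequalities directly, with no separate concentration step for linear cross-terms. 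The exact constants you flagged as the main bookkeeping obstacle then fall out of a single Young-type inequality: applying $2\sqrt{uv}\le \theta u+\theta^{-1}v$ with the specific choice $\theta=1/5$ to the term $2\sqrt{2\|\mu_{gap,n}\|^2\log(2/\beta)}$ in the lower bound leaves $\tfrac{4}{5}\|\mu_{gap,n}\|^2$ on the numerator side, which after rearranging produces the $5$ in $C_1$ and the $\tfrac{5}{4}$ and the $-10\log(2/\beta)$ in $C_2$. Your decomposition-plus-cross-terms route would also close, but with different constants and an extra union-bound layer; the non-central-$\chi^2$ formulation is what makes the stated $\Delta_\mu(n)$ come out in the exact form given.
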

See proof in Section \ref{Proof:Delta_mu}. This implies that there exists a window size $n \in \mathfrak{N}$ such that when the mean gap-spanning exceeds the threshold $\Delta_{\mu}(n)$, then the power of test: $ \mathds{P} ( \mathds{T}_{\mu} > 0) \ge 1-\beta $ is achieved. This gives the theoretical guarantee that the false negative rate (type II error) is smaller than $\beta$.

Figure~\ref{fig:Delta_mu} illustrates $\Delta_{\mu}$ as a function of $\beta$ with a fixed window length $n=30$, data dimension $d=100$, and significance level $5\%$. As $\beta$ decreases, the mean gap-spanning $\Delta_{\mu}$ must increase to ensure the true positive rate $1 - \beta$.
\begin{figure}[H] 
\centering
  \includegraphics[width=.45\textwidth]{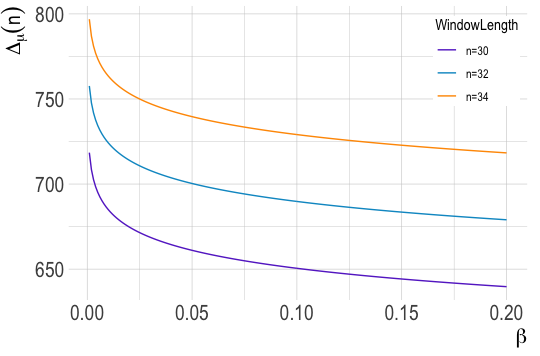}\setlength{\belowcaptionskip}{-5pt}
  \caption{Mean gap-spanning distance $\Delta_\mu$ to ensure $1-\beta$ power in detecting mean change for window size $n=30, 32, 34$, and dimension $d=10$.}
\label{fig:Delta_mu}
\end{figure}
The power of the test for the change of variance $\sigma$ can be shown in a similar way. The $(1-\beta)$-power of the test can thus be obtained. The theorems and proofs are presented in the Appendix \ref{prop:powertest_sigma}.

In the next section, we study the minimum radius for detecting the distributional change with the prescribed error rate of $\alpha$ and $\beta$. 
\subsubsection{Minimum radius of the mean separation}
We denote the quantity
\[ \beta(\mathcal{F}_1 )= \inf_{\phi_{\alpha}} \sup_{ \mathcal{F}_1 } P[\phi_{\alpha}=0], \]
where $\mathcal{F}_1$ is the alternative distribution as stated in $H_1$. $\beta(\mathcal{F}_1)$ is the infimum taking over all the tests $\phi_\alpha$ with values in $\{0,1\}$ satisfying $P_0[\phi_\alpha = 1] \leq \alpha$.

Let $\|\mu_{gap,n} \|^2$ belong to some subset of the Hilbert space,
$\textit{l}_2(n)=\big\{ \|\mu_{gap,n} \|^2 < \infty\big\}.$ For the problem of detecting the mean change, the minimal radius $\rho$ (i.e., lower bound of the minimax separation rate) is a quantity, when $\| \mu_{gap,n} \| \geq \rho$ which is the problem of testing for $i> t$, $H_0$ against the alternative,  $H_1: Y_i \sim \mathcal{F}_1$, with prescribed error probabilities, is still possible \cite{spokoiny1996adaptive}. The test $\phi_0$ is powerful if it rejects the null hypothesis for all $\{Y_i, i> t \} \sim {\mathcal{F}_1} $ outside of a small ball with probability close to 1. 

We derive the minimal radius based on the result from \cite{baraud2003adaptive}, and \cite{baraud2002non}. 

\begin{proposition}[$\left( \alpha,\beta \right)$ minimum radius]\label{Prop:lower_bound}
Let $\beta \in (0, 1-\alpha_{\mu,n})$ and fix some window size $n \in \mathfrak{N}$. Let
\[\theta(\alpha_{\mu,n}, \beta) = \sqrt{2 \log (1+4(1-\alpha_{\mu,n}-\beta)^2)}. \]
If $\| \mu_{gap,n}\|^2 \leq \theta(\alpha_{\mu,n}, \beta) \sqrt{nd} \sigma^2, $
then $\mathds{P}(T_{\mu,n}(t) \geq 0 ) \leq 1- \beta $.
\end{proposition}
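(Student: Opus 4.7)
The plan is to derive the lower bound via the classical Le Cam two-point reduction (following \cite{baraud2002non,baraud2003adaptive}) combined with a conjugate Gaussian prior on the alternative. First I note that the event $\{T_{\mu,n}(t) \geq 0\}$ coincides with $\{R_{\mu,n}(t) \geq \rho_{\mu,n}(\alpha_{\mu,n})\}$, so $\phi_\alpha := \mathbf{1}\{T_{\mu,n} \geq 0\}$ is, by construction, a test of level $\alpha_{\mu,n}$ under $H_0$. Bounding $\mathds{P}(T_{\mu,n} \geq 0)$ from above at an alternative thus amounts to bounding the power of $\phi_\alpha$. For any prior $\pi$ on mean shifts satisfying the radius constraint, write $\bar P = \int P_\nu\, d\pi(\nu)$; the standard Neyman--Pearson inequality yields
\[
E_{\bar P}[\phi_\alpha] \;\leq\; E_{P_0}[\phi_\alpha] + \|P_0 - \bar P\|_{TV} \;\leq\; \alpha_{\mu,n} + \|P_0 - \bar P\|_{TV},
\]
so at least one element of the support of $\pi$ has power no larger than this average, and it suffices to exhibit a prior for which $\|P_0 - \bar P\|_{TV} \leq 1 - \alpha_{\mu,n} - \beta$.

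Next I specialise to the Gaussian model: under $P_0$ all $Y_i \sim \mathcal{N}(0,\sigma^2 I_d)$, and under $P_\nu$ the right block $Y_{n+1},\ldots,Y_{2n}$ is shifted by $\nu \in \mathbb{R}^d$ while the left block is unchanged. For the complete-graph reference the mean gap-spanning distance $\|\mu_{gap,n}\|^2$ is an explicit quadratic function of $\nu$, so the radius hypothesis $\|\mu_{gap,n}\|^2 \leq \theta(\alpha_{\mu,n},\beta)\sqrt{nd}\,\sigma^2$ becomes a bound on $\|\nu\|^2$. I would then take the conjugate prior $\nu \sim \mathcal{N}(0,\tau^2 I_d)$ with $\tau^2$ tuned to saturate this bound, so that the induced mixture $\bar P$ is Gaussian with inflated covariance on the right block.

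For the distance control I use the sharp bound $\|P_0 - \bar P\|_{TV}^2 \leq \tfrac{1}{4}\chi^2(\bar P,P_0)$. Integrating out the sufficient statistic $\sum_{i=n+1}^{2n} Y_i \sim \mathcal{N}(0, n\sigma^2 I_d)$ under $P_0$ gives
\[
1 + \chi^2(\bar P, P_0) \;=\; E_{\nu,\nu'}\!\left[\exp\!\left(\tfrac{n}{\sigma^2}\langle \nu,\nu'\rangle\right)\right] \;=\; \left(1 - \tfrac{n^2\tau^4}{\sigma^4}\right)^{-d/2},
\]
where $\nu,\nu'$ are independent copies of the prior. Picking $\tau^2$ to match the boundary of the radius constraint forces this quantity to equal $1 + 4(1-\alpha_{\mu,n}-\beta)^2 = \exp(\theta^2/2)$ by the very definition of $\theta$, hence $\|P_0 - \bar P\|_{TV} \leq 1 - \alpha_{\mu,n} - \beta$, and the Le Cam inequality closes the loop to give $E_{\bar P}[\phi_\alpha] \leq 1-\beta$.

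The main obstacle I anticipate is the quantitative bookkeeping that ties the prior variance $\tau^2$ to the radius constraint through the graph structure: one must verify that choosing $\tau^2$ to saturate $\|\mu_{gap,n}\|^2 = \theta\sqrt{nd}\,\sigma^2$ produces exactly the exponent $\theta^2/2$ in the chi-square expansion above, pinning down all $n,d,\sigma^2$ dependencies and forcing the specific form of $\theta(\alpha_{\mu,n},\beta)$. A secondary difficulty, since the statement should ultimately be graph-agnostic, is checking that the same Gaussian prior construction yields the claimed lower rate also for the MST and NNG reference graphs, so that the bound is not an artefact of the complete-graph choice.
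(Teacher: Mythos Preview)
Your overall strategy---Le Cam reduction, bounding total variation by the chi-square divergence of a mixture---is the same template the paper uses, but the choice of prior differs and that difference creates a genuine gap in your argument.

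The paper follows Baraud and places a \emph{Rademacher} prior on the signal: $\mu_\rho$ is the law of $\sum_{j\in I}\lambda\epsilon_j e_j$ with $\lambda=\rho/\sqrt{N_n}$ and $\epsilon_j$ i.i.d.\ $\pm 1$. This prior is supported \emph{exactly} on the sphere $\|\mu_{gap,n}\|=\rho$, so every point in the support automatically satisfies the radius hypothesis, and the step ``some element of the support has power at most the mixture average'' is immediately legitimate. The likelihood ratio factorises as $e^{-\rho^2/2}\prod_j\cosh(\lambda y_j)$, giving $\mathbb{E}_0[L_{\mu_\rho}^2]\le\exp(\rho^4/(2N_n))$ directly, and the claimed form of $\theta(\alpha_{\mu,n},\beta)$ drops out without any asymptotics.

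Your Gaussian prior $\nu\sim\mathcal{N}(0,\tau^2 I_d)$, by contrast, has full support on $\mathbb{R}^d$ and is \emph{not} contained in the ball $\|\mu_{gap,n}\|^2\le\theta\sqrt{nd}\,\sigma^2$. Hence your sentence ``at least one element of the support of $\pi$ has power no larger than this average'' does not by itself produce an alternative obeying the radius constraint---the element you extract could have arbitrarily large norm. To repair this you would need an additional truncation or concentration step showing that $\pi$ places mass $1-o(1)$ inside the constraint set and that the excess can be discarded without spoiling the TV bound; this is possible but it is precisely the ``quantitative bookkeeping'' you flag as the main obstacle, and it is not free. The Rademacher construction sidesteps this entirely, which is why the paper (and Baraud) prefer it.

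A secondary point: your chi-square identity $(1-n^2\tau^4/\sigma^4)^{-d/2}$ is correct for the Gaussian mixture, but equating it to $\exp(\theta^2/2)$ only holds approximately (for small $\theta^2/d$), whereas the Rademacher route gives the exact inequality $\cosh(\lambda^2)^{N_n}\le\exp(\lambda^4 N_n/2)$ and hence the stated constant without approximation.
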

See proof in Section \ref{Proof:lower_bound}. Therefore, $\theta(\alpha_{\mu,n}, \beta) \sqrt{nd} \sigma^2$ is the minimum radius $\rho$ with the prescribed error rate of $\alpha$ and $\beta$. The minimum radius $\rho$ is of order $\sqrt{nd}$, which is consistent with the results from \cite{enikeeva2019high} and \cite{liu2021minimax}. The threshold derived for $\| \mu_{gap,n}\|^2$ in Theorem \ref{Theo:Delta_mu}, $\Delta_\mu(n)$ is greater than the lower bound of the minimum radius, so Proposition \ref{Prop:lower_bound} holds. The pooled test based on $\mathbb{T}_\mu$ has power greater than $1-\beta$ over a class of window length $\mathfrak{N}$. Thus, the test of mean change is powerful. Similarly, we can confirm that the test of change of variance is powerful.
\subsection{Extending the power of test to unknown distributions}
For GSR test statistics constructed from data of various graph types and unknown distributions, the study of quadratic forms in Section \ref{Sec:nonGaussian} shows that, under a mild moment constraint, the tail distribution of the GSR statistics with unknown distributions is approximately Fisher distributed.

By Corollary \ref{corollary:F-approx}, when $Y_1, \ldots, Y_n \in \mathbb{R}{^d}$ satisfy the \hyperlink{SubGaussian}{sub-Gaussian condition}, $d \gg 1$, and $d^2 \ll n$, then by contraction, the ratio of the quadratic spanning distances approximates that of the Gaussian case. Consequently, its tail behavior also closely approximates the Gaussian case.

This shows that, with a mild norm constraint, the tail distribution of the GSR test statistic mimics that of an F-distributed random variable. Therefore, the power of the test can be guaranteed in a similar manner.
\section{Experimental analysis}\label{Sec:Simu}
\subsection{Comparison of detection power with various graph structure}  
First, we examine the detection powers between different graphs types: MST and complete graph. Our second step is to test if our proposed method has improved detection power over other methodology. Therefore, we compared with aforementioned graph-based method in Section \ref{Sec:Intro}. 
To quantify the detection power of our proposed method, we consider the scenarios that the observation follow certain parametric distribution. We generate 100 samples for detection power comparison. Each sample is consist of $n$ simulated i.i.d observations, n is even. It follows $d$ dimensional standard normal distribution $Y_i \overset{i.i.d}{\sim} \mathcal{N}(0,I_d), i =1,\ldots, n$. For $i=n+1,\ldots,2n$, with equal probability, $Y_i$ follows $\mathcal{N}(0,I_d)$ or $\mathcal{N}(\Delta, \Sigma)$ distribution.
\begin{figure}[ht]
\centering
\setlength{\tabcolsep}{0pt} 

\begin{tabular}{ccc}
  \makebox[0.15\textwidth][c]{\textbf{GSR$_{CG}$}} &
  \makebox[0.25\textwidth][c]{\textbf{GSR$_{MST}$}} &
  \makebox[0.15\textwidth][c]{\textbf{GSR$_{NNG}$}} \\[0.5ex]

  \multicolumn{3}{c}{
    \includegraphics[width=0.55\textwidth]{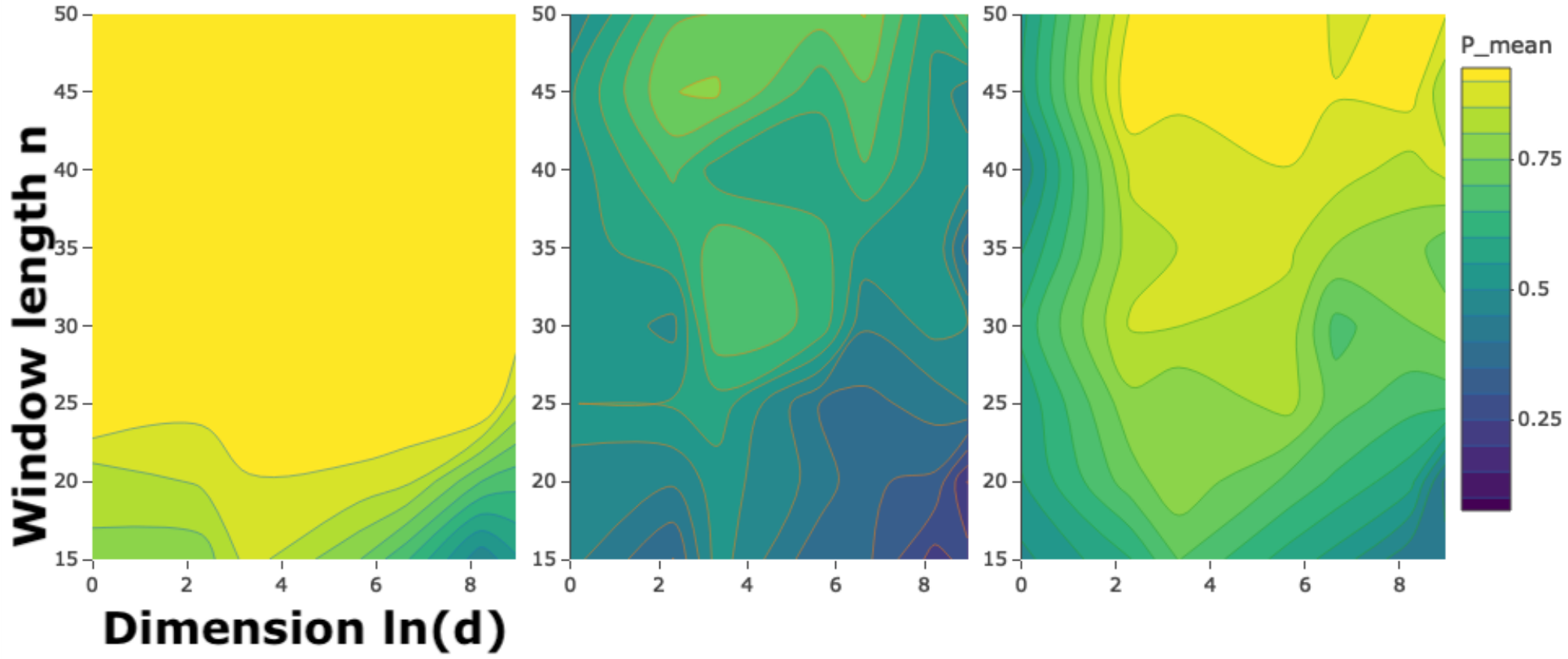}
  } \\[2ex]

  \makebox[0.15\textwidth][c]{\textbf{GSR$_{CG}$}} &
  \makebox[0.25\textwidth][c]{\textbf{GSR$_{MST}$}} &
  \makebox[0.15\textwidth][c]{\textbf{GSR$_{NNG}$}} \\[0.5ex]

  \multicolumn{3}{c}{
    \includegraphics[width=0.55\textwidth]{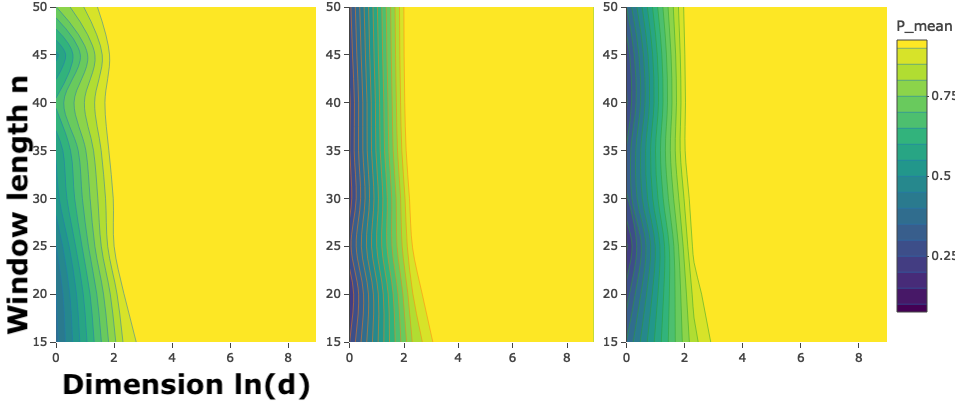}
  }
\end{tabular}
\caption{
Detection power $P_{\text{mean}}$ for a mean change $\Delta = 1/\sqrt[3]{d}$ (top row)
and a variance change $\Sigma = 2 I_d$ (bottom row), as a function of dimension and
window length. Columns correspond to GSR$_{CG}$, GSR$_{MST}$, and GSR$_{NNG}$.
}\label{fig:Contour_Static}
\end{figure}

\begin{table}[ht] 
\centering{
  \caption{Detection power, P\_mean, for mean change of $\Delta = 1/ \sqrt[3]{d}$. Comparison of GLR, $T^2$,  GEC, Kernel, $GSR_{CG}$ with respect to dimension (d) and window length (n), with significance level at 2.5$\%$.}\label{Tb:OtherMethod_SymOffMean}
\begin{tabular}{ccrrrrr}
& & $\Delta = $  & $ 1/ \sqrt[3]{d}$   &    &    &      \\

 \toprule 
  &d              & 1   &10     & 50   &  100  & 500 \\
 \toprule 
$GLR$ & n = 35   & 0.94 & 0.67  & - & - & - \\
     & n = 50    & 0.97 & 0.92  & - & - & - \\
\cmidrule(r){1-7} 
$T^2$ & n = 35   & 0.99 & 0.99  & 0.49 & - & - \\
     & n = 50    & 0.99 & 0.99  & 0.96 & - & - \\
\cmidrule(r){1-7} 
$GEC$ & n = 35   & 0.42 & 0.55  & 0.61 & 0.31 & 0.44 \\
     & n = 50    & 0.47 & 0.80  & 0.59 & 0.58 & 0.43 \\
\cmidrule(r){1-7} 
$Kernel$ & n = 35   & 0.32 & 0.78  & 0.84 & 0.73 & 0.71 \\
     & n = 50    & 0.45 & 0.91  & 0.99 & 0.98 & 0.98 \\
\cmidrule(r){1-7} 
$GSR_{CG}$ & n = 35   & 0.99 & 0.98  & 0.99 & 0.98   & 0.98 \\
     & n = 50    & 0.99 & 0.99  & 0.99 & 0.98 & 0.98 \\
\bottomrule
\end{tabular}
}
\end{table}

\begin{table}[ht] 
\centering{
  \caption{Detection power, P\_mean, for variance change of $\Sigma = 2 I_d$. Comparison of GLR, $T^2$, GEC, Kernel, $GSR_{CG}$ with respect to dimension (d) and window length (n), with significance level at 2.5$\%$.}}\label{Tb:OtherMethod_SymOffVar}
\begin{tabular}{ccrrrrr}
& & $\Sigma=$    &  $2 I_d$   &    &    &    \\

 \toprule 
  &d              & 1   &10     & 50   &  100  & 500 \\
 \toprule 
$GLR$ & n = 35   & 0.49   &0.61    & -   &  -  & - \\
     & n = 50    &  0.67   &0.88     & -   &  -  & -\\
\cmidrule(r){1-7}
$T^2$ & n = 35   & 0.09   &0.13     & 0.14   &  -  & - \\
     & n = 50    & 0.00   &0.10     & 0.17   &  -  & -\\
\cmidrule(r){1-7} 
$GEC$ & n = 35   & 0.19  &0.44     & 0.11   &  0.00  & 0.00\\
     & n = 50    & 0.25   &0.48     & 0.17   &  0.00  & 0.00\\
\cmidrule(r){1-7}
$Kernel$ & n = 35   & 0.00  &0.10     & 0.00   &  0.00  & 0.19\\
     & n = 50    & 0.00   &0.11     & 0.25   &  0.57  & 0.99\\
\cmidrule(r){1-7}
$GSR_{CG}$ & n = 35   & 0.65  & 0.98   &  0.98  & 0.99      & 0.98\\
     & n = 50    & 0.68   &0.97     & 0.97   &  0.99  & 0.98\\
\bottomrule
\end{tabular}

\end{table}
We denote our proposed complete graph-spanning-ratio methodology as GSR, in comparison to general likelihood ratio (GLR), Hotelling's $T^2$ ($T^2$), the in-between-group edge counting (GEC) methods mentioned in Section \ref{Sec:Intro}, and the kernel method. For the kernel method, we utilized the function \texttt{kcpa} from the \texttt{ecp} R package. Since the kernel function does not directly have Type I error rate ($\alpha$) as input, we fine-tuned its parameters using Monte Carlo simulations to ensure an equivalent Type I error rate $\alpha$ for a fair comparison. The \texttt{kcpa} function is based on the kernel CPD algorithm developed by \cite{arlot2019kernel}.

Building on the results from the previous section, we employ a complete graph within our GSR framework to compare the proposed method with other approaches.

We consider both accuracy and sensitivity as a general way of comparing detection power \cite{aminikhanghahi2017survey}. Detection accuracy is defined as how often the detection algorithm make the right decision, that is, to identify change-point when there in reality a true change-point, and identify no change-point when there is true non-change-point. We denote $TP$ as true positive, $FN$ as false negative, and so on. Then we define accuracy$=\frac{TP+TN}{TP+TN+FP+FN}$. We denote FPR $= \frac{FP}{FP+TN}$ as the false positive rate which is rate of giving a false alarm when no change-point present.

For detection sensitivity, we concern about the success rate of identify a change-point when there indeed true change-point exist. Therefore, sensitivity $= \frac{TP}{TP+FN}$. To consider the detection power with both the accuracy and the sensitivity of the detection methods, here we define a power metric as the geometric mean of the accuracy and sensitivity, P\_mean $= \sqrt{accuracy \times sensitivity}$

Each sample contains either with or without change-point in the middle point of sample. Table \ref{Tb:OtherMethod_SymOffMean} shows the comparison of detection power. We can see that GLR and $T^2$ shows good detection power for change of mean but limited to low dimensions. The detection power are higher for GSR compared to GEC method, across all dimension and window length. The kernel method achieves high detection power for larger window lengths, but demonstrates relatively lower power for smaller window lengths. The kernel method requires careful parameter tuning to achieve the desired FPR, which can be computationally intensive as it involves solving an optimization problem. GSR method shows generally good detection power for the detection of mean and variance change. In particular, with small window length under high-dimensional scenarios. This make our proposed algorithm more ideal for further online detection, where a timely detection of change-point is important. Our method can be generalized to distributions other than Gaussian and the result is shown in Appendix \ref{Append: numerical}.    
\subsection{Change of graph structure}
For non-Euclidean graphs such as social networks and power grids, one plausible application is detecting changes in their graph data structures, such as connectivity or power usage. Denote $Y = [y_1, \ldots, y_d]^T$ as the connectivity for graph $V[d]$, where $d$ is the number of nodes. The problem is equivalent to detecting changes in the distribution of $Y$. A simulated CPD experiment demonstrates that the algorithm successfully detects changes in connectivity. As mentioned in the Introduction chapter, the social network can be modeled using an Erd\H{o}s-R'enyi random graph (ER), $ER_n(\lambda /n)$. We generate a graph of $30$ nodes with connectivity probability $p = \lambda/n = 1/2$ and change it to a connectivity probability of $p = 1/3$, as shown in Figure \ref{fig:GraphStructure3}.

Table \ref{Tb:connectivity} shows the detection power with respect to changes in connectivity for a graph with 30 nodes and an observation window of 30. As the connectivity changes from a probability of $1/2$, the detection remains high across all graph types when the change $\Delta p$ is relatively large. However, when the change is smaller than the inverse of the number of edge nodes, the detection power deteriorates significantly for NNG and MST. This reduction in performance may be attributed to fewer edges (less information) being available and the presence of random errors.
 \begin{figure}[ht] 
  \centering
  \hspace{30pt} \textbf{$p =1/2 $} \hspace{60pt} \textbf{$\Delta p = 1/6 $}\\
   \includegraphics[width=.45\textwidth]
   {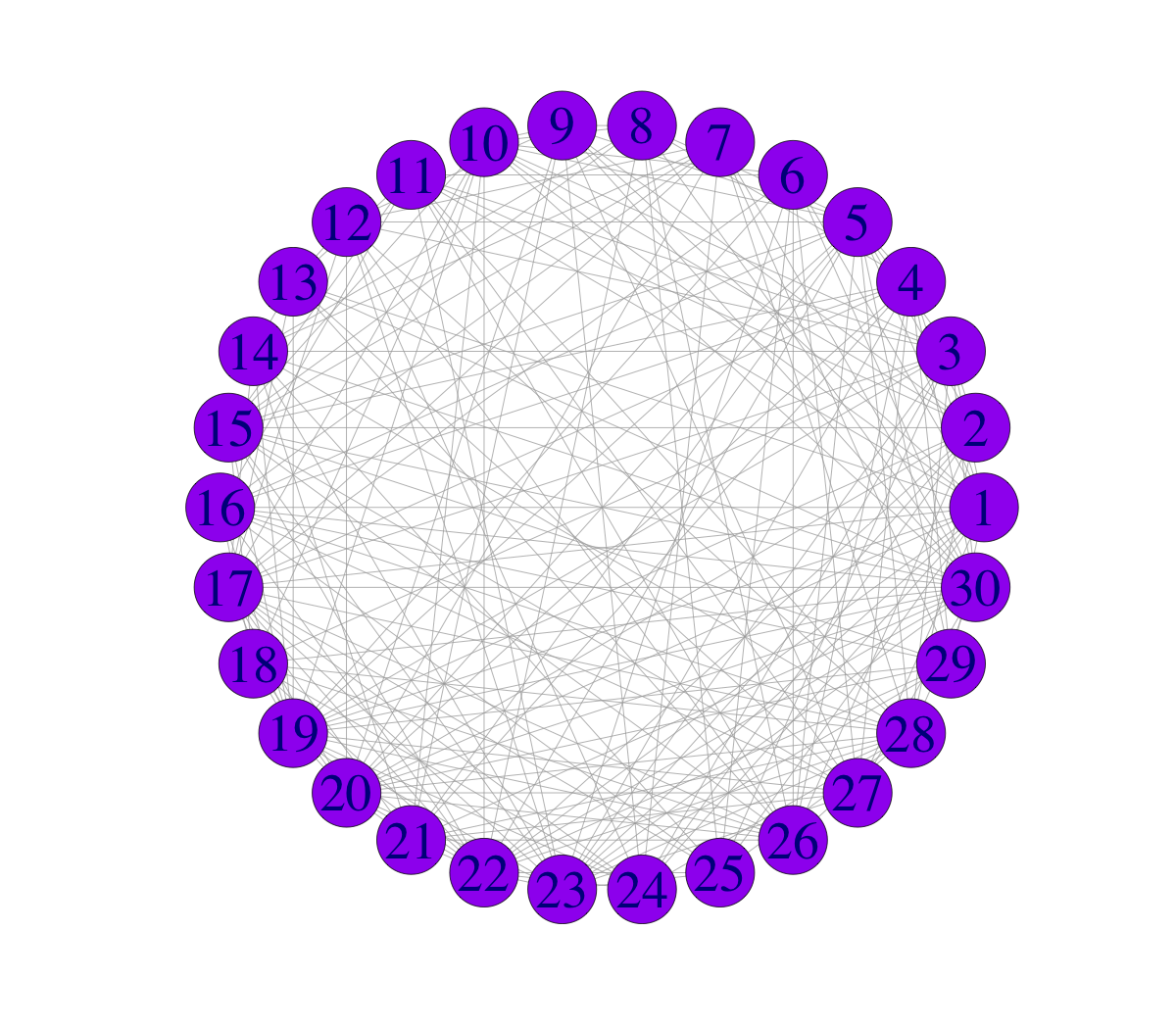}
   \includegraphics[width=.45\textwidth]
   {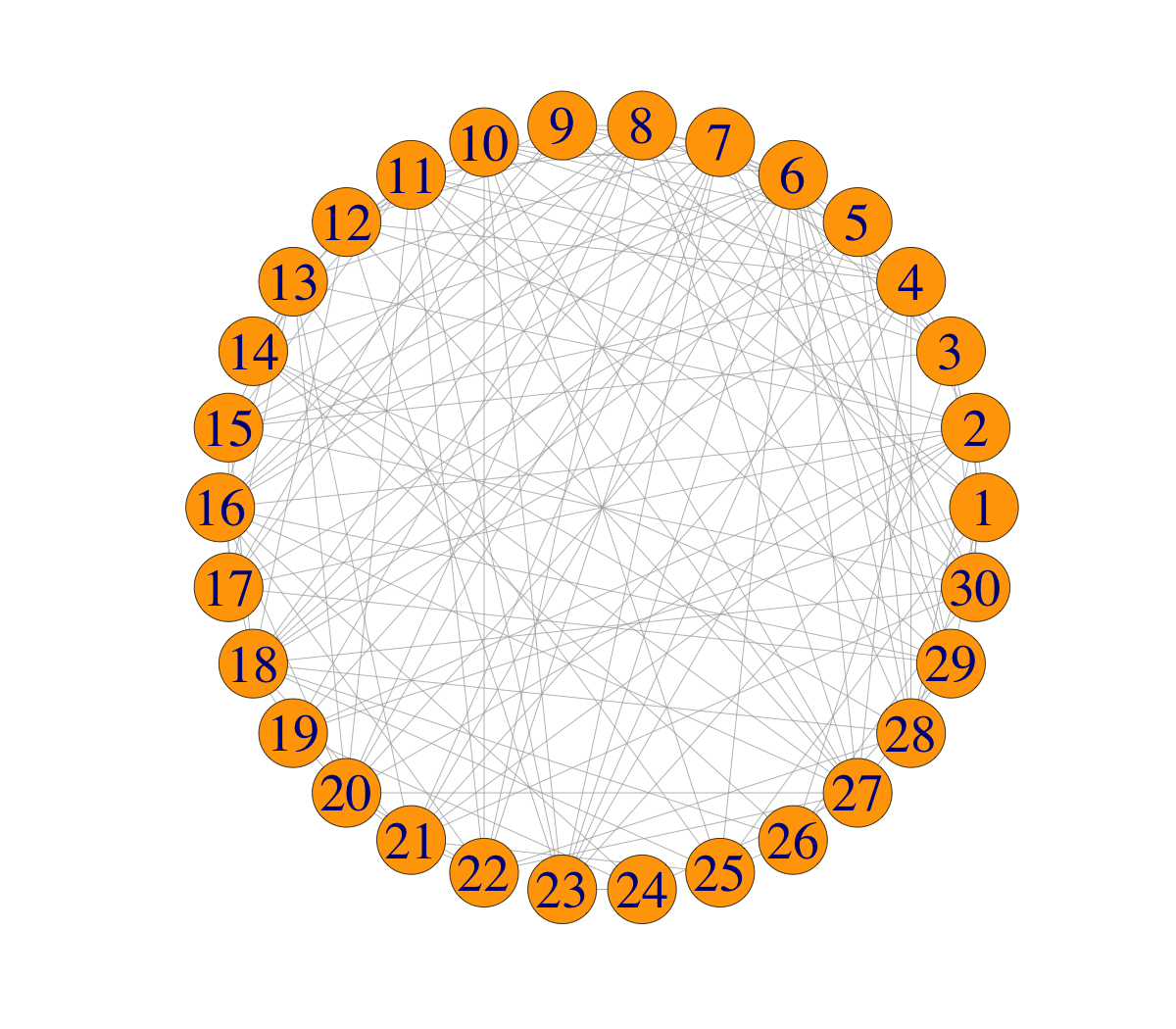}
   \setlength{\belowcaptionskip}{0pt}
  \caption{Change in graph connectivity from $p=1/2$ to $p=1/3$ ($\Delta p =1/6$). Purple nodes represent the graph data before change point, while oranges nodes represent the graph data after change point.}
 \label{fig:GraphStructure3}
\end{figure}\\

\begin{table}[ht] 
\centering
  \caption{Detection power for connectivity changes in an Erd\H{o}s-R'enyi graph with $d=30$ nodes and an observation window length of $n=30$, based on 1000 detection tests. The threshold is estimated using the permutation procedure with a significance level of $2.5\%$.}\label{Tb:connectivity}
\begin{tabular}{ccrrr}
 \toprule 
  &d =30    &   CG   &  MST   & NNG   \\
 \toprule 
$\Delta p =1/6 $ &P\_mean  & 0.995 & 0.995 & 0.997\\
& FPR    & 0.02 & 0.02 & 0.01\\
\bottomrule
$\Delta p= 1/12 $ &P\_mean  & 0.991& 0.992 &0.998 \\
& FPR    &0.03& 0.03 & 0.01\\
\bottomrule
$\Delta p= 1/24 $ &P\_mean  &0.994 & 0.900& 0.904\\
& FPR    &0.02& 0.02 & 0.01\\
\bottomrule
$\Delta p= 1/48 $ &P\_mean  &0.74 &0.33 &0.26 \\
& FPR    &0.02&0.02  &0.01 \\
\bottomrule
\end{tabular}

\end{table}
\paragraph{\textbf{Detection of changes in graph types}.}We illustrate the detection of changes in graph types through examples of structural changes, including transitions from MST to CG, CG to NNG, and MST to NNG, as shown in Figure \ref{fig:GraphStructure}. In all of these examples, the test statistics exceed their respective thresholds, indicating the presence of a change point.
\begin{figure}[ht] 
  
  \hspace{5pt} \textbf{MST-CG} \hspace{100pt} \textbf{MST-NNG} \hspace{80pt} \textbf{NNG-CG}\\
  \includegraphics[width=.30\textwidth]
   {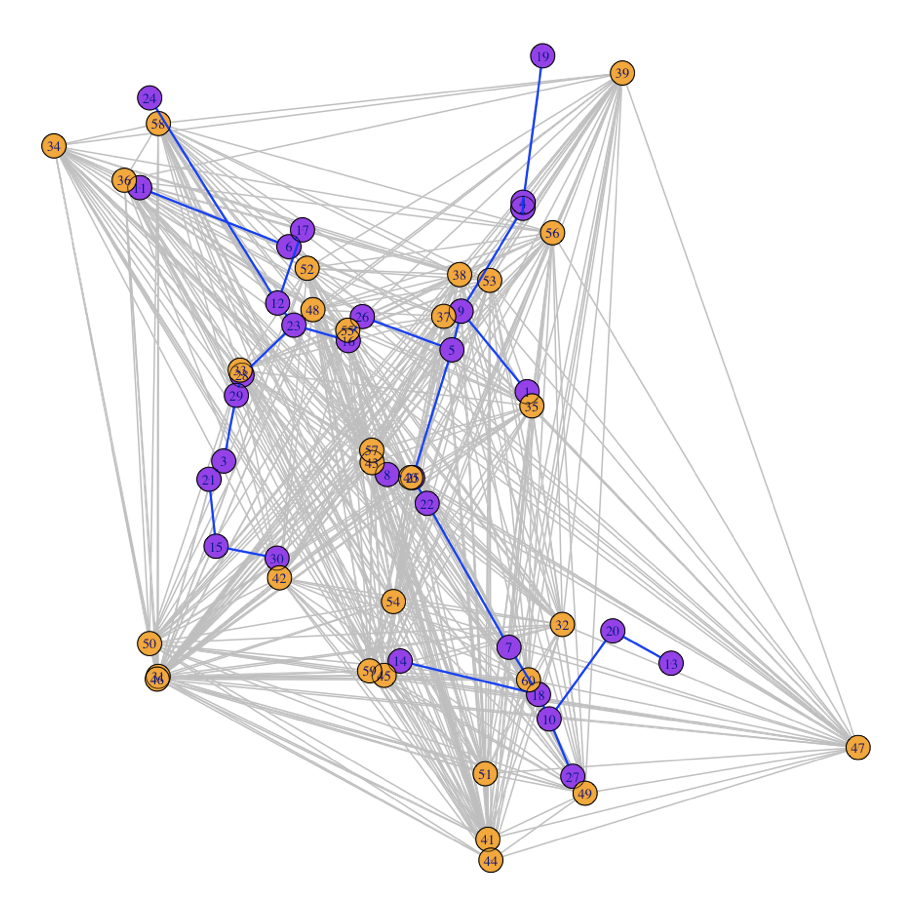}
   \includegraphics[width=.30\textwidth]
   {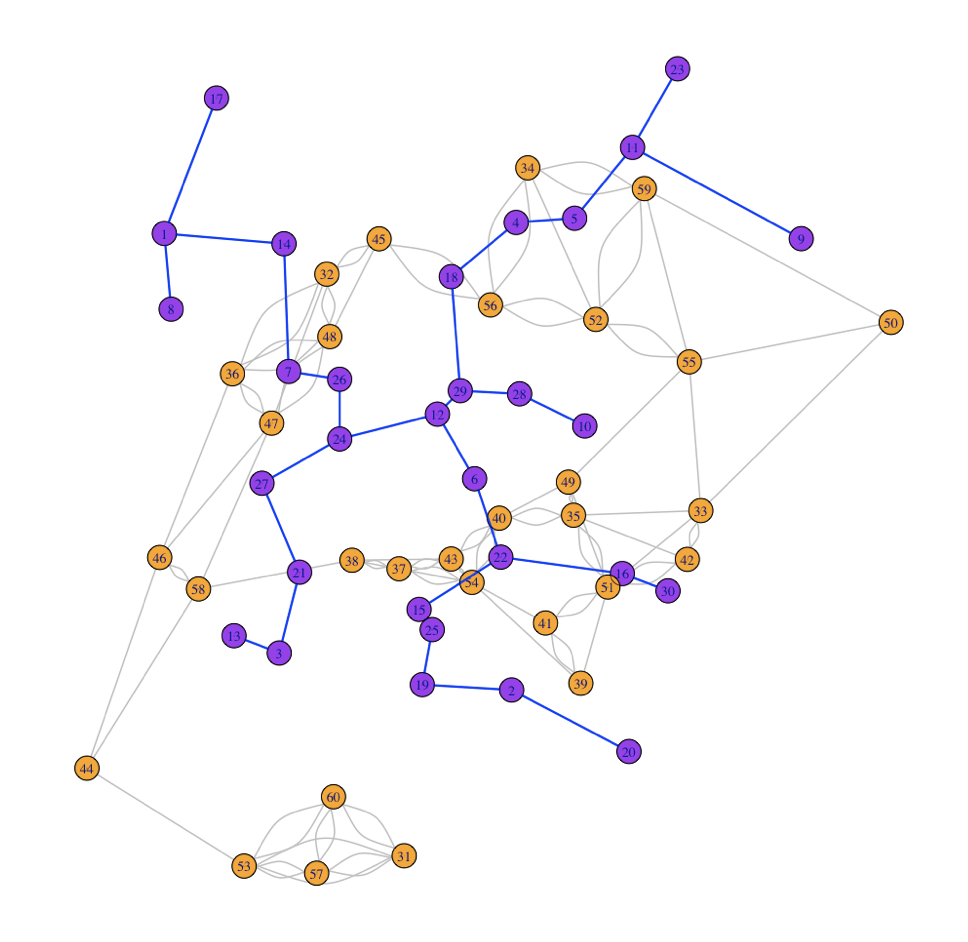}
   \includegraphics[width=.30\textwidth]
   {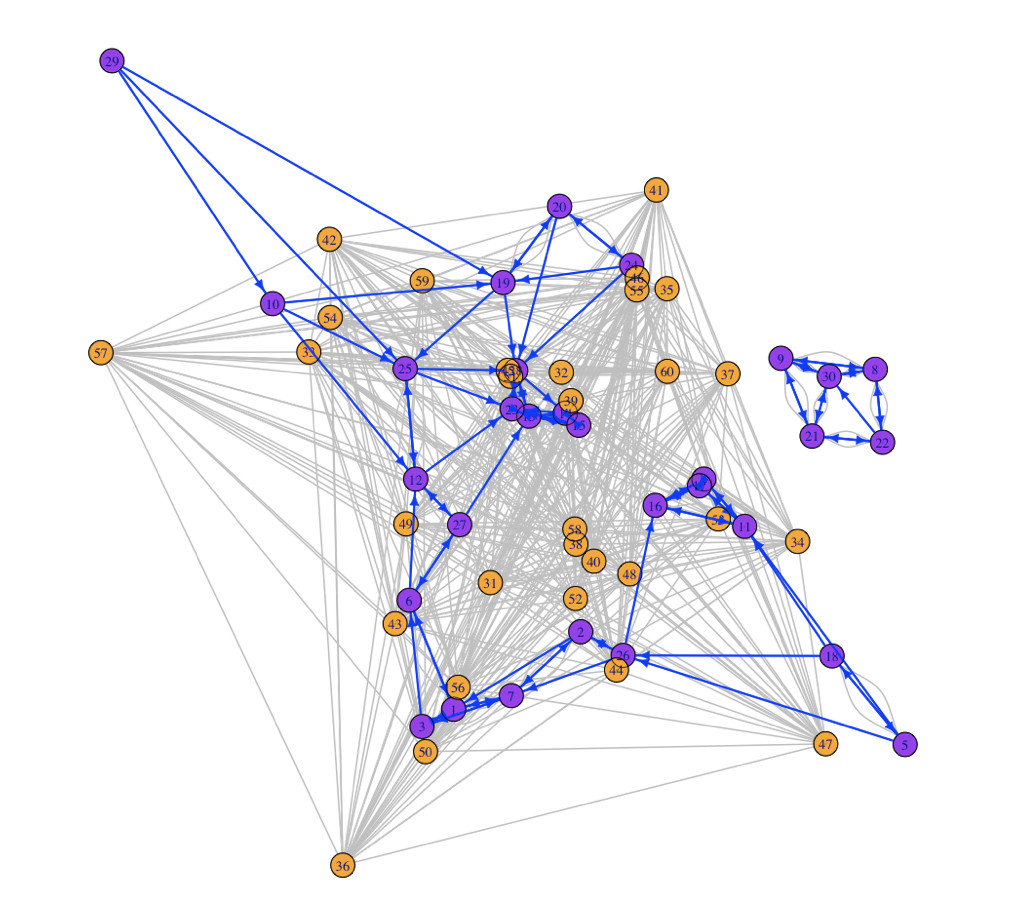}
   \setlength{\belowcaptionskip}{0pt}
  \caption{Detecting changes in graph types: Purple nodes and blue edges represent the graph data before the change point, while orange nodes and gray edges represent the graph data after the change point. The graph structure changes from MST to CG (left), MST to NNG (middle), and NNG to CG (right).}
 \label{fig:GraphStructure}
\end{figure}

\subsection{Change points in S\&P 500 stocks}
The proposed change point detection framework was applied to real-world data from S\&P 500 stocks. Using the online algorithm, we analyzed the closing daily stock prices of companies listed in the S\&P 500 from January 2014 to January 2016. The data were log-returns of stock prices, with approximately 253 trading days per year. In financial markets, changes are typically reported on a month-to-month or quarter-to-quarter basis. To capture quarterly variations, we set the window length at $n = 32$. By adjusting the significance level $\alpha$, the false alarm rate can be controlled.
In Figure \ref{fig:SP500}, mean changes were detected in August 2015, corresponding to the three-day market drop of 7.7\% in the DJIA. This event was reportedly linked to the Greek debt crisis in June 2015 and the Chinese stock market turbulence in July. In early 2016, several mean changes were detected, coinciding with a sharp rise in bond yields during that period.
The variance change analysis in Figure \ref{fig:SP500} revealed that market volatility fluctuated more frequently compared to mean changes. In practice, changes in variance serve as critical risk indicators for market instability.
\begin{figure}[H]
  \centering
  \includegraphics[width=.9\textwidth]{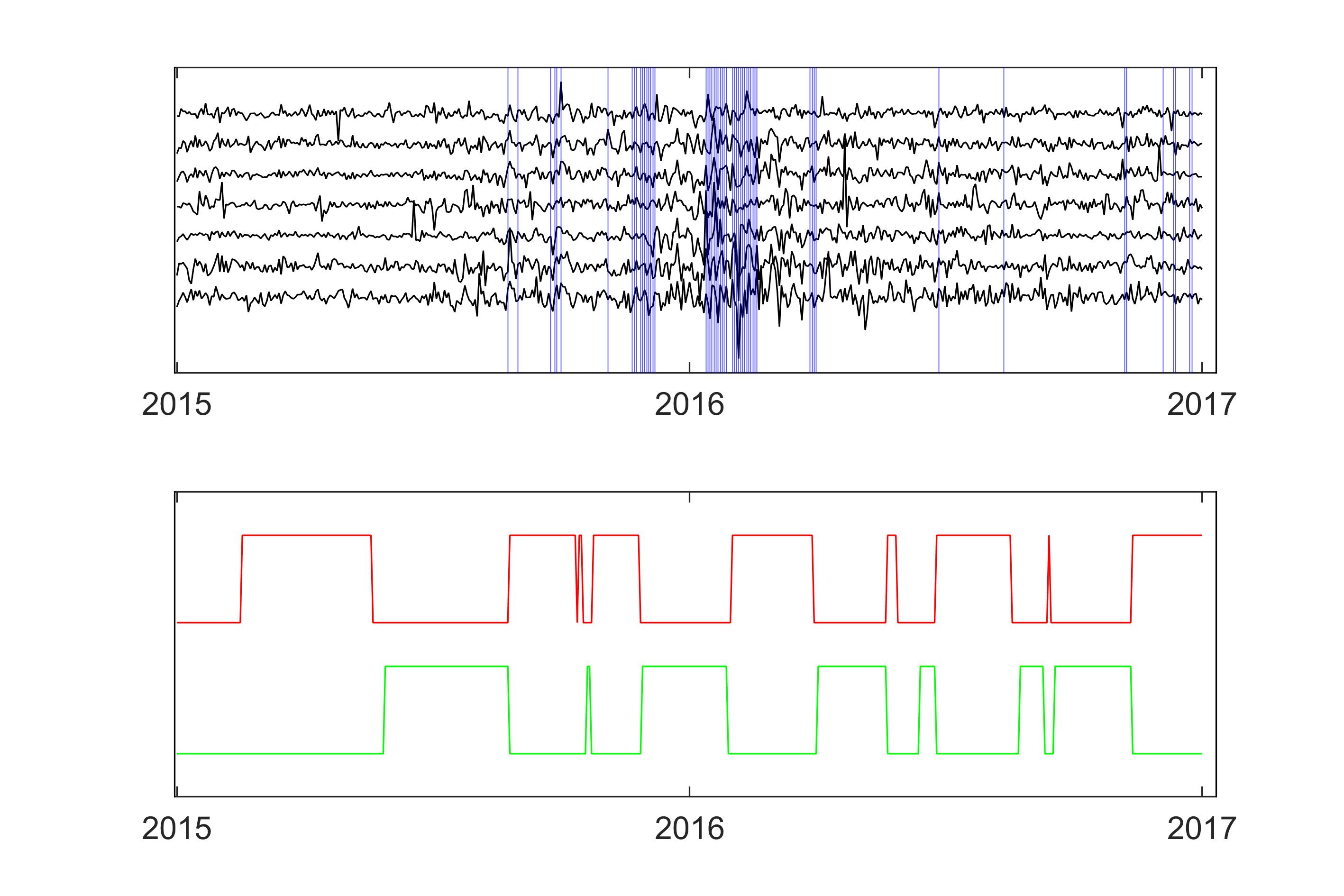}\setlength{\belowcaptionskip}{-5pt}
  \caption{Online detection was performed on the daily closing prices of S\&P 500 stocks from 2015 to 2017, using a window length of $n = 32$. For demonstration purposes, the plot showcases 7 out of the 500 stocks. In the upper figure, the blue lines indicate the detected mean changes. In the lower figure, the red and green lines represent increases and decreases in variance, respectively.}
\label{fig:SP500}
\end{figure}
\section*{Conclusion}
We proposed GSRCPD for low to high-dimensional data. Comparing to a recent literature, numerical studies show that the method has desirable power with small and multiple scanning windows, which enables online timely detection of change-point. The framework's versatility enables its application to both vector-form and graph-structured data, supporting the detection of structural and topological changes in graphs. Moreover, the method is fully adaptive and data-driven, making it an effective tool for identifying inhomogeneity in both online and offline data settings.  We conclude with application to real S\&P500 data from financial industry to make statistical inferences about mean and variance changes in 500 stocks.

\begin{appendix}

\section{Experiment}
\subsection{Application to non-Gaussian data}\label{Append: numerical} 
In this section, we demonstrate that our proposed approach (GSR) can be extended to handle data that are not Gaussian-distributed. Tables \ref{tb:Contour_StaticMean} and \ref{tb:Contour_StaticVar} compare the detection power of various methods when the data distribution does not belong to the exponential family. The comparison includes Hotelling's $T^2$ ($T^2$), the generalized likelihood ratio (GLR), and the in-between-graph edge-counting method (GEC).

We consider scenarios where the observations follow a uniform distribution. To compare detection power, we generate 100 samples, each repeated 100 times. Each sample consists of $2n$ simulated i.i.d. observations. For a change point involving a mean shift, we set up the following scenario: with equal probability, the observations are $d$-dimensional uniformly distributed as $Y_i \overset{i.i.d}{\sim} \mathbf{U_d}(0,1), i = 1, \dots, 2n$, or they are generated from the following distribution:
\begin{equation} 
	 Y_i \overset{i.i.d}{\sim} \left\{  \begin{array}{ll}  \mathbf{U_d}(0,1), \quad i =1,\dots, n; \\ 

          \mathbf{U_d}(1/ \sqrt[3]{d},1+1/ \sqrt[3]{d})^d, \quad i =n+1,\dots, 2n. \end{array} \right.
\end{equation}\label{eq:simu_offlineU}

For a change of variance, we have 
\begin{equation} 
	 Y_i \overset{i.i.d}{\sim} \left\{  \begin{array}{ll}  \mathbf{U_d}(0,1), \quad i =1,\dots, n; \\ 

          2 \mathbf{U_d}(-1/4,3/4), \quad i =n+1,\dots, 2n. \end{array} \right.
\end{equation}\label{eq:simu_offline2}

Compared to other methods, our approach demonstrates superior detection power for both mean and variance changes, particularly in high-dimensional settings where the detection power of most other methods significantly diminishes (approaching zero).

\begin{table}[ht] 
\centering{
  \caption{Detection power $P\_mean^2$ for mean change with a significance level around 2.5$\%$.}\label{tb:Contour_StaticMean}
\begin{tabular}{ccrrrrr}

 \toprule 
  &d              & 1   &10     & 50   &  100  & 500 \\
 \toprule 
$GLR$ & n = 35   & 0.98$\pm$0.01 & 0.89$\pm$0.05  & - & - & -  \\
     & n = 50    & 0.99$\pm$0.01 & 0.98$\pm$0.01  & - & - & - \\
\cmidrule(r){1-7} 
$T^2$ & n = 35   & 0.99$\pm$0.01 & 0.99$\pm$0.01  & 0.08$\pm$0.03 & - & -  \\
     & n = 50    & 0.99$\pm$0.01 & 0.99$\pm$0.01  & 0.36$\pm$0.08 & - & - \\
\cmidrule(r){1-7}
$GEC$ & n = 35   & 0.98$\pm$0.01 & 0.94$\pm$0.04  & 0.07$\pm$0.03 & 0.02$\pm$0.02 & 0.01$\pm$0.01 \\
     & n = 50    & 0.97$\pm$0.02 & 0.99$\pm$0.01  & 0.08$\pm$0.03 & 0.03$\pm$0.02 & 0.01$\pm$0.01\\
\cmidrule(r){1-7} 
$Kernel$ & n = 35   & 0.99$\pm$0.01 & 0.98$\pm$0.01  & 0.75$\pm$0.07 & 0.54$\pm$0.08 & 0.09$\pm$0.03 \\
     & n = 50    & 0.99$\pm$0.01 & 0.99$\pm$0.01  & 0.94$\pm$0.04 & 0.73$\pm$0.07 & 0.19$\pm$0.05\\
\cmidrule(r){1-7}
$GSR$ & n = 35   & 0.98$\pm$0.01 & 0.99$\pm$0.01  & 0.99$\pm$0.01 & 0.97$\pm$0.02  & 0.52$\pm$0.08 \\
     & n = 50    & 0.99$\pm$0.01 & 0.99$\pm$0.01  & 0.99$\pm$0.01 & 0.99$\pm$0.01 & 0.73$\pm$0.07 \\
\bottomrule
\end{tabular}
}
\end{table}

\begin{table}[ht] 
\centering{
  \caption{Detection power $P\_mean^2$ for variance change with a significance level around 2.5$\%$.}\label{tb:Contour_StaticVar}
\begin{tabular}{ccrrrrr}

 \toprule 
  &d              & 1   &10     & 50   &  100  & 500 \\
 \toprule 
$GLR$ & n = 35   & 0.98$\pm$0.02 & 0.99$\pm$0.01  & - & - & -  \\
     & n = 50    & 0.98$\pm$0.01 & 0.99$\pm$0.01  & - & - & - \\
\cmidrule(r){1-7} 
$T^2$ & n = 35   & 0.01$\pm$0.01 & 0.02$\pm$0.02  & 0.07$\pm$0.03 & - & -  \\
     & n = 50    & 0.01$\pm$0.01 & 0.02$\pm$0.02  & 0.07$\pm$0.03 & - & - \\
\cmidrule(r){1-7} 
$GEC$ & n = 35   & 0.26$\pm$0.07 & 0.72$\pm$0.08  & 0.09$\pm$0.04 & 0.06$\pm$0.03 & 0.04$\pm$0.02 \\
     & n = 50    & 0.46$\pm$0.08 & 0.97$\pm$0.02  & 0.26$\pm$0.06 & 0.18$\pm$0.06 & 0.00$\pm$0.00\\
\cmidrule(r){1-7}
$Kernel$ & n = 35   & 0.01$\pm$0.01 & 0.00$\pm$0.01  & 0.00$\pm$0.00 & 0.00$\pm$0.01 & 0.00$\pm$0.00 \\
     & n = 50    & 0.01$\pm$0.01 & 0.01$\pm$0.01  & 0.01$\pm$0.01 & 0.01$\pm$0.01 & 0.01$\pm$0.01\\
\cmidrule(r){1-7}
$GSR$ & n = 35   & 0.99$\pm$0.01 & 0.99$\pm$0.01  & 0.99$\pm$0.01 & 0.99$\pm$0.01   & 0.99$\pm$0.01 \\
     & n = 50    & 0.99$\pm$0.01 & 0.99$\pm$0.01  & 0.99$\pm$0.01 & 0.99$\pm$0.01 & 0.99$\pm$0.01 \\
\bottomrule
\end{tabular}
}
\end{table}
The GLR method is applicable only when the data dimension satisfies $d < n$, while Hotelling's method $T^2$ requires $d < (2n-1)$ to ensure that the test statistics can be computed.

\section{Distribution of the test-statistic}
In this section, we examine the distribution of the GSR test statistics to determine whether it varies with the distributional change of the observed data ($Y_i$). A sample of $2n$ observations is generated, where the first $n$ observations follow the default distribution, and the second $n$ observations follow either the default distribution (scenario of no change point) or an alternative distribution (scenario of a change in distribution). We compute the test statistic based on the simulated data and compare the distributions of the test statistics from the aforementioned scenarios. Figure~\ref{fig:ChgInDistrs} displays histograms of the test statistics as the changes in mean or variance gradually increase. For graph types CG, MST, and NNG, the histograms show that the distribution changes as the change in mean widens. In addition, distributional changes are observed for changes in variance. Note that in this example, the observations ($Y_i$) follow a Gaussian distribution. 

\begin{figure} 
  \hspace{80pt} \textbf{CG} \hspace{210pt} \textbf{CG} \\
  \includegraphics[width=.50\textwidth]
   {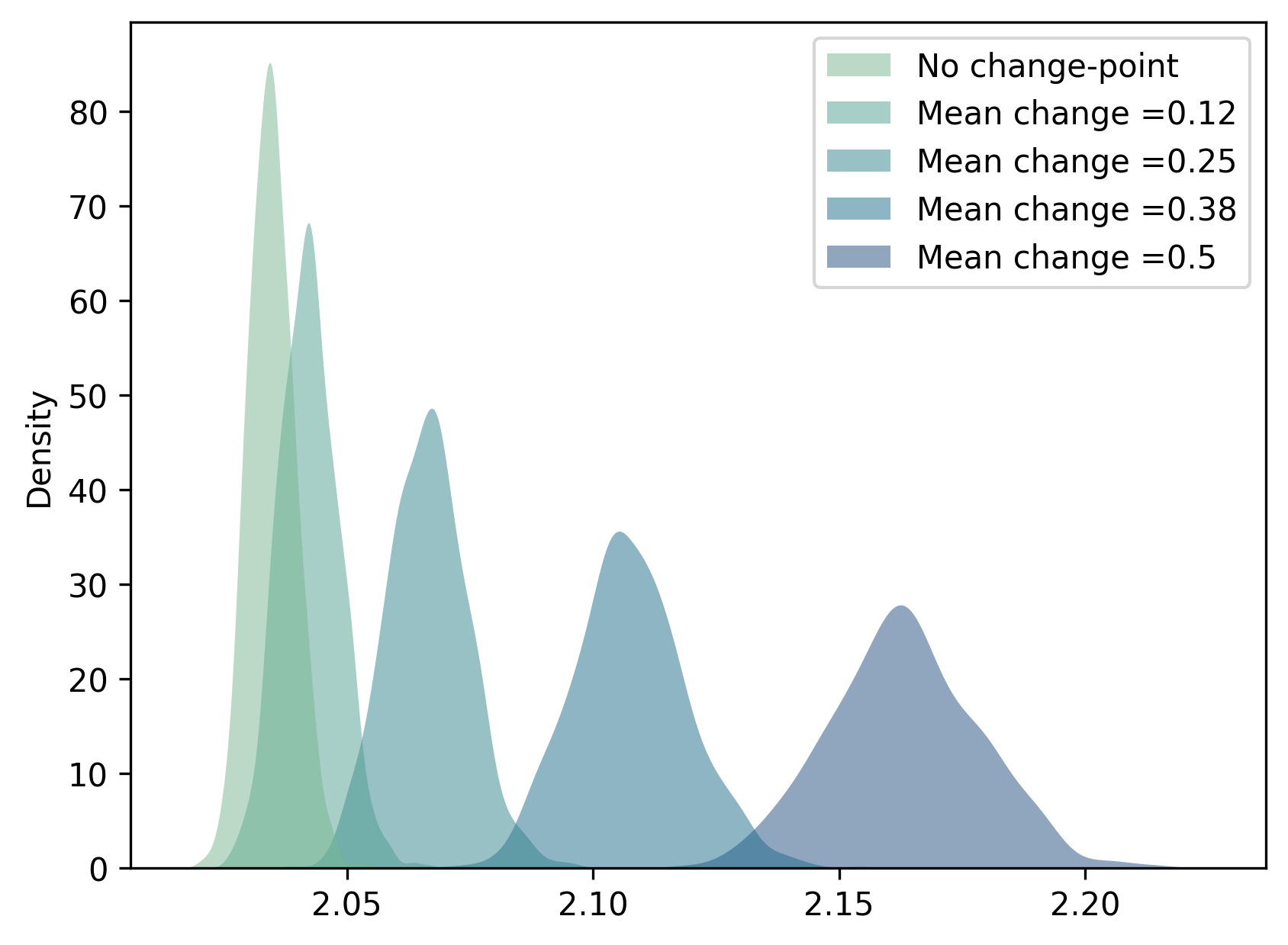}
   \includegraphics[width=.49\textwidth]
   {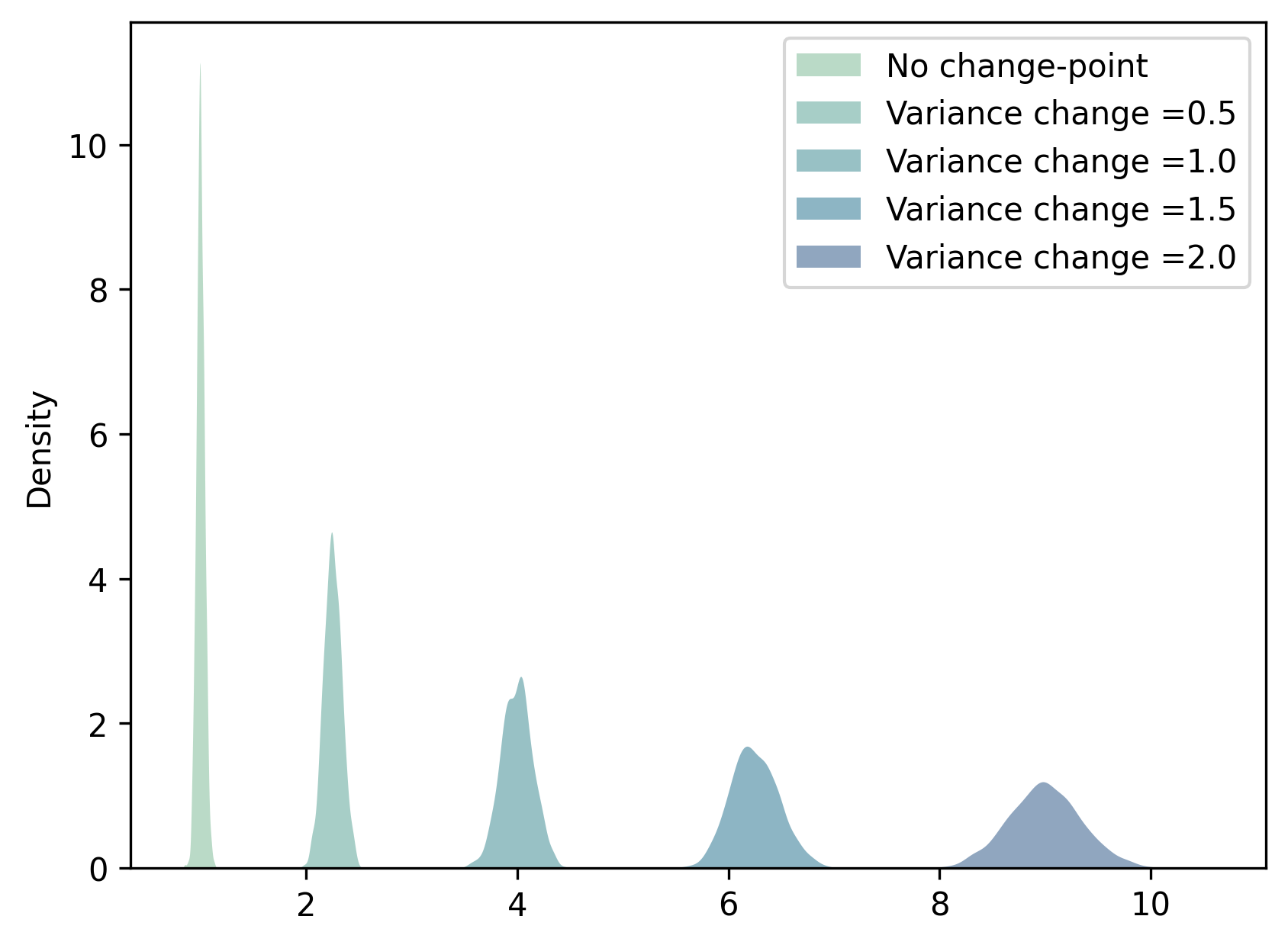}

     \hspace{80pt} \textbf{MST}  \hspace{204pt} 
    \textbf{MST}

   \includegraphics[width=.50\textwidth]
   {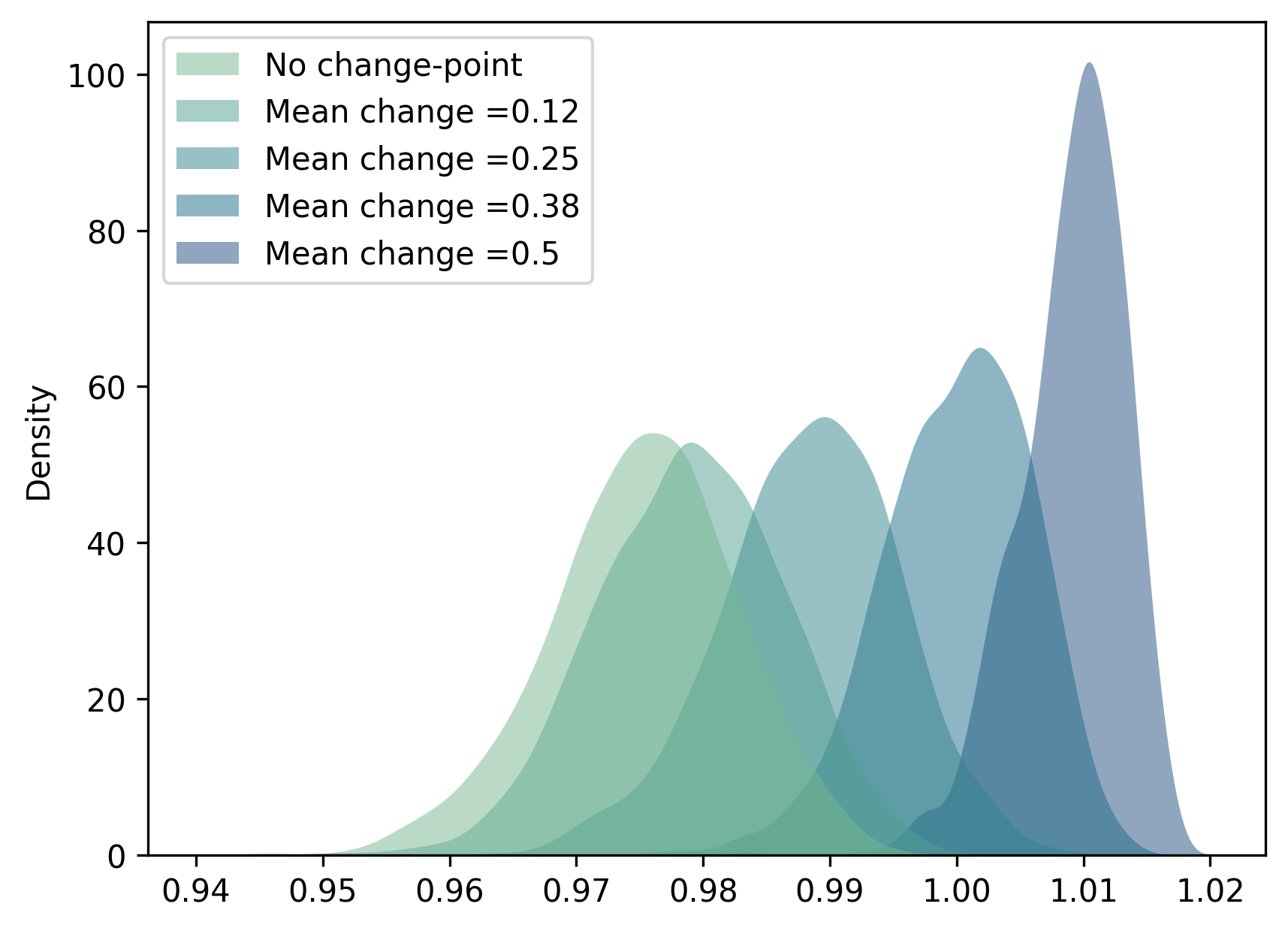}
  \includegraphics[width=.49\textwidth]
     {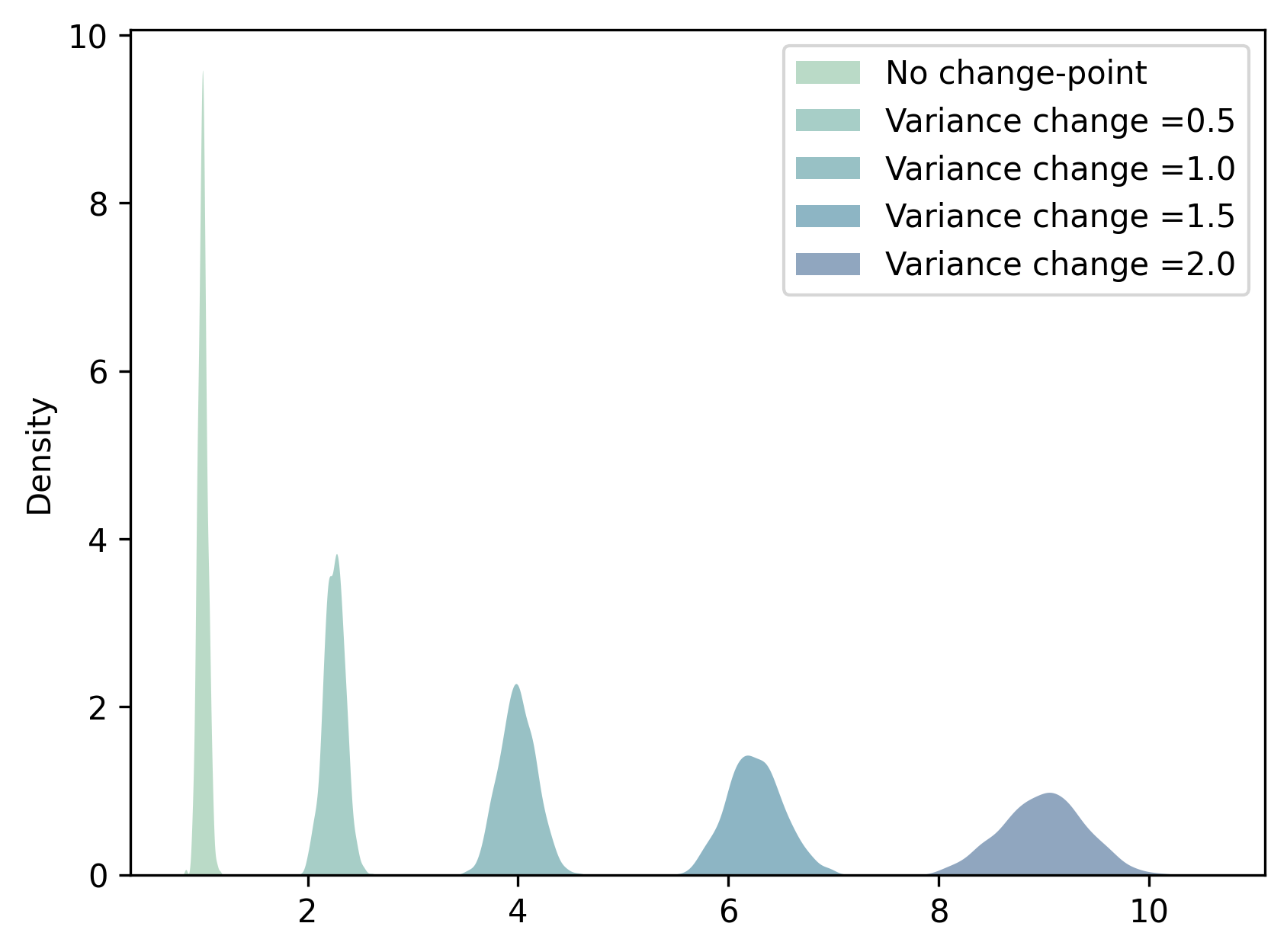}

     \hspace{80pt} \textbf{NNG}  \hspace{204pt} 
    \textbf{NNG}

   \includegraphics[width=.50\textwidth]
   {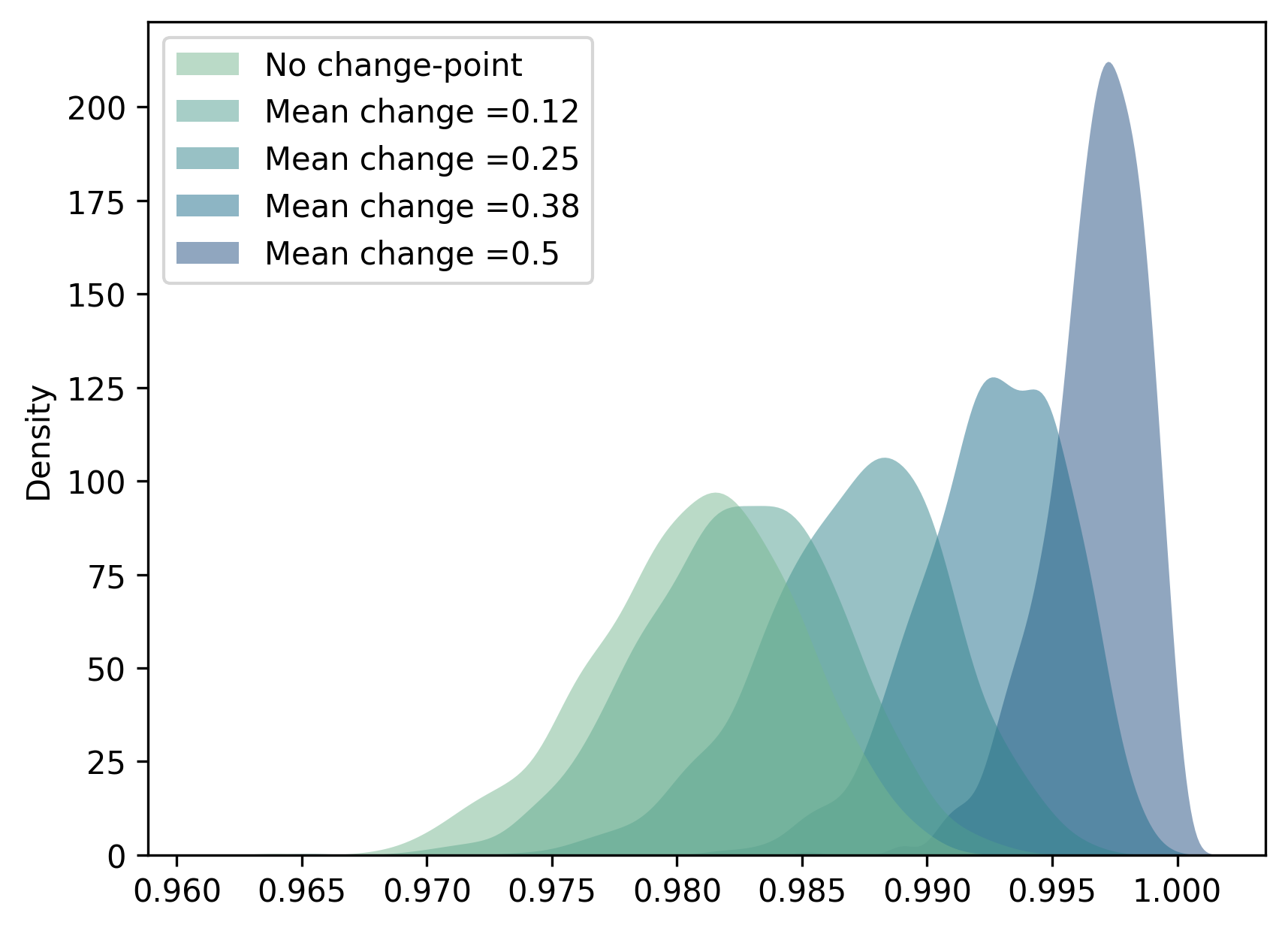}
  \includegraphics[width=.49\textwidth]
     {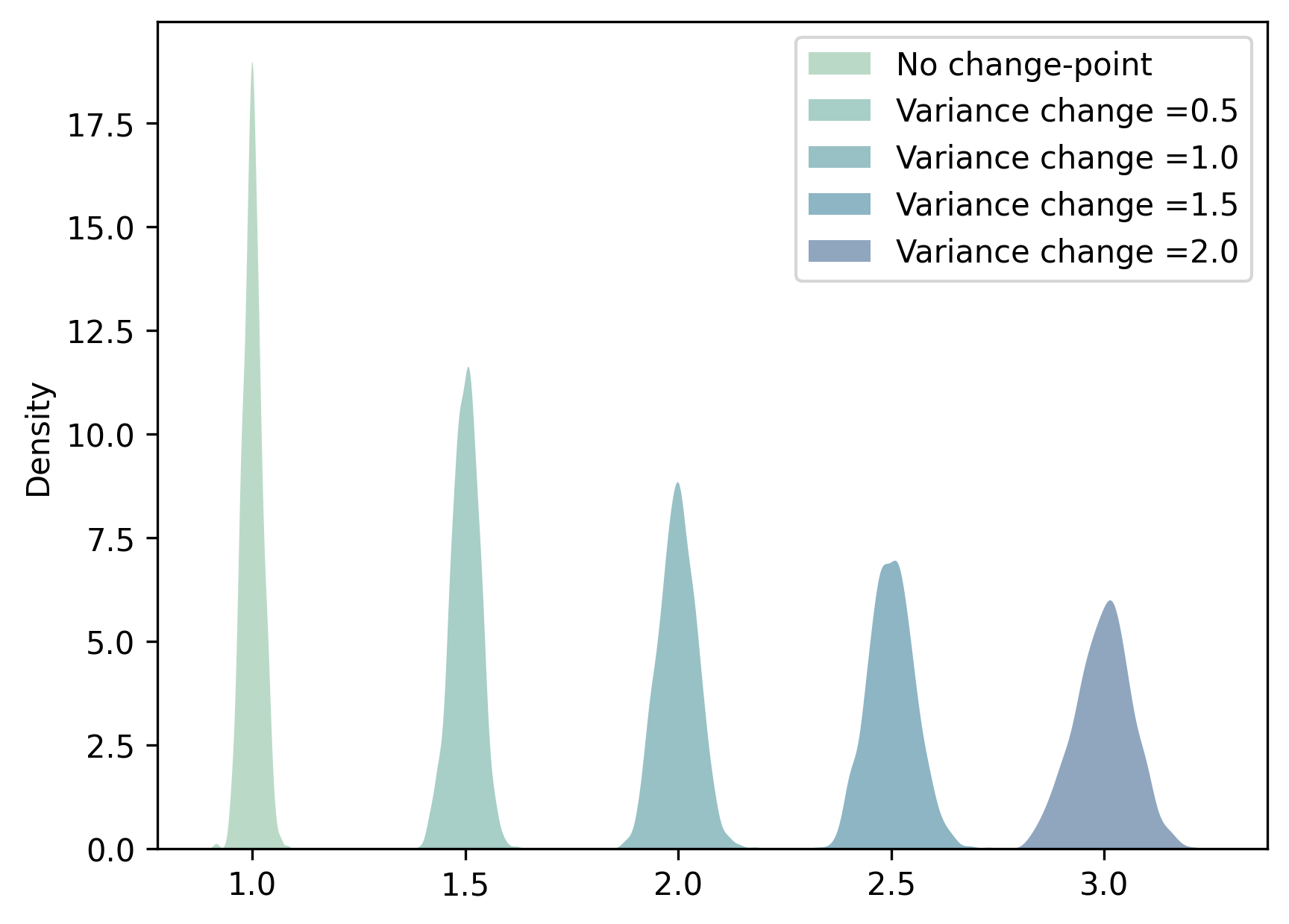}
   \setlength{\belowcaptionskip}{0pt}
  \caption{Distribution shift of the test statistics with respect to the change of mean and variance for various graph types: CG, MST, and NNG. The distribution of the  original observed data follows standard normal distribution with $d=10$, $n=30$.}
\label{fig:ChgInDistrs}
\end{figure}

\section{Algorithm}\label{sec: algo}
We provide the details of the algorithm in this section.
\begin{algorithm}[ht]
   \caption{Critical value by the Bootstrap/Permutation procedure: \\ AsymThreshold($Y_1,\ldots,Y_N, B, n, \alpha$)}\label{algo.onlineRao}
\begin{algorithmic}
   \STATE \textbf{Critical value estimation}: for each window length $n$ and location of the detection $k$, $k < 2n $, we estimate the $\alpha$-level critical value of the CPD test by the Permutation procedure (note: for known distribution, one can apply Monte Carlo simulation instead of Permutation procedure) 
   \STATE STEP 1. Calculate the test statistics
   \STATE {\bfseries Input:} Training data $Y_1,\ldots, Y_N$ of dimension $d$, $N > 2n$, significant level $\alpha$. Let  $t_0=2$, $t_L=2n-2t_0+1$\\
   \FOR{$b=1$ {\bfseries to} $B$}
        \STATE Generate $Y^b_1,\ldots, Y^b_{N}$ by resampling without replacement from $Y_1,\ldots, Y_m$  (Permutation procedure) 
        \FOR{$k=t_0$ {\bfseries to} $N-t_0$}
            \FOR{$t=n+1$ {\bfseries to} $N-n+1$}
           		\STATE Calculate  test statics:  \\$R^b_{\mu,n,k}(t) 
                 =  \frac{\| W_{G_{2n}(t)}\|^2-\frac{2n}{k} \|W_{G^l_k(t)} \|^2-\frac{2n}{2n-k} \|W_{G^r_{2n-k}(t)}\|^2}{\frac{2n}{k}\| W_{G^{l}_{k}(t)}\|^2+\frac{2n}{2n-k} \|W_{G^r_{2n-k}(t)}\|^2}$, \\
        	     $R^b_{\sigma+, n,k}(t)
        	     = \frac{(k-1) \| W_{G^r_{2n-k}(t)}\|^2}{(2n-k-1) \|W_{G^l_k}(t) \|^2}$, \\
        	    $R^b_{\sigma-, n,k}
        	     = \frac{(2n-k-1) \|W_{G^l_k(t)} \|^2}{(k-1) \| W_{G^r_{2n-k}(t) }\|^2}$. \\
            \ENDFOR
            \STATE Calculate $R^b_{\mu,n,k}= \max_t R^b_{\mu,n,k}(t)$, $R^b_{\sigma+,n,k}=\max_t R^b_{\sigma+,n,k}(t)$, $R^b_{\sigma-,n,k}=\max_t R^b_{\sigma-,n,k}(t)$
        \ENDFOR
   \ENDFOR
\end{algorithmic}
\end{algorithm}

\begin{algorithm}[ht]
\begin{algorithmic}
   \STATE STEP 2: Calibrate the critical value 
   \STATE \textbf{Initialize} $\alpha^*_0 = \alpha/{2t_L}$, $\alpha^*_1 =\alpha/2$, and $CP(b) = 0$, for $b= 1$ to $B$ \\
    \WHILE{$|\alpha^*_1 - \alpha| > 0.001$}
       \STATE $\alpha^*_0=\alpha^*_0+(\alpha-\alpha^*_1)/(2*t_L)$
       \STATE $ \rho^b_{\mu,n,k}$ as the $(1-\alpha^*_0)$ quantile of $R^b_{\mu,n,k}$
       \FOR{$b=1$ {\bfseries to} $B$}
          \FOR{$k=t_0$ {\bfseries to} $N-t_0$}  
            \IF {$R^b_{\mu,n,k} > \rho^b_{\mu,n,k}$}              
            \STATE $CP(b)=1$
            \ENDIF
         \ENDFOR
       \ENDFOR
       \STATE $\alpha^*_1=mean(CP)$
   \ENDWHILE
   \STATE Similarly for $ \rho^b_{\sigma+,n,k}$ and $ \rho^b_{\sigma-,n,k}$
   \STATE {\bfseries Output:}  $ \rho^b_{\mu,n,k}$,  $ \rho^b_{\sigma+,n,k}$, and $\rho^b_{\sigma-,n,k}$
\end{algorithmic}
\end{algorithm}

\begin{algorithm}[tb]
   \caption{GSR change point: 
   OnlineAsymDetection($Y_{t-n},\ldots,Y_{t-1}, Y_{t},\ldots,Y_{t+n-1}$, $n$, $\rho^b_{\mu,n,k}$, $\rho^b_{\sigma+,n,k}$ ,$\rho^b_{\sigma-,n,k}$)}\label{algo.onlineDetection}
\begin{algorithmic}
   \STATE Online detection of the change point located any point inside the scanning window
   \STATE {\bfseries Input:} Data $Y_{t-n},\ldots ,Y_{t-1}, Y_{t},\ldots,Y_{t+n-1}$
   \STATE \textbf{Initialize} $t_0=2$, $t_L=2n-2t_0+1$, $I_{\mu}=I_{\sigma+}=I_{\sigma-}=0$    
   \REPEAT 
   \FOR{$k=t_0$ {\bfseries to} $t_L+1$ }
   		\STATE  Calculate  test statics:  \\$R_{\mu,n,k}(t) 
         =  \frac{\| W_{G_{2n}(t)}\|^2-\frac{2n}{k} \|W_{G^l_k(t)} \|^2-\frac{2n}{2n-k} \|W_{G^r_{2n-k}(t)}\|^2}{\frac{2n}{k}\| W_{G^{l}_{k}(t)} \|^2+\frac{2n}{2n-k} \|W_{G^r_{2n-k}(t)}\|^2}$, \\
	     $R_{\sigma+, n,k}(t)
	     = \frac{(k-1) \| W_{G^r_{2n-k}(t) }\|^2}{(2n-k-1) \|W_{G^l_k(t) }\|^2}$, \\
	    $R_{\sigma-, n,k}
	     = \frac{(2n-k-1) \|W_{G^l_k(t)} \|^2}{(k-1) \| W_{G^r_{2n-k}(t) }\|^2}$. 
    
        \IF{$R_{\mu,n,k}(t)> \rho^b_{\mu,n,k}$}
               \STATE $I_{\mu}=1$ \textbf{return} Mean change at $t-n+k$
        \ENDIF
        \IF{$R_{\sigma+,n,k}(t) >  \rho^b_{\sigma+,n,k}$}
               \STATE $I_{\sigma+}=1$ \textbf{return} Variance increased at  $t-n+k$
        \ENDIF
        \IF{$R_{\sigma-,n,k}(t) >  \rho^b_{\sigma-,n,k}$}
               \STATE $I_{\sigma-}=1$ \textbf{return} Variance decreased at $t-n+k$
        \ENDIF
    \ENDFOR
   \UNTIL{$ I_{\mu}+I_{\sigma+}+I_{\sigma-}>0$}
\end{algorithmic}
\end{algorithm}

\end{appendix}
\begin{appendix}
\section{Proof of theorems}
\subsection{Validity of the Bootstrap procedure}
First, we define the normalized sum as

\[S_{Y,n} := \frac{1}{\sqrt{n}} \sum_{i=1}^n Y_i, \ \ \ \ S^b_{Y,n} := \frac{1}{\sqrt{n}} \sum_{i=1}^n Y^b_i.\]

Now we introduce the result of the accuracy of the bootstrap approximation form \cite{zhilova2022new}.

\begin{theorem}\label{theo:HDBoot}[Theorem 5.1 from \cite{zhilova2022new}]
Let $S_{Y,n}$ and $S^b_{Y,n}$ be defined as above. Assume $Y_i$ satisfies \hyperlink{SubGaussian}{sub-Gaussian condition} and $\mathbb{E}|Y_i^{\otimes 4}|< \infty $. Denote $\mathcal{A}$ a class of sets $A$ of all $\textit{l}_2$-balls. Then it hold with probability $\ge 1- n^{-1}$
\[
\sup_{A \in \mathcal{A}} \left| \mathds{P}(S_{Y,n} \in A) - \mathds{P}(S^b_{Y,n} \in A) \right| \le C_* \{ \sqrt{d^2/n} +d^2/n\},
\]
where $C_*$ depends on the moment of $Y_i$.
\end{theorem}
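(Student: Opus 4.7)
The plan is a Gaussian coupling route: approximate both $S_{Y,n}$ and the conditional bootstrap sum $S_{Y,n}^{b}$ by centered Gaussians on the class of $\ell_2$-balls, and then compare those Gaussians using anti-concentration and covariance concentration. First, invoke a high-dimensional Berry--Esseen bound for balls (Bentkus-type, refined through the smoothing machinery of G\"otze--Naumov--Ulyanov) to get $\sup_{A\in\mathcal{A}}|\mathds{P}(S_{Y,n}\in A)-\mathds{P}(G\in A)|\lesssim \sqrt{d^{2}/n}$, where $G\sim\mathcal{N}(0,\Sigma)$ and $\Sigma=\Var(Y_{1})$. The sub-Gaussian hypothesis supplies the exponential moment input for the smoothing step, and the fourth-moment assumption handles the absolute third/fourth moments that appear in the remainder.

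Second, establish the conditional analogue for the bootstrap. Conditional on $Y_{1:n}$, the resampled $Y_{1}^{b},\dots,Y_{n}^{b}$ are i.i.d.\ from the empirical measure with conditional covariance $\hat\Sigma:=n^{-1}\sum_{i}(Y_{i}-\bar Y)(Y_{i}-\bar Y)^{\top}$. On a ``good event'' (to be shown to have probability $\ge 1-n^{-1}$ via matrix Bernstein and fourth-moment concentration), the empirical conditional moments are close enough to the population ones that the same Berry--Esseen bound applies conditionally, yielding $\sup_{A}|\mathds{P}^{b}(S_{Y,n}^{b}\in A)-\mathds{P}^{b}(G^{b}\in A)|\lesssim \sqrt{d^{2}/n}$, with $G^{b}\sim\mathcal{N}(0,\hat\Sigma)$.

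Third, compare $\mathcal{N}(0,\Sigma)$ to $\mathcal{N}(0,\hat\Sigma)$ on $\ell_{2}$-balls. Gaussian anti-concentration on balls controls the Kolmogorov-type distance by a Frobenius-type norm of $\Sigma^{-1/2}(\hat\Sigma-\Sigma)\Sigma^{-1/2}$, and on the same good event sub-Gaussianity plus $\mathbb{E}|Y^{\otimes 4}|<\infty$ give $\|\hat\Sigma-\Sigma\|_{F}\lesssim \sqrt{d^{2}/n}$. Combining the three estimates by the triangle inequality delivers the claimed bound $C_{\ast}\{\sqrt{d^{2}/n}+d^{2}/n\}$, where the quadratic remainder absorbs the higher-order terms from the concentration of $\bar Y$ and the Gaussian-to-Gaussian comparison.

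The hard part will be the anti-concentration/smoothing step at the sharp rate. Classical Bentkus-type bounds on balls carry an extra $d^{1/4}$ factor that would degrade the target rate $\sqrt{d^{2}/n}$. Achieving $\sqrt{d^{2}/n}$ requires the refined smoothing approach tailored to log-concave densities, exploiting that the Gaussian density is log-concave so its mass in a thin shell scales favourably with covariance and surface area. This is the technical core of \cite{zhilova2022new}; the Gaussian approximation and matrix-concentration pieces above are comparatively routine once that ingredient is available.
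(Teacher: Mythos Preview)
The paper does not prove this statement at all: Theorem~\ref{theo:HDBoot} is simply quoted verbatim as ``Theorem 5.1 from \cite{zhilova2022new}'' and used as a black-box input to the bootstrap validity argument (Theorem~\ref{theorem:BootValidOffline}). There is therefore no ``paper's own proof'' to compare your proposal against.

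That said, your sketch is a reasonable outline of the standard Gaussian-coupling route to such a result, and you correctly identify that the sharp $\sqrt{d^{2}/n}$ rate on $\ell_{2}$-balls is the nontrivial ingredient requiring the refined smoothing/anti-concentration machinery of the cited reference rather than classical Bentkus bounds. Since the present paper treats the result as an external citation, your proposal goes well beyond what is required here; if your goal is to reproduce the paper's argument, you should simply invoke \cite{zhilova2022new} and move on.
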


\begin{lemma}[The Delta Method]
Assume that we have a sequence of random variables $Z_1 ,\ldots,Z_n$, $Z_i \in \mathbb{R}^p $ such that
\[\sqrt{n}(\Bar{Z}_n-\mu_Z) \xrightarrow{} \mathcal{N}(0, \Sigma_{Z} ),\]
for some vector $\mu_Z$, and $\Sigma_Z= E[Z_1 Z_1^T] \in \mathbb{R}^{p\times p}$. Let $g: \mathbb{R}^p \xrightarrow{} \mathbb{R}^m$. If $\nabla g (\cdot)$ exists in a neighborhood of $\mu_Z$, $\nabla g(\mu_Z) \ne 0$, and if $\nabla g(\cdot)$ is continuous at $\mu_Z$, then using the Taylor expansion, $\sqrt{n}(g(\Bar{Z} - g(\mu_z)  \approx \sqrt{n}(\nabla g(\mu_z)\cdot(g(\Bar{Z}) - g(\mu_z) )$.
\end{lemma}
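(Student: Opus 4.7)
The plan is to follow the classical derivation of the Delta method via a first-order Taylor expansion combined with Slutsky's theorem. First, because $\sqrt{n}(\bar{Z}_n - \mu_Z) \xrightarrow{d} \mathcal{N}(0,\Sigma_Z)$, the sequence $\sqrt{n}(\bar{Z}_n - \mu_Z)$ is tight, and hence $\bar{Z}_n - \mu_Z = O_P(n^{-1/2})$; in particular $\bar{Z}_n \xrightarrow{P} \mu_Z$. This lets me restrict attention, up to an event of vanishing probability, to the neighborhood of $\mu_Z$ on which $\nabla g$ is assumed to exist and to be continuous.

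On that neighborhood I would write the first-order Taylor expansion
\[
g(\bar{Z}_n) = g(\mu_Z) + \nabla g(\mu_Z)^{\top}(\bar{Z}_n - \mu_Z) + R_n,
\]
with mean-value remainder $R_n = \bigl(\nabla g(\tilde Z_n) - \nabla g(\mu_Z)\bigr)^{\top}(\bar{Z}_n - \mu_Z)$ for some intermediate point $\tilde Z_n$ lying on the segment between $\bar Z_n$ and $\mu_Z$. Continuity of $\nabla g$ at $\mu_Z$ together with $\tilde Z_n \xrightarrow{P} \mu_Z$ gives $\|\nabla g(\tilde Z_n) - \nabla g(\mu_Z)\| \xrightarrow{P} 0$, and then Cauchy--Schwarz combined with the $O_P(n^{-1/2})$ rate for $\bar Z_n - \mu_Z$ yields
\[
\sqrt{n}\,\|R_n\| \le \|\nabla g(\tilde Z_n) - \nabla g(\mu_Z)\| \cdot \sqrt{n}\,\|\bar Z_n - \mu_Z\| = o_P(1)\cdot O_P(1) = o_P(1).
\]

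Multiplying the Taylor identity by $\sqrt{n}$ therefore produces the linearisation
\[
\sqrt{n}\bigl(g(\bar Z_n) - g(\mu_Z)\bigr) = \nabla g(\mu_Z)^{\top}\sqrt{n}(\bar Z_n - \mu_Z) + o_P(1),
\]
which is precisely the approximate identity asserted in the lemma. From this point, Slutsky's theorem (equivalently the continuous mapping theorem applied to the linear map $v \mapsto \nabla g(\mu_Z)^{\top} v$) transfers the Gaussian limit of $\sqrt{n}(\bar Z_n - \mu_Z)$ into the limit $\mathcal{N}\bigl(0,\,\nabla g(\mu_Z)^{\top}\Sigma_Z\nabla g(\mu_Z)\bigr)$ on the left-hand side.

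The only delicate step is justifying the mean-value remainder when $g$ is merely $C^1$ in a neighborhood of $\mu_Z$: one has to argue that $\tilde Z_n$ actually lies in that neighborhood, which follows from the convexity of the segment joining $\bar Z_n$ and $\mu_Z$ together with the consistency $\bar Z_n \xrightarrow{P} \mu_Z$. Beyond this routine measurability bookkeeping the argument is textbook, so I do not anticipate any substantive obstacle.
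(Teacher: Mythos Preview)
Your argument is correct and follows exactly the approach the paper indicates: the paper does not give a separate proof of this lemma but simply records it as a standard result, with the entire justification being the phrase ``using the Taylor expansion'' embedded in the statement itself. Your proposal is the textbook fleshing-out of that hint---first-order Taylor expansion with mean-value remainder, consistency $\bar Z_n \xrightarrow{P} \mu_Z$ from tightness, continuity of $\nabla g$ to kill the remainder at rate $o_P(n^{-1/2})$, then Slutsky---so there is no substantive difference to report.
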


\begin{corollary}[Delta theorem for bootstrap]\label{coro:Delta}
    Let $Z_1,Z_2,\ldots,Z_n \overset{i.i.d}{\sim} \mathcal{F}_Z$, $Z_i \in \mathbb{R}^p, i=1
    \ldots, Z_n $. Let $\Sigma_{Z}= E[Z_1 Z_1^T] \in \mathbb{R}^{p\times p}$ be finite. Let $T$ be a function, $T(Z_1,Z_2,\ldots,Z_n) = \sqrt{n}(\Bar{Z}-\mu_Z)$ and for some $m \ge 1$, let $g: \mathbb{R}^p \xrightarrow{} \mathbb{R}^m$. If $\nabla g (\cdot)$ exists in a neighborhood of $\mu_Z$, $\nabla g(\mu_Z) \ne 0$, and if $\nabla g(\cdot)$ is continuous at $\mu_Z$, then the bootstrap is strongly consistent for $\sqrt{n}(g(\Bar{Z}) - g(\mu_Z)) $.   
\end{corollary}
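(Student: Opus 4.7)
\textbf{Proof plan for Corollary~\ref{coro:Delta}.}

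The plan is to reduce the nonlinear bootstrap statement to a linear one via a first-order Taylor expansion, and then invoke the Gaussian bootstrap approximation of Theorem~\ref{theo:HDBoot} together with a Slutsky-type argument. Concretely, I would write, for $\bar Z = n^{-1}\sum_{i=1}^n Z_i$ and its bootstrap counterpart $\bar Z^b$,
\begin{align*}
\sqrt{n}\bigl(g(\bar Z) - g(\mu_Z)\bigr)
&= \nabla g(\mu_Z)^\top \sqrt{n}(\bar Z - \mu_Z) + r_n,\\
\sqrt{n}\bigl(g(\bar Z^b) - g(\bar Z)\bigr)
&= \nabla g(\bar Z)^\top \sqrt{n}(\bar Z^b - \bar Z) + r_n^b,
\end{align*}
where $r_n$ and $r_n^b$ are the Taylor remainders. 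The first step is to show that both remainders are $o_P(1)$ (respectively $o_{P^b}(1)$ in probability with respect to the resampling measure, conditional on the data).

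Next I would establish the two ingredients. First, the strong law of large numbers yields $\bar Z \to \mu_Z$ almost surely, and by the continuity of $\nabla g$ at $\mu_Z$ assumed in the statement, $\nabla g(\bar Z) \to \nabla g(\mu_Z)$ almost surely; this lets one replace the bootstrap Jacobian $\nabla g(\bar Z)$ by $\nabla g(\mu_Z)$ up to an $o_P(1)$ term by Slutsky. Second, Theorem~\ref{theo:HDBoot} gives that, with probability at least $1-n^{-1}$ over the data, the conditional law of $S_{Y,n}^b=\sqrt{n}(\bar Z^b - \bar Z)$ is uniformly close (on $\ell_2$-balls) to that of $S_{Y,n}=\sqrt{n}(\bar Z - \mu_Z)$, both of which are approximately $\mathcal N(0,\Sigma_Z)$ under the sub-Gaussian and fourth-moment assumptions; pushing this forward through the linear map $u \mapsto \nabla g(\mu_Z)^\top u$ preserves the Kolmogorov-type closeness because the push-forward of a ball under a fixed linear map is again a convex set whose probability is controlled by the same Gaussian comparison.

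For the remainder control, I would use the mean-value form $r_n = \sqrt n\bigl(\nabla g(\tilde Z_n) - \nabla g(\mu_Z)\bigr)^\top(\bar Z - \mu_Z)$ for some $\tilde Z_n$ on the segment between $\bar Z$ and $\mu_Z$, and similarly for $r_n^b$. Since $\sqrt{n}(\bar Z - \mu_Z) = O_P(1)$ by the multivariate CLT (implied by Theorem~\ref{theo:HDBoot} applied with $Y_i$ replaced by $Z_i$), and $\nabla g(\tilde Z_n) - \nabla g(\mu_Z) = o_P(1)$ by continuity and the consistency of $\bar Z$, one obtains $r_n = o_P(1)$; for $r_n^b$ one argues analogously using $\sqrt n(\bar Z^b - \bar Z) = O_{P^b}(1)$ conditionally on a set of data of probability tending to one.

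Combining the three displays, the conditional law of $\sqrt{n}(g(\bar Z^b) - g(\bar Z))$ given the data converges to the law of $\nabla g(\mu_Z)^\top \mathcal N(0, \Sigma_Z)$, which is exactly the limit of $\sqrt{n}(g(\bar Z) - g(\mu_Z))$; this is the claimed strong consistency of the bootstrap. The main obstacle I expect is handling the remainder $r_n^b$ in a regime where $p$ may be large relative to $n$: the pointwise continuity of $\nabla g$ is not by itself enough, and one really needs $\nabla g$ to be locally Lipschitz (or at least uniformly continuous on a neighborhood that captures $\bar Z$ with high probability) so that $\nabla g(\tilde Z_n^b) - \nabla g(\mu_Z)$ can be controlled uniformly over the random perturbation induced by resampling. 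Assuming such mild regularity of $g$ (implicit when one applies the corollary to the smooth rational functionals defining $R_{\mu,n}$, $R_{\sigma\pm,n}$), the argument closes.
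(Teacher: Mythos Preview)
The paper does not actually prove Corollary~\ref{coro:Delta}; it is stated without proof as a standard result, attributed in the surrounding text to the delta theorem for bootstrap of \cite{wellner2013weak}. So there is no ``paper's own proof'' to compare against.

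Your outline is essentially the classical proof of the bootstrap delta method (linearize via Taylor, control the two remainders by consistency of $\bar Z$ and conditional tightness of $\sqrt n(\bar Z^b-\bar Z)$, replace $\nabla g(\bar Z)$ by $\nabla g(\mu_Z)$ via Slutsky, and invoke the bootstrap CLT for the linear part). That is the right argument and it goes through for fixed $p$ under the stated second-moment assumption. One small comment: you route the linear step through Theorem~\ref{theo:HDBoot}, which controls the bootstrap error only over $\ell_2$-balls and requires sub-Gaussian tails plus finite fourth moments; for the corollary as stated (fixed $p$, finite $\Sigma_Z$) this is overkill and slightly mismatched, since the push-forward of an $\ell_2$-ball by $\nabla g(\mu_Z)^\top$ is an interval, not an $\ell_2$-ball in $\mathbb R^p$. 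The cleaner ingredient here is simply the classical multivariate bootstrap CLT (second moments suffice), which gives weak convergence and hence handles arbitrary continuity sets. Your closing caveat about the high-dimensional regime is apt but orthogonal to the corollary, which is a fixed-$p$ statement.
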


\begin{theorem}[Bootstrap validity]\label{theorem:BootValidOffline}
Suppose that $Y_i$ satisfies the \hyperlink{SubGaussian}{sub-Gaussian condition} and $\mathbb{E}|Y_i^{\otimes 4} < \infty |$, then
\[\left|\mathds{P}( R_{\mu,n} \le \rho^b_{\mu,n}(\alpha)) - (1-\alpha) \right| \xrightarrow{}0,\]
as $n \gg d^2$.
\end{theorem}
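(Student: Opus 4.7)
The plan is to express $R_{\mu,n}$ as a sufficiently smooth functional of an empirical average, apply the bootstrap CLT of Theorem~\ref{theo:HDBoot} to the vector of sufficient statistics, transfer the approximation through the bootstrap delta method (Corollary~\ref{coro:Delta}), and finally convert the distributional approximation into the quantile statement claimed in the theorem.

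First, I would rewrite each graph spanning distance as a polynomial in block sums. For any index set $V$ with $|V| = m$ and the complete graph on $V$,
\[
\sum_{\{i,j\} \subset V} \|Y_i - Y_j\|^2 \;=\; m \sum_{i \in V}\|Y_i\|^2 \;-\; \Bigl\|\sum_{i \in V} Y_i\Bigr\|^2 .
\]
Applying this to the three subgraphs $G_{2n}$, $G^l_k$, $G^r_{2n-k}$ writes $\|W_{G_{2n}}\|^2,\|W_{G^l_k}\|^2,\|W_{G^r_{2n-k}}\|^2$ as polynomials in the block sums $\sum_i Y_i$ and $\sum_i \|Y_i\|^2$. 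Packaging these into the $(d+1)$-dimensional observation $U_i := (Y_i,\|Y_i\|^2)$ (with block indicators splitting the window at $k$), one obtains $R_{\mu,n} = g(\bar U_{2n})$ for a rational function $g$ whose denominator equals $\tfrac{2n}{k}\|W_{G^l_k}\|^2 + \tfrac{2n}{2n-k}\|W_{G^r_{2n-k}}\|^2$, a non-negative quantity that under $H_0$ concentrates around a positive deterministic value.

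Second, I would invoke Theorem~\ref{theo:HDBoot} applied to the sub-Gaussian coordinates $Y_i$ to obtain
\[
\sup_{A \in \mathcal{A}}\bigl|\mathds{P}(S_{Y,n} \in A) - \mathds{P}(S^b_{Y,n} \in A)\bigr| \;\le\; C_*\bigl(\sqrt{d^2/n} + d^2/n\bigr),
\]
which vanishes under $n \gg d^2$. For the scalar coordinate $\|Y_i\|^2$, classical i.i.d.\ bootstrap consistency holds under the fourth-moment assumption $\mathbb{E}|Y_i^{\otimes 4}| < \infty$. Combining the two gives a joint bootstrap approximation: the distribution of $\sqrt{n}(\bar U^b_{2n} - \bar U_{2n})$ matches that of $\sqrt{n}(\bar U_{2n} - \mu_U)$, up to errors vanishing in the regime $n \gg d^2$.

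Third, I would verify that $g$ is $C^1$ in a neighbourhood of $\mu_U$. Since the denominator concentrates around a positive multiple of $\operatorname{tr}\Sigma$, the rational map $g$ is Lipschitz on a high-probability neighbourhood of $\mu_U$ with a controlled constant; Corollary~\ref{coro:Delta} then transfers the approximation to the ratio, yielding that the bootstrap law of $\sqrt{n}(R^b_{\mu,n} - R_{\mu,n})$ asymptotically coincides with that of $\sqrt{n}(R_{\mu,n} - g(\mu_U))$. Finally, because the limiting law is continuous (a linear functional of a non-degenerate Gaussian limit), Polya's theorem upgrades pointwise convergence of distribution functions to uniform convergence; hence $\rho^b_{\mu,n}(\alpha)$ converges in probability to the true $(1-\alpha)$-quantile $\rho_{\mu,n}(\alpha)$, and substitution gives $|\mathds{P}(R_{\mu,n}\le \rho^b_{\mu,n}(\alpha)) - (1-\alpha)| \to 0$.

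The main obstacle is twofold: first, propagating the $\sqrt{d^2/n}$ bound from Theorem~\ref{theo:HDBoot} through a \emph{non-linear} (rational) functional of the sample mean while preserving the uniformity over $\ell_2$-balls; second, ruling out small-divisor issues in the delta-method expansion by a quantitative concentration argument for the denominator, sharp enough that $g$ is effectively Lipschitz on an event of probability $1-o(1)$ in the regime $d^2 \ll n$. Both reduce to establishing sufficiently strong concentration for the quadratic forms $\sum_i \|Y_i\|^2$ inside each block, which follows from the fourth-moment assumption combined with the sub-Gaussian tail bound.
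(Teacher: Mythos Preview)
Your proposal is correct and follows essentially the same route as the paper: both express the GSR statistic as a smooth function $g$ of an empirical mean of an augmented observation vector (the paper uses $Z_i \in \mathbb{R}^{4d}$ containing $Y^l_i$, $Y^r_i$ and their coordinate-wise squares, you use the slightly leaner $(Y_i,\|Y_i\|^2)$ with block indicators), invoke Theorem~\ref{theo:HDBoot} for the bootstrap approximation of the centered sample mean, and then transfer through the delta method for bootstrap (Corollary~\ref{coro:Delta}). Your treatment of the small-divisor concentration and the final quantile step via Polya's theorem is in fact more explicit than the paper's, which sketches the argument first for $R_{\sigma+,n}$ and then asserts the analogous conclusion for $R_{\mu,n}$.
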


\begin{proof}
We first show the consistency of bootstrap for the GSR test statistic of variance on the complete graph, and then the result can be generalized to the test of mean. 
    Let $Y^l_1, \ldots, Y^l_n \overset{i.i.d}{\sim} \mathcal{F}_Y$, $ Y^l_i \in \mathbb{R}^d$. 
    Let $Y^r_1, \ldots, Y^r_n \overset{i.i.d}{\sim} \mathcal{F}_Y$, $ Y^r_i \in \mathbb{R}^d$. 
    Let us define $Z_i \in \mathbb{R}^p$ where $p=4d$, and 
     \begin{equation}\label{eq:Z}
    \mathbf{Z} = \bigg[ \ Y^l_{i,1} \ \ \ldots \ \  Y^l_{i,d} \ \ \ (Y^l_{i,1})^2 \ \ \ldots \ \ (Y^l_{i,d})^2 \ \ \ Y^r_{i,1} \ \ \ldots \  \ Y^r_{i,d} \ \ \ (Y^r_{i,1})^2 \  \ \ldots \ \ (Y^r_{i,d})^2 \ \bigg]^T.  
    \end{equation} 
And the sample average of $Z$ is 
    \begin{equation}\label{eq:Zbar}
	     \mathbf{\Bar{Z}} =
		 \begin{bmatrix}
		 \Bar{Y^l_1} \\
		 \vdots\\
		 \Bar{Y^l_d}\\
		 \frac{1}{n} \sum_{i=1}^{n} (Y^l_{i,1})^2 \\
		 \vdots\\
		 \frac{1}{n} \sum_{i=1}^{n} (Y^l_{i,d})^2\\
    	 \Bar{Y^r_1} \\
		 \vdots\\
		 \Bar{Y^r_d}\\
		 \frac{1}{n} \sum_{i=1}^{n} (Y^r_{i,1})^2 \\
		 \vdots\\
		  \frac{1}{n} \sum_{i=1}^{n} (Y^r_{i,d})^2
		 \end{bmatrix},
    \end{equation} 
where $\Bar{Y^l_j} = \frac{1}{n} \sum_{i=1}^{n} Y^l_{i,j} $, $j=1,\ldots, d$, similarly for $\Bar{Y}^r_j$. 
Denote $\mu_Z := \mathbb{E}(Z_i)$, and let the covariance matrix of $Z$, $\Sigma_{Z}$ be finite. Denote $\bar{Z}^b$, $\mu^b_Z$ as the bootstrap counter part of $\bar{Z}$, $\mu_Z$, and $\mathds{P}^b(\cdot)=\mathds{P}(\cdot|Y_1, \ldots, Y_n)$, $\mathds{P}^b $ denote the probability measure under the bootstrap sample.
By Lemma \ref{theo:HDBoot}, then we have 

\[  \sup_{A \in \mathcal{A}} \left| \mathds{P}(\sqrt{n}(\Bar{Z}-\mu_Z) \in A) - \mathds{P}^b(\sqrt{n}(\Bar{Z^b}-\mu^b_Z) \in A) \right| \le C_b \{ \sqrt{d^2/n} +d^2/n\}.\] 

Let us consider the transformation function $g: \mathbb{R}^{4p} \xrightarrow{}\mathbb{R} $ be $R_{\sigma+,n}$, that is 
\[  g (\Bar{Z}) := R_{\sigma+,n}= \frac{\| W_{G^{r}_n}\|^2}{\| W_{G^{l}_n}\|^2}  =  \frac{ \sum_{j=1}^d \bigg(\sum_{i=1}^n (Y^r_{i,j})^2 - \Bar{Y^r_{j}}^2\bigg)} {\sum_{j=1}^d \bigg(\sum_{i=1}^n (Y^l_{i,j})^2 - \Bar{Y^l_{j}}^2\bigg)}. \]
Since $\nabla g(\mu_Z) \ne 0$, and $\nabla g(\cdot)$ is continuous at $\mu_Z$, which satisfies the condition in Lemma \ref{coro:Delta}, then it follows that the bootstrap is consistent.
Specifically, we define $\mathcal{A}$ a class of sets $A$ of all $\textit{l}_2$-balls. Then
\[  \sup_{A \in \mathcal{A}} \left| \mathds{P}(\sqrt{n}(\Bar{Z}-\mu_Z) \in A) - \mathds{P}^b(\sqrt{n}(\Bar{Z^b}-\mu^b_Z) \in A) \right| \le C_b \{ \sqrt{d^2/n} +d^2/n\}.\] 
The distribution of $\sqrt{n} (\Bar{Z}-\mu_Z)$ asymptotically approaches a Gaussian distribution (in $Y$ world). Similarly for the Bootstrap world, we have the Bootstrap counter part $\sqrt{n} (\Bar{Z^b}-\mu^b_Z)$ asymptotically approaches a Gaussian distribution (in $Y^b$ world).
As the Gaussian distribution is a continuous function, by Continuous Mapping Theorem, the difference between the Gaussian distributions (one in $Y$ world, the other in $Y^b$) is asymptotic zero.
This leads to the result
\[\left|\mathds{P}( R_{\mu,n} \le \rho^b_{\mu,n}(\alpha)) - (1-\alpha) \right| \xrightarrow{}0.\]

Bootstrap consistency for GSR tests statistics of the mean can be shown in a similar way. For graph type other than complete graph, as shown in Section \ref{sec:GaussianApprox} the characteristic function of the test statistics are approximately to that of the Gaussian case when $d \gg 1$, and $d^2 \ll n$. Therefore, based on this Bootstrap procedure, the error of the bootstrap approximation is small if the sample size $n$ is much larger than the square of its dimension $d$.
\end{proof}
\textbf{Proof of Theorem \ref{theorem:BootValidOnline}}\label{Proof:BootValidOnline}
\begin{theorem}[Bootstrap validity: online]
Suppose that $Y_i$ satisfies the \hyperlink{SubGaussian}{sub-Gaussian condition} and $\mathbb{E}|Y_i^{\otimes 4} < \infty |$, then
\[\left|\mathds{P}(\max_{t \in A_n} R_{\mu,n}(t) \le \rho^b_{max\mu,n}(\alpha)) - (1-\alpha) \right| \xrightarrow{}0,\]
where 
\[
\rho^b_{max\mu,n}(\alpha) = \inf\{ x: \mathds{P}^{b} 	\big( R^{max}_{\mu,n}  \geq x \big) \leq \alpha. \}
\]
\end{theorem}
\begin{proof}
Let $A_n = \{ t_1,\ldots,t_k \}$,
\begin{align*}
    \mathds{P}(\max_{t \in A_n} R_{\mu,n}(t) \le z) &= \mathds{P}(R_{\mu,n}(t_1) \le z, \ldots, R_{\mu,n}(t_k) \le z ).
\end{align*}

The relation holds for the bootstrap world
\[
  \mathds{P}^b(\max_{t \in A_n} R^b_{\mu,n}(t) \le z) = \mathds{P}^b(R^b_{\mu,n}(t_1) \le z, \ldots, R^b_{\mu,n}(t_k) \le z ).
\]
Therefore 
\begin{align*}
   \sup_{z > 0} \bigg| & \mathds{P}(\max_{t \in A_n} R_{\mu,n}(t) \le z) - \mathds{P}^b(\max_{t \in A_n} R^b_{\mu,n}(t) \le z) \bigg| \\
   & = \sup_{z > 0} \left| \mathds{P}(R_{\mu,n}(t_1) \le z, \ldots, R_{\mu,n}(t_k) \le z ) -\mathds{P}^b(R^b_{\mu,n}(t_1) \le z, \ldots, R^b_{\mu,n}(t_k) \le z ) \right| \\
   & \le \sup_{z > 0} \left|  \mathds{P}(R_{\mu,n}(t_1) \le z )-\mathds{P}^b (R^b_{\mu,n}(t_1) \le z ) \right|. \\
\end{align*}
We apply the theorem of bootstrap validity from offline (Theorem \ref{theorem:BootValidOffline}) to complete the proof.
\end{proof}
Similar results apply to the test statistics of variance. 

\subsection{Power of the test}\label{Theo:power}
We first construct the theorem on the basis of a complete graph with Gaussian distributed observation. Then we can extend the result to an unknown distribution with other graph type.

\textbf{Proof of Theorem \ref{Theo:Delta_mu}}\label{Proof:Delta_mu}
\begin{theorem}[Power of the test]
Let $ \mathds{T}_{\mu}$ be the pooled test statistics specified in Equation (\ref{eq:poolT}), and $\beta \in (0,1)$. Then
$\mathds{P} (  \mathds{T}_{\mu} > 0) \ge 1-\beta$, if 
 \[\sup_{n \in \mathfrak{N}} \{ \| \mu_{gap,n} \| ^2 - \Delta_{\mu}(n) \} \geq 0,\]
\begin{align*}
\Delta_{\mu}(n) = C_1 \Big( \| \mu^{l}_{G_{n}}\|^2+\| \mu^{r}_{G_{n}} \|^2 +  C_2 \sigma^2 \Big),
\end{align*}
where $\| \mu_{gap,n} \| ^2$,   $ \|\mu^{l}_{G_{n}}\|^2$,  and $\| \mu^{r}_{G_{n}} \|^2$ are the expected gap-spanning distance and the expected spanning distance of subgraphs $G^{l}_n$ and $G^{r}_n$, respectively.   

\begin{align*}
 C_1 = &5 \frac{N_n}{D_n} F^{-1}_{N_n, D_n} (\alpha_{\mu,n}),  \\
 C_2 = 
& \Bigg( D_n + 2 \sqrt{D_n \log\Big(\frac{2}{\beta}\Big)} +4 \log\Big(\frac{2}{\beta}\Big) \Bigg)\\
& - \frac{5}{4} \Bigg(N_n  - 2 \sqrt{N_n \ log \Big(\frac{2}{\beta}\Big)} - 10 \log \Big(\frac{2}{\beta}\Big) \Bigg),
\end{align*}
where $N_n = d$ and $ D_n = 2(n-1)d$.
\end{theorem}
\begin{proof}
\[  \mathds{T}_{\mu}=\sup_{n \in \mathfrak{N}} \Bigg\{ \frac{ \|W_{G_{2n}}\|^2 }{\Big( \|W_{G^{l}_{n}}\|^2 + \|W_{G^{r}_{n}}\|^2 \Big)}   - 2\frac{N_n }{D_n} F^{-1}_{N_n, D_n} (\alpha_{\mu,n}) \Bigg\}. \]

By the definition of $ \mathds{T}_{\mu}$,  $\mathds{P}(  \mathds{T}_{\mu} \le 0) \le \inf_{n \in \mathfrak{N}} P(n)$ where 

\begin{align*}
 P(n) &= \mathds{P} \Bigg(\frac{ \|W_{G_{2n}}\|^2 }{\Big( \|W_{G^{l}_{n}}\|^2 + \|W_{G^{r}_{n}}\|^2 \Big)}  \le 2\frac{N_n }{D_n}F^{-1}_{N_n,D_n}(\alpha_{\mu,n}) \Bigg) \\
 &= \mathds{P} \Bigg( \frac{ \|W_{gap,n}\|^2 }{\Big( \|W_{G^{l}_{n}}\|^2 + \|W_{G^{r}_{n}}\|^2 \Big)}  \le 2\frac{N_n }{D_n} F^{-1}_{D_n,N_n}(\alpha_{\mu,n}) -2  \Bigg). 
\end{align*}
The goal is to show $P(n) \le \beta$.Denote $Q (a, D, u)$ the $1-u$ quantile of a non-central $\chi^2$ random variable with $D$ degree of freedom and non-centrality parameter $a$.
For each $n  \in \mathfrak{N}$, we have
\[ 
	\| W_{G^{l}_{n}}\|^{2} +  \| W_{G^{r}_{n}}\|^{2}  \sim \chi^2_{D_n},
\]
with non-centrality parameter $ \|\mu_{G^{r}_n}\|^2+\|\mu_{G^{l}_n}\|^2$,  degree of freedom $D_n = 2(n-1) d$. And 
\[ 
	\|W_{gap,n}\|^{2} \sim 2\chi^2_{N_n},
\]
with non-centrality parameter  $\|\mu_{gap,n}\|^2$, degree of freedom $N_n = d $.
Note that the mean spanning distance for graph $G_{2n} $ under $H_0$ is
\begin{equation}\label{eq:mu}
\|\mu_{G_{2n}}\|^2 := \mathbb{E}_0 [\| W_{G_{2n}}\|^{2}] =2( \|\mu_{G^{r}_n}\|^2+\|\mu_{G^{l}_n}\|^2)+\|\mu_{gap,n}\|^2. 
 \end{equation}
\[
R_{\mu, n} = \frac{ \|W_{G_{2n}}\|^{2}}{ (\|W_{G^{l}_{n}}\|^{2} +  \|W_{G^{r}_{n}}\|^{2})}\ -2 \sim 2\frac{N_n }{D_n} F_{N_n, D_n}.
\]

Thus, the test-statistics $R_{\mu,n}$ follows Fisher distribution with $N_n$ and $D_n$ degrees of freedom. Hence, by \cite{baraud2003adaptive}
\begin{align*}
 P(n) &= \mathds{P} \Bigg( \frac{  \|W_{G^{gap}_{2n}}\|^2 }{\Big( \|W_{G^{l}_{n}}\|^2 + \|W_{G^{r}_{n}}\|^2 \Big)}  \le 2\frac{N_n }{D_n} F^{-1}_{N_n,D_n}(\alpha_{\mu,n})   \Bigg)  \\
 &\le \mathds{P} \Bigg(  \|W_{G^{gap}_{2n}}\|^{2} \leq  2\frac{N_n}{D_n} F^{-1}_{N_n, D_n}(\alpha_{\mu,n}) Q \Bigg(  \| \mu^{l}_{G_n} \|^2+\| \mu^{r}_{G_n} \|^2, D_n, \frac{\beta}{2}\Bigg)\Bigg) + \frac{\beta}{2}.
\end{align*}
Therefore, 
\[
\mathds{P} (\mathds{T}_{\mu}  \le 0) \le \beta, 
\]
if for some $n$ in $\mathfrak{N}$
\begin{equation}\label{eq:quotion}
2\frac{N_n}{D_n} F^{-1}_{N_n, D_n} (\alpha_{\mu,n}) Q\Bigg(\|\mu^{l}_{G_n}\|^2+\|\mu^{r}_{G_n}\|^2, D_n, \frac{\beta}{2}\Bigg) \le Q\Bigg(\|\mu_{gap,n}\|^2, N_n, 1-\frac{\beta}{2}\Bigg).
\end{equation}
By Lemma 3 from \cite{birge2001alternative}, we obtain
\[
Q(a, D, u) \le D+a+2 \sqrt{(D+2a) \log(1/u)} + 2 \log (1/u),
\]
\[
Q(a, D, 1-u) \geq  D+a-2 \sqrt{(D+2a) \log(1/u)}.
\]
Therefore,
\begin{align*}
&Q \Bigg( \| \mu^{l}_{G_n} \|^2+\| \mu^{r}_{G_n} \|^2, D_n, \frac{\beta}{2}\Bigg)\\
& \le D_n+ ( \| \mu^{l}_{G_n} \|^2+\| \mu^{r}_{G_n} \|^2) 
+ 2 \sqrt{D_n +2(  \| \mu^{l}_{G_n} \|^2+\| \mu^{r}_{G_n} \|^2) \log(2/\beta)}  
+ 2 \log(2/\beta) \\
&= D_n +(\| \mu^{l}_{G_n} \|^2+\| \mu^{r}_{G_n} \|^2) 
 +2 \sqrt{D_n \log(2/\beta)+2 (  \| \mu^{l}_{G_n} \|^2+\| \mu^{r}_{G_n} \|^2) \log(2/\beta)} 
 +2 \log(2/\beta).
\end{align*}
By the inequality $\sqrt{u+v} \le \sqrt{u} + \sqrt{v}$, and $2 \sqrt{uv} \le 1/2u+2v$, 
\begin{align}\label{eq:4log}
&Q \Bigg(  \| \mu^{l}_{G_n} \|^2+\| \mu^{r}_{G_n} \|^2, D_n, \frac{\beta}{2}\Bigg)  \\
& \le  D_n +  (\| \mu^{l}_{G_n} \|^2+\| \mu^{r}_{G_n} \|^2) + 2 \sqrt{D_n \log \Big(\frac{2}{\beta}\Big)} + 2 \sqrt{(\| \mu^{l}_{G_n} \|^2+\| \mu^{r}_{G_n} \|^2) 2 \log\Big(\frac{2}{\beta}\Big)} \\
&\le  D_n + 2(\| \mu^{l}_{G_n} \|^2+\| \mu^{r}_{G_n} \|^2) + 2 \sqrt{D_n \log \Big(\frac{2}{\beta}\Big)} +4 \log \Big(\frac{2}{\beta}\Big),
\end{align}
which follows 
\[
Q(a, D, 1-u) \geq D+a-2 \sqrt{(D+1a)\log(2/ \beta)}. \\
\]
We obtain
\begin{align*}
Q\Bigg(\|\mu_{gap,n}\|^2, N_n, 1-\frac{2}{\beta}\Bigg) &  \geq N_n + \|\mu_{gap,n}\|^2 - 2\sqrt{\Big(N_n + 2 \|\mu_{gap,n}\|^2 \Big) \log\Big(\frac{2}{\beta}\Big)} \\
\text{by the inequality: } & \sqrt{u+ v} \leq \sqrt{u} + \sqrt{v}\\
&\geq N_n + \|\mu_{gap,n}\|^2 - 2 \sqrt{N_n \log \Big(\frac{2}{\beta}\Big)}-2 \sqrt{2 \|\mu_{gap,n}\|^2 \log \Big(\frac{2}{\beta}\Big)}.\\
\text{by the inequality: } &2\sqrt{u v} \leq \theta u + \theta^{-1} v, \text{choose} \theta = 1/5\\
&\geq N_n +\|\mu_{gap,n}\|^2 - 2 \sqrt{N_n \log\Big(\frac{2}{\beta}\Big)} - \frac{1}{5} \|\mu_{gap,n}\|^2 - 10 \log\Big(\frac{2}{\beta}\Big)
\end{align*}
\begin{equation}\label{eq:10log}
= N_n +\frac{4}{5} \|\mu_{gap,n}\|^2 - 2 \sqrt{N_n \log(\frac{2}{\beta})} - 10 \log \Big(\frac{2}{\beta}\Big).
\end{equation}

Based on Equation (\ref{eq:quotion},) we have the following relation satisfied 
\[
2\frac{N_n}{D_n} F^{-1}_{N_n, D_n} (\alpha_{\mu,n}) Q\Bigg(\| \mu^{l}_{G_n} \|^2+\| \mu^{r}_{G_n} \|^2, D_n, \frac{\beta}{2}\Bigg) \le Q \Bigg(\|\mu_{gap,n}\|^2, N_n, 1-\frac{\beta}{2}\Bigg).
\]

Plug Equation (\ref{eq:4log}) and Equation (\ref{eq:10log}) into Equation (\ref{eq:quotion}), we obtain the following relation:

\begin{align*}
2\frac{N_n}{D_n} F^{-1}_{N_n, D_n} (\alpha_{\mu,n})  \Bigg( D_n + 2( \| \mu^{l}_{G_n} \|^2+\| \mu^{r}_{G_n} \|^2) + 2 \sqrt{D_n \log\Big (\frac{2}{\beta}\Big)} +4 \log \Big(\frac{2}{\beta}\Big) \Bigg)  \\
\leq N_n + \frac{4}{5} \|\mu_{gap,n}\|^2 - 2 \sqrt{D_n \log\Big(\frac{2}{\beta}\Big)} - 10 \log \Big(\frac{2}{\beta}\Big),
\end{align*}
\begin{align*}
 \frac{5}{2}\frac{N_n}{D_n}  F^{-1}_{N_n, D_n} (\alpha_{\mu,n}) \Bigg( D_n + 2 \Big( \| \mu^{l}_{G_n} \|^2+\| \mu^{r}_{G_n} \|^2 \Big) + 2 \sqrt{D_n \log (\frac{2}{\beta})} +4 \log \Big(\frac{2}{\beta}\Big) \Bigg)  \\
\leq  \frac{5}{4} \Bigg( N_n  - 2 \sqrt{N_n \log \Big(\frac{2}{\beta}\Big)} - 10 \log \Big (\frac{2}{\beta} \Big) \Bigg)+ \|\mu_{gap,n}\|^2.
\end{align*}
Rearrange the equation, we have
\begin{align*}
\|\mu_{gap,n}\|^2 \geq  &\Bigg( 5 \frac{N_n}{D_n} F^{-1}_{N_n, D_n} (\alpha_{\mu,n}) \Bigg) \Bigg(\| \mu^{l}_{G_n} \|^2+\| \mu^{r}_{G_n} \|^2 \Bigg) \\
&+\Bigg( 5\frac{N_n}{D_n} F^{-1}_{N_n, D_n} (\alpha_{\mu,n}) \Bigg)\Bigg( D_n + 2 \sqrt{D_n \log\Big(\frac{2}{\beta}\Big)} +4 \log\Big(\frac{2}{\beta}\Big) \Bigg) \\
 &-  \frac{5}{4} \Bigg(N_n  - 2 \sqrt{N_n \log\Big(\frac{2}{\beta}\Big)} - 10 \log \Big (\frac{2}{\beta}\Big) \Bigg).
\end{align*} 
Apply Equation (\ref{eq:mu}) we have derived the quantity
\begin{align*}
\|\mu_{G_{2n}}\|^2 \geq & \Bigg( 2 + 5 \frac{N_n}{D_n} F^{-1}_{N_n, D_n} (\alpha_{\mu,n}) \Bigg) \Bigg(\| \mu^{l}_{G_n} \|^2+\| \mu^{r}_{G_n} \|^2 \Bigg) \\
&+\Bigg( 5\frac{N_n}{D_n} F^{-1}_{N_n, D_n} (\alpha_{\mu,n}) \Bigg)\Bigg( D_n + 2 \sqrt{D_n \log\Big(\frac{2}{\beta}\Big)} +4 \log\Big(\frac{2}{\beta}\Big) \Bigg)\\
  &-  \frac{5}{4} \Bigg(N_n  - 2 \sqrt{N_n \log\Big(\frac{2}{\beta}\Big)} - 10 \log \Big (\frac{2}{\beta}\Big) \Bigg).
\end{align*} 
\end{proof}
Recall that we define $\| W_{gap,n} \|^2 $ as the sum of the quadratic 
distance between the nodes of $G^{l}_{n}$ and $G^{r}_{n}$. This metric tells the separation gap between data before and after the change point candidate.
		
\begin{align*}
\| W_{gap,n} \|^2 &=\|W_{G_{2n}}\|^2 - \|W_{G^{l}_{n}}\|^2 - \|W_{G^{r}_{n}}\|^2 \\
&= \sum\limits_{ i \in G^{l}_{n},  j \in G^{r}_{n}}\|Y_i -Y_j\|^{2}.
\end{align*}
Denote$  \| \mu_{gap,n}\|^2 = \mathbb{E}( \| W_{gap,n} \|^2)$.

\begin{corollary} \label{sp_gap}
		Assume $Y_i \overset{i.i.d}\sim \mathcal{N}(\mu, \sigma^2 \mathbb{I}_d)$, define the expected distance spanned between two complete graphs $ G^{l}_{n} $ and $ G^{r}_{n} $ as\[ 
		     \| \mu_{gap,n}\|^2 = 2\sigma^2 n^2d. 
		 \]
\end{corollary}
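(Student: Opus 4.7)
The plan is to compute $\mathbb{E}\|W_{gap,n}\|^2$ directly from the definition, exploiting that on a complete graph every pair of nodes across the two halves contributes an edge. Since $G^{l}_{n}$ and $G^{r}_{n}$ are the complete graphs on the first $n$ and last $n$ observations respectively, the set of edges crossing between them consists of exactly all $n\cdot n = n^2$ pairs $(i,j)$ with $i\in G^{l}_{n}$ and $j \in G^{r}_{n}$. Therefore
\[
\|W_{gap,n}\|^2 \;=\; \sum_{i\in G^{l}_{n}}\sum_{j\in G^{r}_{n}} \|Y_i-Y_j\|^2.
\]

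Next, under the null hypothesis where $Y_i \overset{i.i.d.}{\sim} \mathcal{N}(\mu,\sigma^2 \mathbb{I}_d)$, for any fixed $i\ne j$ we have $Y_i - Y_j \sim \mathcal{N}(0,\, 2\sigma^2 \mathbb{I}_d)$, so $\|Y_i-Y_j\|^2$ is a sum of $d$ independent zero-mean Gaussians of variance $2\sigma^2$, yielding $\mathbb{E}\|Y_i-Y_j\|^2 = 2\sigma^2 d$. By linearity of expectation and the count of $n^2$ cross-pairs,
\[
\|\mu_{gap,n}\|^2 \;=\; \mathbb{E}\|W_{gap,n}\|^2 \;=\; n^2 \cdot 2\sigma^2 d \;=\; 2\sigma^2 n^2 d,
\]
which is the claimed identity.

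There is really no serious obstacle here; the only subtlety worth flagging is making sure the edge set between $G^l_n$ and $G^r_n$ is read off correctly for the complete-graph case (all $n^2$ cross-pairs, with no double counting because each pair $\{i,j\}$ with $i$ on the left and $j$ on the right appears exactly once), and noting that the common mean $\mu$ cancels in the difference $Y_i-Y_j$ so the result depends only on $\sigma^2$ and $d$.
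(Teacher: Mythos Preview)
Your proof is correct and matches the paper's (implicit) approach: the paper states this corollary without a separate proof, relying on the definition $\|W_{gap,n}\|^2 = \sum_{i\in G^l_n,\, j\in G^r_n}\|Y_i-Y_j\|^2$ given just before, from which your direct computation over the $n^2$ cross-pairs is exactly the intended verification.
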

\begin{corollary}
 Given window size $n$,  $ \mathds{P} (  T_{\mu,n} > 0) \ge 1-\beta$, if
 \[  \| \mu_{gap,n} \| ^2 > (C_1-1) \Big( \| \mu^{l}_{G_{n}}\|^2+\| \mu^{r}_{G_{n}} \|^2 \Big) + C_2 \sigma^2, \]
where $\| \mu_{gap} \| ^2  \in \mathbb{R}$ is the mean separation spanning distance between graph $G^{l}_n$ and  $G^{r}_n$.
\end{corollary}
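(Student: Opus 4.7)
The plan is to specialize Theorem \ref{Theo:Delta_mu} to the singleton window family $\mathfrak{N} = \{n\}$ and to re-express the resulting sufficient condition in the form stated. With $\mathfrak{N} = \{n\}$, the pooled statistic $\mathds{T}_{\mu}$ degenerates to $T_{\mu,n} = R_{\mu,n} - \rho_{\mu,n}(\alpha_{\mu,n})$, so that $\{T_{\mu,n} > 0\} = \{R_{\mu,n} > \rho_{\mu,n}(\alpha_{\mu,n})\}$, and the $\sup_{n' \in \mathfrak{N}}\{\|\mu_{gap,n'}\|^2 - \Delta_{\mu}(n')\} \ge 0$ criterion collapses to a pointwise bound on $\|\mu_{gap,n}\|^2$. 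It therefore suffices to verify that the threshold $(C_1 - 1)(\|\mu^{l}_{G_n}\|^2 + \|\mu^{r}_{G_n}\|^2) + C_2 \sigma^2$ implies the single-window inequality that emerges from the proof of Theorem \ref{Theo:Delta_mu}.

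The key steps are, first, to recall the distributional structure used in Theorem \ref{Theo:Delta_mu}: under the Gaussian/complete-graph setup, $\|W_{G^l_n}\|^2 + \|W_{G^r_n}\|^2$ is a non-central $\chi^2_{D_n}$ with non-centrality $\|\mu^{l}_{G_n}\|^2 + \|\mu^{r}_{G_n}\|^2$, and $\|W_{gap,n}\|^2$ is $2\chi^2_{N_n}$ with non-centrality $\|\mu_{gap,n}\|^2$, so $R_{\mu,n}$ admits the scaled Fisher representation $R_{\mu,n} \sim 2(N_n/D_n) F_{N_n, D_n}$ in the null noise. Second, using the identity $\|W_{G_{2n}}\|^2 = \|W_{G^{l}_n}\|^2 + \|W_{G^{r}_n}\|^2 + \|W_{gap,n}\|^2$, the event $\{R_{\mu,n} \le \rho_{\mu,n}(\alpha_{\mu,n})\}$ is rewritten as
\[
\|W_{gap,n}\|^2 \;\le\; 2\tfrac{N_n}{D_n} F^{-1}_{N_n,D_n}(\alpha_{\mu,n})\bigl(\|W_{G^l_n}\|^2 + \|W_{G^r_n}\|^2\bigr).
\]
Third, exactly as in the proof of Theorem \ref{Theo:Delta_mu}, I would apply the non-central $\chi^2$ quantile bounds of Birgé (Lemma 3 in \cite{birge2001alternative}) to upper-bound the denominator at level $\beta/2$ and lower-bound the numerator at level $1-\beta/2$, use $\sqrt{u+v} \le \sqrt{u}+\sqrt{v}$ and $2\sqrt{uv} \le \theta u + \theta^{-1} v$ with $\theta = 1/5$ to control the cross terms, and then take a union bound to obtain $\mathds{P}(T_{\mu,n} > 0) \ge 1-\beta$ whenever $\|\mu_{gap,n}\|^2$ exceeds the combined threshold.

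Finally, the $(C_1 - 1)$ coefficient would be obtained by the following bookkeeping: Theorem \ref{Theo:Delta_mu} naturally produces a lower bound of the form $\|\mu_{G_{2n}}\|^2 \ge (2 + C_1)(\|\mu^{l}_{G_n}\|^2 + \|\mu^{r}_{G_n}\|^2) + (\text{noise})$, and substituting Equation~(\ref{eq:mu}), $\|\mu_{G_{2n}}\|^2 = 2(\|\mu^{l}_{G_n}\|^2 + \|\mu^{r}_{G_n}\|^2) + \|\mu_{gap,n}\|^2$, cancels a factor of $2$ and leaves the coefficient $C_1$; re-grouping the constants so that $\sigma^2$ is isolated absorbs an additional unit into the leading factor, yielding the $(C_1 - 1)$ multiplier together with a collected $C_2 \sigma^2$ noise term.

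The main obstacle I expect is the constant-tracking in the last step: reconciling the form $C_1(\|\mu^l\|^2 + \|\mu^r\|^2 + C_2 \sigma^2)$ of Theorem \ref{Theo:Delta_mu} with the corollary's form $(C_1-1)(\|\mu^l\|^2 + \|\mu^r\|^2) + C_2 \sigma^2$ requires careful grouping of the deterministic $D_n$, $\sqrt{D_n \log(2/\beta)}$, $N_n$, and $\sqrt{N_n \log(2/\beta)}$ terms, and it must be verified that the normalization by $\sigma^2$ in the sub-Gaussian observations is consistent with the single-window specialization. Beyond this accounting, the corollary introduces no new probabilistic content relative to Theorem \ref{Theo:Delta_mu}.
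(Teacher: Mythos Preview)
Your approach is essentially what the paper does: the corollary is stated immediately after the proof of Theorem~\ref{Theo:Delta_mu} with no separate argument, so it is indeed meant to be read as the single-window specialization $\mathfrak{N}=\{n\}$ of that theorem, using the same Birg\'e quantile bounds and the same $\sqrt{u+v}\le\sqrt{u}+\sqrt{v}$, $2\sqrt{uv}\le \theta u+\theta^{-1}v$ manipulations you describe.

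One caveat on your final bookkeeping step. The proof of Theorem~\ref{Theo:Delta_mu} ends with the inequality
\[
\|\mu_{gap,n}\|^2 \;\ge\; C_1\bigl(\|\mu^{l}_{G_n}\|^2+\|\mu^{r}_{G_n}\|^2\bigr) + (\text{noise terms}),
\]
i.e.\ the coefficient on $\|\mu^{l}_{G_n}\|^2+\|\mu^{r}_{G_n}\|^2$ is $C_1$, not $C_1-1$; the substitution of Equation~(\ref{eq:mu}) then yields the $(2+C_1)$ coefficient for $\|\mu_{G_{2n}}\|^2$, and reversing that substitution brings you back to $C_1$, not $C_1-1$. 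Your proposed mechanism for producing the extra ``$-1$'' (absorbing a unit when isolating $\sigma^2$) does not actually appear in the paper's derivation, and the corollary does not redefine $C_1,C_2$. In other words, the discrepancy you flag as the ``main obstacle'' is not a gap in your argument but an unexplained shift of constants in the corollary's statement itself; your strategy is correct, and there is no additional probabilistic idea to supply.
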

In other prospects, if the separation between before and after graphs is greater than the described quantity, then the detection power of $1-\beta$ is guaranteed.
\begin{proposition}[Power of the test- $\sigma+$]\label{prop:powertest_sigma}
Let $ T_{\sigma+,n}$ be the test statistics specified in the main paper and $\beta \in (0,1)$. Then, for any given fixed window length $n$, $ \mathds{P} (T_{\sigma+,n} > 0) \ge 1-\beta$, if 
\[  \| \mu^{r}_{G_{n}} \| ^2 \geq  C_1 \Big( \| \mu^{l}_{G_{n}} \|^2 \Big) + C_2 \sigma^2,\]
where $\| \mu^{r}_{G_{n}} \| ^2  \in \mathbb{R}$ is the mean spanning distance of graph $G^{r}_n$,  and
\begin{align*}
 C_1 =& \Bigg( \frac{5}{2} \frac{d^{r}_{n}}{d^{l}_{n}} F^{-1}_{d^{r}_{n}, d^{l}_{n}} (\alpha_{\sigma+,n}) \Bigg), \\
 C_2 =& \frac{5}{4}\frac{d^{r}_{n}}{d^{l}_{n}}  \Bigg( F^{-1}_{d^{r}_{n}, d^{l}_{n}} (\alpha_{\sigma+,n}) \Bigg) \\
 & \Bigg( d^{l}_{n} + 2 \sqrt{d^{l}_{n}\log\Big(\frac{2}{\beta}\Big)} +4 \log\Big(\frac{2}{\beta}\Big) \Bigg) \\
& -\frac{5}{4} \Bigg(d^{r}_{n}  - 2 \sqrt{d^{r}_{n}\ log \Big(\frac{2}{\beta}\Big)} - 10 \log \Big(\frac{2}{\beta}\Big) \Bigg), 
\end{align*}
where $d^{r}_{n} = d^{l}_{n} = (n-1)d$.
\end{proposition}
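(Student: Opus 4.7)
The plan is to mirror the argument used for Theorem \ref{Theo:Delta_mu}, adapted to the one-sided ratio structure of $R_{\sigma+,n}$. For the symmetric window ($k=n$) the test statistic reduces to $R_{\sigma+,n} = \|W_{G^r_n}\|^2 / \|W_{G^l_n}\|^2$, and under $H_0$ with Gaussian data on a complete graph both numerator and denominator are proportional to independent central $\chi^2$ random variables with $d^l_n=d^r_n=(n-1)d$ degrees of freedom, so $R_{\sigma+,n}$ is $F$-distributed with $(d^r_n,d^l_n)$ degrees of freedom. Consequently the critical value is $\frac{d^r_n}{d^l_n} F^{-1}_{d^r_n,d^l_n}(\alpha_{\sigma+,n})$, and the event $\{T_{\sigma+,n}\le 0\}$ is $\{\|W_{G^r_n}\|^2 \le \frac{d^r_n}{d^l_n} F^{-1}_{d^r_n,d^l_n}(\alpha_{\sigma+,n}) \,\|W_{G^l_n}\|^2\}$.

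First, I would control $P(n):=\mathds{P}(T_{\sigma+,n}\le 0)$ by splitting on whether $\|W_{G^l_n}\|^2$ exceeds the upper tail quantile $Q(\|\mu^l_{G_n}\|^2,d^l_n,\beta/2)$ (which has mass $\beta/2$), yielding
\[
P(n) \le \mathds{P}\!\left(\|W_{G^r_n}\|^2 \le \tfrac{d^r_n}{d^l_n} F^{-1}_{d^r_n,d^l_n}(\alpha_{\sigma+,n})\, Q(\|\mu^l_{G_n}\|^2,d^l_n,\beta/2)\right) + \tfrac{\beta}{2}.
\]
A sufficient condition for $P(n)\le \beta$ is therefore
\[
\tfrac{d^r_n}{d^l_n} F^{-1}_{d^r_n,d^l_n}(\alpha_{\sigma+,n})\, Q(\|\mu^l_{G_n}\|^2,d^l_n,\tfrac{\beta}{2}) \le Q(\|\mu^r_{G_n}\|^2,d^r_n,1-\tfrac{\beta}{2}),
\]
which is the analogue of Equation (\ref{eq:quotion}) in the $\mu$-proof, only now involving the two separate subgraphs rather than a gap term.

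Second, I would invoke Lemma 3 of \cite{birge2001alternative} (the same quantile inequalities used in the proof of Theorem \ref{Theo:Delta_mu}) to upper bound $Q(\|\mu^l_{G_n}\|^2,d^l_n,\beta/2)$ and lower bound $Q(\|\mu^r_{G_n}\|^2,d^r_n,1-\beta/2)$. Splitting the square-roots via $\sqrt{u+v}\le\sqrt{u}+\sqrt{v}$ and then applying the weighted AM--GM inequality $2\sqrt{uv}\le \theta u + \theta^{-1}v$ (with $\theta=1$ on the upper bound and $\theta=1/5$ on the lower bound, exactly as in the mean-change proof) produces
\begin{align*}
Q(\|\mu^l_{G_n}\|^2,d^l_n,\tfrac{\beta}{2}) &\le d^l_n + 2\|\mu^l_{G_n}\|^2 + 2\sqrt{d^l_n\log(2/\beta)} + 4\log(2/\beta),\\
Q(\|\mu^r_{G_n}\|^2,d^r_n,1-\tfrac{\beta}{2}) &\ge d^r_n + \tfrac{4}{5}\|\mu^r_{G_n}\|^2 - 2\sqrt{d^r_n\log(2/\beta)} - 10\log(2/\beta).
\end{align*}
Plugging these into the sufficient condition, multiplying through by $5/4$ to clear the $4/5$ coefficient, and isolating $\|\mu^r_{G_n}\|^2$ on the right delivers the stated inequality with exactly the constants $C_1$ and $C_2$ displayed in the proposition; the factor $\sigma^2$ attached to $C_2$ simply tracks the common variance scaling of the $\chi^2_{d^l_n}$ and $\chi^2_{d^r_n}$ components.

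The conceptual structure is therefore identical to the proof of Theorem \ref{Theo:Delta_mu}, and no new probabilistic ingredient is required beyond the splitting step and Birgé's tail bounds. The main obstacle I anticipate is purely bookkeeping: one must carefully keep track of the $\sigma^2$-rescaling of the chi-squared quantiles so that the deterministic terms (the $d^l_n$, $d^r_n$, $\sqrt{d\log(2/\beta)}$, and $\log(2/\beta)$ contributions) all group cleanly into $C_2\sigma^2$, while the terms involving the expected subgraph spans coalesce into $C_1\|\mu^l_{G_n}\|^2$. A minor check worth making is that $d^r_n=d^l_n$ in the symmetric-window case so that the factor $d^r_n/d^l_n$ in $C_1$ is unity, which keeps $C_1$ essentially a scaled F-quantile, as the proposition asserts.
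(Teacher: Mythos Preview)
Your proposal is correct and follows essentially the same approach as the paper's own proof: the same splitting on the upper-tail quantile of $\|W_{G^l_n}\|^2$ (attributed to \cite{baraud2003adaptive}), the same sufficient condition analogous to Equation~(\ref{eq:quotion}), the same application of Lemma~3 of \cite{birge2001alternative}, and the same elementary inequalities with $\theta=1/5$ on the lower bound before multiplying through by $5/4$ and rearranging. The only difference is cosmetic bookkeeping, which you already flagged.
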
\label{prop: vari+}
\begin{proof}

By definition of $ \mathds{T}_{\sigma+, n}$.  Let
\begin{align*}
 P(n) = \mathds{P}(  \mathds{T}_{\sigma+,n} \le 0)) &= \mathds{P} \Bigg(\frac{ \|W_{G^r_{n}}\|^2 }{ \|W_{G^{l}_{n}}\|^2 }  \le  \rho_{\sigma+,n}(\alpha_{\sigma+,n}) \Bigg). \\
\end{align*}
The goal is to show $P(n) \le \beta$. Denote $Q (a, D, u)$ the $1-u$ quantile of a non-central $\chi^2$ random variable with $D$ degree of freedom and non-centrality parameter $a$. For each $n  \in \mathfrak{N}$, we have
\[ 
	\| W_{G^{l}_{n}}\|^{2}   \sim \chi^2_{d^l_n}
\]
with non-centrality parameter $ \|\mu_{G^{l}_n}\|^2$. And 
\[ 
	\| W_{G^{r}_{n}}\|^{2}   \sim \chi^2_{d^r_n}
\]
with non-centrality parameter  $\|\mu_{G^{r}_n}\|^2$.
\[
R_{\sigma+, n} = \frac{ \|W_{G^{r}_{n}}\|^{2}}{ \|W_{G^{l}_{n}}\|^{2}} \sim \frac{d^r_n }{dl_n} F_{d^r_n, d^l_n}.
\]
Thus, the test-statistics $R_{\sigma+,n}$ follows Fisher distribution with $d^r_n$ and $d^l_n$ degrees of freedom. Hence, by \cite{baraud2003adaptive}
\begin{align*}
 P(n) &= \mathds{P} \Bigg( \frac{ \|W_{G^{r}_{n}}\|^{2}}{ \|W_{G^{l}_{n}}\|^{2}}  \le \frac{d^r_n }{dl_n} F^{-1}_{d^r_n, d^l_n} (\alpha_{\sigma+,n}) \Bigg)  \\
 &\le \mathds{P} \Bigg(  \|W_{G^{r}_{n}}\|^{2} \leq  \frac{d^r_n }{dl_n} F^{-1}_{d^r_n, d^l_n}(\alpha_{\sigma+,n}) Q \Bigg(  \| \mu^{l}_{G_n} \|^2 , d^l_n, \frac{\beta}{2}\Bigg)\Bigg) + \frac{\beta}{2}.
\end{align*}
Therefore, 
\[
\mathds{P} (\mathds{T}_{\mu}  \le 0) \le \beta, 
\]
if for some $n$ in $\mathfrak{N}$
\begin{equation}\label{eq:quotionS+}
\frac{d^r_n }{dl_n} F^{-1}_{d^r_n, d^l_n}(\alpha_{\sigma+,n}) Q \Bigg(  \| \mu^{l}_{G_n} \|^2 , d^l_n, \frac{\beta}{2}\Bigg)\Bigg) \le Q\Bigg(\|\mu^{r}_{G_{n}}\|^2, d^r_n, 1-\frac{\beta}{2}\Bigg).
\end{equation}
By Lemma 3 from \cite{birge2001alternative}, we obtain
\[
Q(a, D, u) \le D+a+2 \sqrt{(D+2a) \log(1/u)} + 2 \log (1/u),
\]
\[
Q(a, D, 1-u) \geq  D+a-2 \sqrt{(D+2a) \log(1/u)}.
\]
Therefore
\begin{align*}
Q \Bigg( \| \mu^{l}_{G_n} \|^2,  d^l_n, \frac{\beta}{2}\Bigg)
& \le d^l_n+ ( \| \mu^{l}_{G_n} \|^2) \\
& \;  \; \; \;+ 2 \sqrt{d^l_n +2(  \| \mu^{l}_{G_n} \|^2) \log(2/\beta)}  \\
& \;  \; \; \;+ 2 \log(2/\beta) \\
&= d^l_n +(\| \mu^{l}_{G_n} \|^2+\| \mu^{r}_{G_n} \|^2) \\
& \;  \; \; \;+2 \sqrt{D_n \log(2/\beta)+2 (  \| \mu^{l}_{G_n} \|^2+\| \mu^{r}_{G_n} \|^2) \log(2/\beta)} \\
& \;  \; \; \; +2 \log(2/\beta). \\
\end{align*}
By the inequality $\sqrt{u+v} \le \sqrt{u} + \sqrt{v}$, and $2 \sqrt{uv} \le 1/2u+2v$, 

\begin{equation*}
Q \Bigg(  \| \mu^{l}_{G_n} \|^2,  d^l_n, \frac{\beta}{2}\Bigg)  
 \le  d^l_n +  (\| \mu^{l}_{G_n} \|^2) + 2 \sqrt{d^l_n \log \Big(\frac{2}{\beta}\Big)} + 2 \sqrt{(\| \mu^{l}_{G_n} \|^2) 2 \log\Big(\frac{2}{\beta}\Big)} \\
\end{equation*}

\begin{equation}\label{eq:4logS}
\le  d^l_n + 2(\| \mu^{l}_{G_n} \|^2) + 2 \sqrt{d^l_n \log \Big(\frac{2}{\beta}\Big)} +4 \log \Big(\frac{2}{\beta}\Big).
\end{equation}
Based on the inequality
\[
Q(a, D, 1-u) \geq D+a-2 \sqrt{(D+1a)\log(2/ \beta)}, \\
\]
we obtain
\begin{align*}
Q\Bigg(\|\mu^{r}_{G_{n}}\|^2, d^r_n, 1-\frac{2}{\beta}\Bigg) &  \geq d^r_n + \|\mu^{r}_{G_{n}}\|^2 - 2\sqrt{\Big(d^r_n + 2 \|\mu^{r}_{G_{n}}\|^2 \Big) \log\Big(\frac{2}{\beta}\Big)} \\
\text{by the inequality } &\sqrt{u+ v} \leq \sqrt{u} + \sqrt{v}\\
&\geq d^r_n + \|\mu^{r}_{G_{n}}\|^2 - 2 \sqrt{d^r_n \log \Big(\frac{2}{\beta}\Big)}-2 \sqrt{2 \|\mu^{r}_{G_{n}}\|^2 \log \Big(\frac{2}{\beta}\Big)}\\
\text{by the inequality } &2\sqrt{u v} \leq \theta u + \theta^{-1} v, \text{ choose } \theta = 1/5\\
&\geq d^r_n +\|\mu^{r}_{G_{n}}\|^2 - 2 \sqrt{d^r_n \log\Big(\frac{2}{\beta}\Big)} - \frac{1}{5} \|\mu^{r}_{G_{n}}\|^2 - 10 \log\Big(\frac{2}{\beta}\Big)
\end{align*}
\begin{equation}\label{eq:10logS}
= d^r_n +\frac{4}{5} \|\mu^{r}_{G_{n}}\|^2 - 2 \sqrt{d^r_n \log(\frac{2}{\beta})} - 10 \log \Big(\frac{2}{\beta}\Big).
\end{equation}
From Equation (\ref{eq:quotionS+}) we have
\[
\frac{d^r_n}{d^l_n} F^{-1}_{d^r_n, d^l_n} (\alpha_{\sigma+,n}) Q\Bigg(\| \mu^{l}_{G_n} \|^2, d^l_n, \frac{\beta}{2}\Bigg) \le Q \Bigg(\|\mu^{r}_{G_{2}}\|^2, d^r_n, 1-\frac{\beta}{2}\Bigg).
\]

Plugging in Equation (\ref{eq:4logS}) and Equation (\ref{eq:10logS}) into Equation (\ref{eq:quotionS+}), we obtain the following relation:

\begin{align*}
\frac{d^r_n}{d^l_n} F^{-1}_{d^r_n, d^l_n} (\alpha_{\sigma+,n})  \Bigg( d^l_n + 2( \| \mu^{l}_{G_n} \|^2) + 2 \sqrt{D_n \log\Big (\frac{2}{\beta}\Big)} +4 \log \Big(\frac{2}{\beta}\Big) \Bigg)  \\
\leq d^r_n + \frac{4}{5} \|\mu^{r}_{G_{n}}\|^2 - 2 \sqrt{d^l_n \log\Big(\frac{2}{\beta}\Big)} - 10 \log \Big(\frac{2}{\beta}\Big),
\end{align*}
\begin{align*}
 \frac{5}{4}\frac{d^r_n}{d^l_n}  F^{-1}_{d^r_n, d^l_n} (\alpha_{\sigma+,n}) \Bigg( d^l_n + 2 \Big( \| \mu^{l}_{G_n} \|^2 \Big) + 2 \sqrt{d^l_n \log (\frac{2}{\beta})} +4 \log \Big(\frac{2}{\beta}\Big) \Bigg)  \\
\leq  \frac{5}{4} \Bigg( d^r_n  - 2 \sqrt{d^r_n \log \Big(\frac{2}{\beta}\Big)} - 10 \log \Big (\frac{2}{\beta} \Big) \Bigg)+ \|\mu^{r}_{G_{n}}\|^2.
\end{align*}
Rearrange the equation, we have derived the quantity 
\begin{align*}
\|\mu^{r}_{G_{n}}\|^2 \geq  &\Bigg( \frac{5}{2} \frac{d^r_n}{d^l_n} F^{-1}_{d^r_n, d^l_n} (\alpha_{\sigma+,n}) \Bigg) \Bigg(\| \mu^{l}_{G_n} \|^2 \Bigg) \\
&+\Bigg( \frac{5}{4}\frac{d^r_n}{d^l_n} F^{-1}_{d^r_n, d^l_n} (\alpha_{\sigma+,n}) \Bigg)\Bigg( d^l_n + 2 \sqrt{d^l_n \log\Big(\frac{2}{\beta}\Big)} +4 \log\Big(\frac{2}{\beta}\Big) \Bigg)\\
 &-  \frac{5}{4} \Bigg(d^r_n  - 2 \sqrt{d^r_n \log\Big(\frac{2}{\beta}\Big)} - 10 \log \Big (\frac{2}{\beta}\Big) \Bigg),
\end{align*} 
where $d^r_n = d^l_n= (n-1)d$. 
\end{proof}
\begin{proposition}[Power of the test- $\sigma-$]
Let $ T_{\sigma-,n}$ be the test statistics specified as Equation (5) and $\beta \in (0,1)$. Then, for any given fixed window length $n$, $ \mathds{P} (  T_{\sigma-,n} > 0) \ge 1-\beta$, if 
 
\[ \| \mu^{l}_{G_{n}} \| ^2 \geq  C_1 \Big( \| \mu^{r}_{G_{n}} \|^2 \Big) + C_2 \sigma^2, \]

where $\| \mu^{l}_{G_{n}} \| ^2  \in \mathbb{R}$ is the mean spanning distance of graph $G^{l}_n$, and
\begin{align*}
 C_1 =& \Bigg( \frac{5}{2} \frac{d^{l}_{n}}{d^{r}_{n}} F^{-1}_{d^{l}_{n}, d^{r}_{n}} (\alpha_{\sigma-,n}) \Bigg), \\
 C_2 = & \frac{5}{4}\frac{d^{l}_{n}}{d^{r}_{n}}  \Bigg( F^{-1}_{d^{l}_{n}, d^{r}_{n}} (\alpha_{\sigma-,n}) \Bigg) \\
 & \Bigg( d^{r}_{n} + 2 \sqrt{d^{r}_{n}\log\Big(\frac{2}{\beta}\Big)} +4 \log\Big(\frac{2}{\beta}\Big) \Bigg) \\
& -\frac{5}{4} \Bigg(d^{l}_{n}  - 2 \sqrt{d^{l}_{n}\log \Big(\frac{2}{\beta}\Big)} - 10 \log \Big(\frac{2}{\beta}\Big) \Bigg). 
\end{align*}
\end{proposition}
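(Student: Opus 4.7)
The plan is to mirror the proof of Proposition \ref{prop:powertest_sigma} for $\sigma+$, since $R_{\sigma-,n} = 1/R_{\sigma+,n}$ and the only substantive change is swapping the roles of $G^{l}_n$ and $G^{r}_n$. Under the Gaussian/complete-graph setting, both $\|W_{G^{l}_n}\|^2$ and $\|W_{G^{r}_n}\|^2$ are non-central chi-squared random variables with degrees of freedom $d^{l}_n = d^{r}_n = (n-1)d$ and non-centrality parameters $\|\mu^{l}_{G_n}\|^2$ and $\|\mu^{r}_{G_n}\|^2$, respectively. Consequently $R_{\sigma-,n} = \|W_{G^{l}_n}\|^2/\|W_{G^{r}_n}\|^2$ follows a scaled (non-central) Fisher distribution with degrees of freedom $(d^{l}_n, d^{r}_n)$.

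First I would write the type II error probability as
\[
P(n) = \mathds{P}\!\left(\frac{\|W_{G^{l}_n}\|^2}{\|W_{G^{r}_n}\|^2} \le \frac{d^{l}_n}{d^{r}_n} F^{-1}_{d^{l}_n, d^{r}_n}(\alpha_{\sigma-,n})\right),
\]
and, following the same conditioning argument as in the $\sigma+$ case, bound this by
\[
\mathds{P}\!\left(\|W_{G^{l}_n}\|^2 \le \frac{d^{l}_n}{d^{r}_n} F^{-1}_{d^{l}_n, d^{r}_n}(\alpha_{\sigma-,n})\, Q\!\left(\|\mu^{r}_{G_n}\|^2, d^{r}_n, \tfrac{\beta}{2}\right)\right) + \frac{\beta}{2}.
\]
A sufficient condition for $P(n)\le\beta$ is therefore
\[
\frac{d^{l}_n}{d^{r}_n} F^{-1}_{d^{l}_n, d^{r}_n}(\alpha_{\sigma-,n})\, Q\!\left(\|\mu^{r}_{G_n}\|^2, d^{r}_n, \tfrac{\beta}{2}\right) \le Q\!\left(\|\mu^{l}_{G_n}\|^2, d^{l}_n, 1-\tfrac{\beta}{2}\right).
\]

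Next, I would plug in the Birgé (2001) quantile bounds for non-central chi-squared: the upper-tail bound gives an upper estimate of $Q(\|\mu^{r}_{G_n}\|^2, d^{r}_n, \beta/2)$ of the form $d^{r}_n + 2\|\mu^{r}_{G_n}\|^2 + 2\sqrt{d^{r}_n \log(2/\beta)} + 4\log(2/\beta)$ after applying $\sqrt{u+v}\le\sqrt{u}+\sqrt{v}$ and $2\sqrt{uv}\le \tfrac{1}{2}u + 2v$, exactly as in Equation (\ref{eq:4logS}) of the $\sigma+$ proof. The lower-tail bound, together with $\sqrt{u+v}\le\sqrt{u}+\sqrt{v}$ and $2\sqrt{uv}\le\theta u + \theta^{-1}v$ at $\theta=1/5$, gives a lower estimate of $Q(\|\mu^{l}_{G_n}\|^2, d^{l}_n, 1-\beta/2)$ of the form $d^{l}_n + \tfrac{4}{5}\|\mu^{l}_{G_n}\|^2 - 2\sqrt{d^{l}_n \log(2/\beta)} - 10\log(2/\beta)$, mirroring Equation (\ref{eq:10logS}).

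Substituting these two estimates into the inequality and rearranging so that $\|\mu^{l}_{G_n}\|^2$ stands alone on the left delivers exactly
\[
\|\mu^{l}_{G_n}\|^2 \ge \tfrac{5}{2}\tfrac{d^{l}_n}{d^{r}_n} F^{-1}_{d^{l}_n, d^{r}_n}(\alpha_{\sigma-,n})\,\|\mu^{r}_{G_n}\|^2 + C_2 \sigma^2,
\]
with $C_2$ as stated, after absorbing the $\sigma^2$-scale into the residual $(d^{r}_n, d^{l}_n)$ terms. No new obstacle arises: the argument is entirely parallel to Proposition \ref{prop:powertest_sigma}; the only care needed is that, because $d^{l}_n = d^{r}_n$, one must not silently cancel them where the asymmetric $F^{-1}_{d^{l}_n, d^{r}_n}(\alpha_{\sigma-,n})$ quantile carries the direction of the test (which differs from the $\sigma+$ case by inversion of the Fisher ratio). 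This is the main point to track carefully; the remaining algebra is routine.
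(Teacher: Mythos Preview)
Your proposal is correct and follows exactly the paper's approach: the paper's own proof merely says to exchange $\|W^{l}_{G_n}\|^2$ with $\|W^{r}_{G_n}\|^2$ in the proof of Proposition~\ref{prop:powertest_sigma}, and your argument carries out precisely that swap in full detail.
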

\begin{proof}
It can be shown in a similar fashion by exchange $\| W^l_{G_n} \|^2$  with $\| W^r_{G_n} \|^2$ in the proof in Proposition \ref{prop: vari+}.
\end{proof}
\subsubsection{Minimum radius of the mean separation}
We derive the minimal radius, that is, the lower bound of minimax separation rate, based on the result from \cite{baraud2003adaptive}, and \cite{baraud2002non}. 
To measure the performance of the test at a fix window size $n$, we denote a quantity $\rho_n(\mathcal{F}_1,\phi_{\alpha},\delta)$ by
\begin{align*}
\rho_n(\mathcal{F}_1,\phi_{\alpha},\delta) &= \inf \{\rho > 0,  \inf_{ {\mathcal{F}_1}, \| \mu_{gap,n}\| \geq \rho }  P[\phi_{\alpha}=1] \geq 1- \delta \}  \\
  &= \inf \{\rho > 0,  \sup_{ {\mathcal{F}_1}, \| \mu_{gap,n}\|  \geq \rho }  P[\phi_{\alpha}=0] \leq \delta \}, 
\end{align*} 
where $\phi_{\alpha}$ is the test result that corresponds to the test statistics
 \begin{align*}
 T_{\mu,n} &=  \frac{ \|W_{G_{n}}\|^2 }{\Big( \|W_{G^{l}_{n}}\|^2 + \|W_{G^{r}_{n}}\|^2 \Big)} - 2 - \frac{N_n}{D_n}F^{-1}_{N_n, D_n} (\alpha_{\mu,n}) \\
  &=  \frac{\|W_{gap,n}\|^2 }{\Big( \|W_{G^{l}_{n}}\|^2 + \|W_{G^{r}_{n}}\|^2 \Big)}  - \frac{N_n}{D_n}F^{-1}_{N_n, D_n} (\alpha_{\mu,n}). 
 \end{align*}

Let us introduce a test statistic $\hat{T}_{\mu, n} $ for window size $n$
\[\hat{T}_{\mu, n} = \| W_{gap,n}\|^2 - \sigma^2 \chi^2_{N_n}(\alpha_{\mu,n}),\] 
and denote its corresponding test as $\hat{\phi}_{\alpha}$.
The following lemma gives an analogous argument for the test, which will enable us to derive the lower bound of the minimal radius in a concise way. We assume that the spanning distance of subgraphs $G^l_n$ and $G^r_n$ is greater than 1, and the variance remains unchanged, then we can derive the following properties.

\begin{lemma} \label{lemma:simpleTn}
The minimal radius of $ \| \mu_{gap,n}\| $ derived from test $\phi_{\alpha}$ is the same as that from test $\hat{\phi}_{\alpha}$ 
\[\rho_n(\mathcal{F}_1,\phi_{\alpha},\delta) = \rho_n(\mathcal{F}_1,\hat{\phi}_{\alpha},\delta). \]
 \end{lemma}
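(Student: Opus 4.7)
The plan is to exploit the independence structure on the complete graph to rewrite the rejection events of both tests as events driven solely by the marginal law of $\|W_{gap,n}\|^2$, and then to show that the two calibrations lead to the same minimum signal needed to reach power $1-\delta$.

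First, I would observe that on the complete graph with Gaussian data, the standing assumptions (variance unchanged across the window, and nondegenerate subgraph spanning distances) yield
\[
\|W_{gap,n}\|^2\sim \sigma^2\chi^2_{N_n}(\lambda),\qquad \lambda=\|\mu_{gap,n}\|^2/\sigma^2,
\]
independent of $\|W_{G^l_n}\|^2+\|W_{G^r_n}\|^2\sim\sigma^2\chi^2_{D_n}$, whose law is free of $\lambda$. The rejection event $\{T_{\mu,n}>0\}$ then reads
\[
\|W_{gap,n}\|^2 \;>\; \tfrac{N_n}{D_n}F^{-1}_{N_n,D_n}(\alpha_{\mu,n})\bigl(\|W_{G^l_n}\|^2+\|W_{G^r_n}\|^2\bigr),
\]
while $\{\hat{T}_{\mu,n}>0\}$ is simply $\|W_{gap,n}\|^2>\sigma^2\chi^2_{N_n,1-\alpha_{\mu,n}}$. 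Both tests are calibrated to level $\alpha_{\mu,n}$ under $H_0$ by the definition of their respective quantiles.

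Next, I would integrate out the independent denominator in $\phi_\alpha$: writing $Z=\|W_{G^l_n}\|^2+\|W_{G^r_n}\|^2$ and letting $G_\lambda$ denote the CDF of $\sigma^2\chi^2_{N_n}(\lambda)$, independence gives
\[
\mathds{P}_\lambda(\phi_\alpha=1)\;=\;\mathds{E}_Z\Bigl[1-G_\lambda\bigl(\tfrac{N_n}{D_n}F^{-1}_{N_n,D_n}(\alpha_{\mu,n})\,Z\bigr)\Bigr].
\]
Because the F-quantile is by construction the $(1-\alpha_{\mu,n})$-quantile of $(\chi^2_{N_n}/N_n)/(\chi^2_{D_n}/D_n)$, this expression equals $\alpha_{\mu,n}$ at $\lambda=0$ and is strictly increasing in $\lambda$ by monotonicity of $G_\lambda$. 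The corresponding power of $\hat\phi_\alpha$, $1-G_\lambda(\sigma^2\chi^2_{N_n,1-\alpha_{\mu,n}})$, also equals $\alpha_{\mu,n}$ at $\lambda=0$ and is strictly increasing in $\lambda$ with the same noncentrality dependence. Matching the two monotone power curves through the chained quantile inequalities (the Birgé-type bounds used in Theorem \ref{Theo:Delta_mu}) shows that the smallest $\lambda$ at which the derived lower bound on the power crosses $1-\delta$ is the same for both tests. Translating back to $\|\mu_{gap,n}\|=\sigma\sqrt{\lambda}$, this yields $\rho_n(\mathcal{F}_1,\phi_\alpha,\delta)=\rho_n(\mathcal{F}_1,\hat\phi_\alpha,\delta)$.

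The main obstacle is to make the power-equivalence rigorous away from the calibration point $\lambda=0$: in general, an F-test that pays for an unknown scale has strictly less power than the known-variance $\chi^2$ test, so the lemma must be read as asserting equality of the radii obtained from the same chain of quantile inequalities, which by construction use only the marginal $\chi^2_{N_n}(\lambda)$ law of $\|W_{gap,n}\|^2$ and the $\lambda$-free law of the denominator. Under this reading, the denominator factors out cleanly, the reduction from $\phi_\alpha$ to $\hat\phi_\alpha$ is transparent, and the subsequent lower bound on $\rho_n$ can be derived from $\hat\phi_\alpha$ as claimed.
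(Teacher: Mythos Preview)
Your proposal and the paper share the same underlying mechanism---use independence and nonnegativity of the numerator $\|W_{gap,n}\|^2$ and the denominator $\|W_{G^l_n}\|^2+\|W_{G^r_n}\|^2$ to strip the denominator out of the analysis---but the paper executes it much more compactly. Instead of integrating out the denominator and comparing monotone power curves, the paper writes $Z=UV$ with $U>0$ the reciprocal of the denominator and $V$ the numerator, and simply observes the one-line cancellation identity $\mathds{P}(UV\le U\,\mathfrak v_\alpha)=\mathds{P}(V\le \mathfrak v_\alpha)$, valid whenever $U>0$; this immediately equates the rejection probability of the ratio test at its calibrated threshold with that of the numerator-only test at the $\chi^2$ quantile, without any appeal to Birg\'e-type bounds or power-curve matching. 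Your final paragraph is perceptive: the caveat you raise (that an F-test with random denominator does not have literally the same power function as the known-variance $\chi^2$ test) is exactly the informality that the paper's short argument glosses over, and your reading of the lemma as an equality at the level of the derived quantile bounds rather than of the exact power functions is the right way to make sense of both proofs.
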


\begin{proof}
Let $U, V$  be independent random variables,   $U, \geq 0,  V\geq 0$ and  bounded. Let $Z=U V$ be the product of the two random variables.
Define $\mathfrak{z}_\alpha, \mathfrak{v}_{\alpha}$ as the $1-\alpha$ quantile of the random variable $Z,$ and $V$. 
Then we have the following relation between the cumulative distribution functions,
\begin{align*}
F_Z(Z \leq U  \mathfrak{v}_{\alpha}) &= \mathds{P}(Z \leq  U  \mathfrak{v}_{\alpha}) \\
&= \mathds{P}(UV\leq  U  \mathfrak{v}_{\alpha},U\geq 0) +\mathds{P}(UV\leq U  \mathfrak{v}_{\alpha},U\leq 0) \\
&= \int_{0}^{\infty} f_U(u)\int_{-\infty}^{\mathfrak{v}_{\alpha}} f_V(v) dvdu \\
&= \int_{0}^{\infty} f_U(u)  F_V(V\leq \mathfrak{v}_{\alpha}) du \\
&= F_V(V\leq \mathfrak{v}_{\alpha}) \\
&= \alpha. 
\end{align*}
Thus $F_Z(Z \leq  \mathfrak{z}_{\alpha})=F_Z(Z \leq U  \mathfrak{v}_{\alpha})=F_V(V\leq \mathfrak{v}_{\alpha})$. \\
Analogously, let $Z= \tilde{T}_{\alpha,n}$ and $V=\hat{T}_{\alpha,n}$.
Then $\rho_n(\mathcal{F}_1,\phi_{\alpha},\delta) = \rho_n(\mathcal{F}_1,\hat{\phi}_{\alpha},\delta)$ is satisfied.
\end{proof}
\begin{proposition}\label{prop:Baraud}
 Let
\[ \rho^2_{N_n} = \sqrt{2 \log (1+4(1-\alpha_{\mu,n}-\beta)^2){N_n}} \sigma^2. \]
Then, for all $\rho \leq \rho_{N_n}$
\[ \beta(\{\{Y_i, i> t\} \sim {\mathcal{F}_1} , \| \mu_{gap,n}\| =\rho\}) \geq \delta. \]
\end{proposition}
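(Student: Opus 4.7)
The plan is to invoke the classical Bayes-prior minimax lower-bound argument of \cite{baraud2002non,baraud2003adaptive} on the simplified test $\hat{\phi}_\alpha$. First, I would apply Lemma~\ref{lemma:simpleTn} to replace the Fisher-type test based on $T_{\mu,n}$ by the equivalent test $\hat\phi_\alpha$ based on $\hat T_{\mu,n}=\|W_{gap,n}\|^2-\sigma^2\chi^2_{N_n}(\alpha_{\mu,n})$. Since the $Y_i$ are Gaussian, $\|W_{gap,n}\|^2/\sigma^2$ is (up to a known scaling) a non-central $\chi^2_{N_n}$ with non-centrality parameter proportional to $\|\mu_{gap,n}\|^2/\sigma^2$, so the problem is equivalent to the canonical Gaussian-shift test: observe $X\sim\mathcal{N}(\theta,\sigma^2\mathbb{I}_{N_n})$ after an orthogonal reparametrisation of the gap vector, and decide between $H_0:\theta=0$ and $H_1:\theta\in S_\rho:=\{\theta\in\mathbb{R}^{N_n}:\|\theta\|=\rho\}$.

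Next, I would place on $S_\rho$ the uniform prior $\pi$ and form the mixture $P_\pi=\int P_\theta\,d\pi(\theta)$. The standard two-point minimax inequality (Lemma~1 of \cite{baraud2002non}) then asserts that, for every level-$\alpha_{\mu,n}$ test $\phi$,
\[
\sup_{\theta\in S_\rho}P_\theta[\phi=0]\;\ge\;1-\alpha_{\mu,n}-\tfrac{1}{2}\bigl(E_0[L_\pi^2]-1\bigr)^{1/2},
\]
where $L_\pi=dP_\pi/dP_0$. Since $\beta(\mathcal{F}_1)$ is an infimum over tests of this very supremum, controlling $E_0[L_\pi^2]$ is what remains.

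For that, a direct Fubini computation with the Gaussian likelihood yields
\[
E_0[L_\pi^2]=\iint\exp\!\bigl(\langle\theta_1,\theta_2\rangle/\sigma^2\bigr)\,d\pi(\theta_1)\,d\pi(\theta_2),
\]
and for the sphere-uniform prior in dimension $N_n$ one has the classical estimate $E_0[L_\pi^2]\le\exp\!\bigl(\rho^4/(2N_n\sigma^4)\bigr)$, obtainable either by dominating $\pi$ by $\mathcal{N}(0,(\rho^2/N_n)\mathbb{I}_{N_n})$ and using the Gaussian convolution formula, or by the direct series expansion of the moment generating function of $\langle\theta_1,\theta_2\rangle$ as carried out in \cite{baraud2002non}.

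Finally, imposing the desired Type~II bound $\beta(\mathcal{F}_1)\ge\delta$ reduces to the sufficient condition $E_0[L_\pi^2]-1\le 4(1-\alpha_{\mu,n}-\delta)^2$, and combining with the exponential estimate leads to $\rho^4\le 2N_n\sigma^4\log\!\bigl(1+4(1-\alpha_{\mu,n}-\delta)^2\bigr)$, i.e.\ $\rho\le\rho_{N_n}$, which is precisely the claim. The main obstacle is the first step, namely the reduction to a genuine Gaussian-shift model despite the quadratic nature of $\|W_{gap,n}\|^2$; once the rotational invariance of the sum-of-squares together with the Gaussian assumption on the $Y_i$ are exploited to absorb the gap structure into a single $N_n$-dimensional mean shift of known norm, the remaining likelihood-ratio calculation and algebraic rearrangement are routine.
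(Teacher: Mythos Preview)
Your proposal is correct and follows essentially the same Baraud-type Bayes-prior argument as the paper: reduce to the simplified statistic, place a least-favourable prior on $\{\|\mu_{gap,n}\|=\rho\}$, bound the minimax type-II error below by $1-\alpha_{\mu,n}-\tfrac12(E_0[L_\pi^2]-1)^{1/2}$, and control $E_0[L_\pi^2]\le\exp(\rho^4/(2N_n\sigma^4))$. The only real difference is the choice of prior: you take the uniform measure on the sphere $S_\rho$, whereas the paper uses the Rademacher prior $\sum_{j\le N_n}\lambda\epsilon_j e_j$ with $\lambda=\rho/\sqrt{N_n}$, for which $E_0[L_{\mu_\rho}^2]=\cosh(\lambda^2)^{N_n}\le\exp(\rho^4/(2N_n))$ follows from $\cosh(x)\le e^{x^2/2}$. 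Both priors are supported on the correct set and yield the identical exponential bound, so the difference is purely cosmetic; the Rademacher choice makes the likelihood ratio and its second moment explicit products, which is slightly more elementary than invoking the Gaussian-domination or series-expansion route you mention for the spherical prior.
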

According to \cite{baraud2002non}, whatever the level-$\alpha$ test $\hat{\phi}_{\alpha_n}$, there exist some observation $\{Y_i, i> t\}$ satisfying $\| \mu_{gap,n}\| =\rho_{D_n}\ $ for which the error of the second kind $P[\hat{\phi}_{\alpha}=0] $ is at least $\delta$. This implies the lower bound, 
\[ \rho_n(\mathcal{F}_1,\hat{\phi}_{\alpha},\delta) \geq \rho_{D_n}. \]
\begin{proof}
The idea of the proof is based on \cite{baraud2002non}.
Let $Y_i, i=1,\ldots,n \sim \mathcal{F}_0=\mathcal{N}(0, I_d),  Y_i, i =n+1,\ldots,2n \sim \mathcal{F}_1$.
Let $\mu_{\rho}$ be some joint probability measure on 
\[ \mathcal{F}_1[\rho] = \{ \| \mu_{gap,n}\| =\rho \}. \]
Setting  $P_{\mu_{\rho}} = \int P d\mu_{\rho}$ and denoting by $\Hat{\Phi}_{\alpha}$ the set of level-$\alpha$ tests, we have
\begin{align*}
\beta(\mathcal{F}_1[\rho]) &=\inf_{\hat{\phi}_{\alpha} \in \Hat{\Phi}_{\alpha}} \sup_{ \mathcal{F}_1[\rho] } P[\phi_{\alpha}=0] \\
& \geq \inf_{\hat{\phi}_{\alpha} \in \Hat{\Phi}_{\alpha}} P_{\mu_{\rho}} [\phi_\alpha =0] \\ 
 &\geq 1-\alpha - \sup_{A|P_0(A) \leq \alpha} | P_{\mu_{\rho}} (A) -P_0(A)| \\
 &\geq 1-\alpha - \sup_{A\in \mathcal{A}} | P_{\mu_{\rho}} (A) -P_0(A)| \\
 &= 1- \alpha - \frac{1}{2}\| P_{\mu_{\rho}} - P_0 \|,
\end{align*}
where  $\|P_{\mu_{\rho}} - P_0 \|$ denotes the total variation nor between the probabilities $P_{\mu_{\rho}}$ and $P_0$.  Assume $P_{\mu_{\rho}}$ is absolutely continuous with respect to $P_0$.  We denote 
\[ L_{\mu_{\rho}}(y)=\frac{dP_{\mu_{\rho}}}{dP_0},  \]
then
\begin{align*}
 \|P_{\mu_{\rho}} -P_0 \| &= \int |L_{\mu_{\rho}}(y)-1|dP_0(y), \\
 \mathbb{E}_0[L_{\mu_{\rho}}(y)-1] &\leq (\mathbb{E}_0[L^2_{\mu_{\rho}}(y)]-1)^{1/2}. 
\end{align*}
We obtain 
\begin{align*} \label{eq:beta}
 \beta(\mathcal{F}_1[\rho]) &\geq 1- \alpha-\frac{1}{2}(\mathbb{E}_0[L^2_{\mu_{\rho}}(y)]-1)^{1/2}\\
 &\geq 1- \alpha-(\mathbb{E}_0[L^2_{\mu_{\rho}}(y)]-1)^{1/2}\\
 &\geq 1 - \alpha-\eta,
\end{align*}
where we set $\eta  \geq (\mathbb{E}_0[L^2_{\mu_{\rho}}(y)]-1)^{1/2}$, equivalently,  $\mathbb{E}_0[L^2_{\mu_{\rho}}(y)] \geq 1+ {\eta}^2.$

Next step is to find some $\rho^*(\eta)$ such that for all $\rho \leq \rho^*(\eta)$,
\begin{equation}\label{eq:Erho}
\mathbb{E}_0[L^2_{\mu_{\rho}}(y)] \geq 1+ {\eta}^2,
\end{equation}

so that 
\[ \beta(\mathcal{F}_1[\rho]) \geq 1-\alpha -\eta =\beta \]
is satisfied.

Let $\epsilon=(\epsilon_j)_{j\in I}, I=\{1,...,N_n\}$ be a sequence of Rademacher random variables, i.e., for each $m$, $\epsilon_j$ are independent and identically distributed random variables taking values form $\{-1,1\}$ with probability $\frac{1}{2}$.  Let $\rho$ be given and $\mu_{\rho}$ be the distribution of the random variable $\sum_{j\in I} \lambda \epsilon_j e_j$, where $\lambda=\rho/\sqrt{N_n}.$ Clearly $\mu_{\rho}$ supports $\mathcal{F}_1[\rho]$.  We derive $L_{\mu_{\rho}}$ as
\begin{align*}
L_{\mu_{\rho}}(y)&=\frac{dP_{\mu_{\rho}}}{dP_0} \\
&= \mathbb{E}_\epsilon \Bigg[ \frac{\exp(-\frac{1}{2}\sum_{j\in I}(y_i-\lambda \epsilon_j)^2}{\exp(-\frac{1}{2}\sum_{j\in I}y^2_i)} \Bigg] \\
&=\mathbb{E}_\epsilon \Bigg[ \exp(-\frac{1}{2}\rho^2+\lambda\sum_{j\in I} \epsilon_j y_j) \Bigg] \\
&= e^{-\rho^{2} /2}  \prod_{j\in I} \cosh(\lambda y_j),
\end{align*}
where $y_i \sim \mathbf{N}(0,1)$.
Next, we compute $\mathbb{E}_0[L^2_{\mu_{\rho}(y)}]$
\begin{align*}
\mathbb{E}_0[L^2_{\mu_{\rho}(y)}] &= e^{-\rho^{2} /2}  \mathbb{E}_0 \Bigg[ \prod_{j\in I} \cosh^2(\lambda y_j) \Bigg]\\
&= {\cosh(\lambda^2)}^{N_n} \\
& \leq (\exp(\frac{\lambda^4}{2}))^{N_n}\\
&= \exp (\frac{\rho^4}{2N_n}).
\end{align*}
By Equation \ref{eq:Erho}, we set 
\begin{align*}
\ln \mathbb{E}_0[L^2_{\mu_{\rho}(y)}]  \leq \frac{\rho^4}{2N_n} = \ln(1+ \eta^2).
\end{align*}

Therefore, for $\rho \leq \rho_{N_n} = \sqrt{2N_n \ln (1+ \eta^2)}$,  $\eta = 1- \alpha-\beta$, 
we ensure that 
\[ \beta(\mathcal{F}_1[\rho]) \geq 1-\alpha -\eta =\beta. \]

\end{proof}

\textbf{Proof of Proposition \ref{Prop:lower_bound}}\label{Proof:lower_bound}
\begin{proposition}[$(\alpha,\beta)$ minimum radius]
Let $\beta \in (0, 1-\alpha_{\mu,n})$ and fix some window size $n \in \mathfrak{N}$
\[\theta(\alpha_{\mu,n}, \beta) = \sqrt{2 \log (1+4(1-\alpha_{\mu,n}-\beta)^2)}. \]
If
$\| \mu_{gap,n}\|^2 \leq \theta(\alpha_{\mu,n}, \beta) \sqrt{nd} \sigma^2 $
then $\mathds{P}(T_{\mu,n}(t) \geq 0 ) \leq 1- \beta$.
\end{proposition}
\begin{proof}
The proof of the result for the test statistics $T_{\mu,n}$ is based on analogous arguments assuming $\sigma^2=1$.   Since the distribution of the numerator and de-numerator are independent and $\chi^2$ distribution is a non-negative distribution, by Lemma \ref{lemma:simpleTn},  it is equivalent to consider the following distribution: 
For fix window size $n \in \mathfrak{N}$, we consider the test statistic
\[\hat{T}_{\mu, n} = \| W_{gap,n}\|^2 - \chi^2_{N_n}(\alpha_{\mu,n}). \]
By Proposition \ref{prop:Baraud}, for all $Y_i \in R^d$ such that
\begin{equation}\label{eq:Tn}
\| \mu^{gap}_{G_n}\|^2 \leq \theta(\alpha_{\mu,n}, \beta) \sqrt{nd},
\end{equation}
we then obtain $\mathds{P}(T_{\mu,n} \leq 0) \geq \beta.$

\end{proof}

\section{Extending the power of test to unknown distributions - Gaussian approximation}\label{Sec:nonGaussian}
\subsection{Graph-spanning ratio (GSR): ratio of quadratic forms}
For further generalization to GSR test statistics of mean and variance, we consider the graph-spanning ratio in the quadratic form of $\| W_{G_1}\|^2 / \|W_{G_2 }\|^2$, where $\| W_{G_1}\|^2$ and $\| W_{G_2}\|^2$ can be replaced with the quadratic terms used in GSRs for mean or variance. Recall that the graph-spanning for a graph $G_n$ is defined as
$\| W_{G_n}\|^2= \sum_{i,j \in G_n} \|Y_i-Y_j \|^2 I_{i,j}$. This represents the sum of the quadratic Euclidean distances between some of the nodes in the graph $V[n]$. Note that the graph $V[n]$ consists of $n$ i.i.d. observations, $Y_1, \ldots, Y_n$, where $Y_i$ follows an unknown distribution $\mathcal{F}$. For simplicity, assume $Y_1, \ldots, Y_n$ are i.i.d. centered random vectors in $\mathbb{R}^d$, with $Y_i= ( y_{i,1}, \ldots, y_{i,d})^T$. Define a pooled random vector $\mathbb{Y} \in \mathbb{R}^{n d}$ by concatenating the random vectors $Y_1, \ldots, Y_n$, that is
\begin{equation}
\label{eq:baseVector}
   \mathbb{Y} = ( y_{1,1}, \ldots, y_{1,d}, y_{2,1},\ldots,y_{2,d},\ldots, y_{n,1}, \dots, y_{n,d} )^T.
\end{equation}
Let $B_{i,j}$ be a $\mathbb{R}^{nd\times nd}$ matrix. We write
$Y_i -Y_j =  \mathbb{Y}^T B_{i,j} \mathbb{Y}$.

Let us also define a positive definite matrix $\mathbb{B} \in \mathbb{R}^{nd\times nd}$,
where $\mathbb{B}$ is the sum of $B_{i,j}$ times the connectivity indicator $I_{i,j}$ as
\begin{equation}
    \mathbb{B}= \sum_{i,j \in G_n}  B_{i,j} I_{i,j}.
\end{equation}
Then the quadratic graph-spanning can be decomposed in matrix form as
\begin{equation}
\label{eq:SumSquaredErrors}
\| W_{G_n}\|^2= \sum_{i,j \in G_n} \|Y_i-Y_j \|^2 I_{i,j} = \sum_{i,j \in G_n} \mathbb{Y}^T ( B_{i,j} I_{i,j}) \mathbb{Y}
=\mathbb{Y}^T \mathbb{B} \mathbb{Y}.
\end{equation}
Under the null hypothesis (i.e., no change point), the observations $Y_1, Y_2, \ldots, Y_n$ are i.i.d. from the same null distribution. Since the quadratic graph-spanning weights $\|W_{G_1}\|^2 = \sum_{i,j \in G_1} \|Y_i - Y_j\|^2 \cdot I_{i,j}$ and $\|W_{G_2}\|^2 = \sum_{i,j \in G_2} \|Y_i - Y_j\|^2 \cdot I_{i,j}$ are derived from the same observations $Y_1, Y_2, \ldots, Y_n$, they may be correlated depending on the graph types. However, because $\|W_{G_1}\|^2$ and $\|W_{G_2}\|^2$ are both based on the same observations, they can be expressed in quadratic forms of the same base vector $\mathbb{Y}$, as specified in Equation \ref{eq:baseVector}.
\begin{equation}
\|W_{G_1}\|^2 = \mathbb{Y}^T \mathbb{B}_1 \mathbb{Y}, \ \ \ \ \ \
\|W_{G_2}\|^2 = \mathbb{Y}^T \mathbb{B}_2 \mathbb{Y}.
\end{equation}
Equivalently, let us define Gaussian quadratic forms based on $ \mathbb{B}_1 $ and $ \mathbb{B}_2$.

Let $\gamma_1 \ldots \gamma_n$ be i.i.d. standard normal in $\mathbb{R}^d$ for $d \le \infty$. Denote $\gamma_i = (\gamma_{i,1}, \ldots, \gamma_{i,d})^T$.  Also let $\Gamma = (\gamma_{1,1}, \ldots, \gamma_{1,d},\ldots, \gamma_{n,1}, \ldots, \gamma_{n,d})^T$ and let $\mathbb{B} \in \mathbb{R}^{nd \times nd}$ be a positive definite matrix. Denote the Gaussian quadratic forms as $ ({\Gamma}/\sqrt{n})^T\mathbb{B}_1(\Gamma/\sqrt{n})$ and $({\Gamma}/\sqrt{n})^T\mathbb{B}_2(\Gamma/\sqrt{n})$.
\subsection{Sub-Gaussian condition}
Let us assume the following condition on the random vector $Y$: \\
\noindent \hypertarget{SubGaussian}{[\textcolor{blue}{Sub-Gaussian condition}]}
Let $Y \in \mathbb{R}^d$ satisfy $\mathbb{E}(Y) =0$. Let $\Var(Y) \le \mathbb{I}_d$. For some $C_{Y} >0$ and $g>0$, assume that the characteristic function of $Y$ is well defined and fulfills:
\begin{equation}
    |\log \mathbb{E}e^{i\langle  u, Y \rangle} | \le \frac{C_{Y}\|u\|^2}{2}, \ \ \ \ \ \ u \in \mathbb{R}^d, \  \|u\| < g,
\end{equation}
where $i=\sqrt{-1}$.
The sub-Gaussian condition states that the logarithm of the characteristic function is bounded on a ball. We have the following setting similar to \cite{spokoiny2023concentration} except for the characteristic function.
For $w \in \mathbb{R}^d $,  define a measure $\mathds{P}_w$ and its corresponding expectation $\mathbb{E}_w$. For any random variable $\eta$,
\[ \mathbb{E}_{\sqrt{i}\omega} (\eta) \myeq \frac{\mathbb{E}\left( \eta e^{\langle \sqrt{i}\omega,Y \rangle}\right)}{\mathbb{E} e^{\langle \sqrt{i}\omega, Y \rangle}}.
\]
Moreover, let us fix some $g > 0$ and define $\tau_3$, and $\tau_4$ as
\begin{align}\label{eq:tau3&4}
    &\tau_3 \myeq \sup_{\|w\| \le g} \sup_{u \in \mathbb{R}^p}  \frac{1}{\|u\|^3} |\mathbb{E}_{\sqrt{i}\omega} \langle \sqrt{i}u, Y-\mathbb{E}_{\sqrt{i}\omega} Y \rangle^3 |, \\
    &\tau_4 \myeq \sup_{\|w\| \le g} \sup_{u \in \mathbb{R}^p}  \frac{1}{\|u\|^4} |\mathbb{E}_{\sqrt{i}\omega} \langle \sqrt{i}u, Y-\mathbb{E}_{\sqrt{i}\omega} Y \rangle^4 - 3\{ \mathbb{E}_{\sqrt{i}\omega} \langle \sqrt{i}u, Y - \mathbb{E}_{\sqrt{i}\omega} Y \rangle^2\}^2|.
\end{align}
$\tau_3, \tau_4$ are typically very small and depend on the distribution of $Y$ and $g$.
For the approximation of the distribution, we consider the joint distribution of $\|W_{G_1}\|^2$, $\|W_{G_2}\|^2$. Let $i \Lambda_1, i \Lambda_2$ be complex numbers. The characteristic function of the joint distribution $\| W_{G_1}\|^2$ and $\| W_{G_2}\|^2$ is $\mathbb{E} \exp \left(i \Lambda_1 \|W_{G_1}\|^2 + i \Lambda_2 \|W_{G_2}\|^2  \right)$.

\begin{corollary}[Gaussian approximation of the graph-spanning ratio]
\label{corollary:F-approx}
  Let $Y_1, \ldots, Y_n$ $\in \mathbb{R}^d$ be centered i.i.d. random vectors satisfying $\mathbb{E} Y_i=0$ and $\Var(Y_i)\le \mathbb{I_d}$, and the \hyperlink{SubGaussian}{sub-Gaussian condition}. Let $\mathbb{Y}$ be defined as in Equation \ref{eq:baseVector}. Let $\|W_{G_1}\|^2 = \mathbb{Y}^T \mathbb{B}_1 \mathbb{Y}$ and $\|W_{G_1}\|^2 = \mathbb{Y}^T \mathbb{B}_2 \mathbb{Y}$, where $\mathbb{B}_1$, and $\mathbb{B}_2$ are positive definite matrices. Denote $\lambda_1 =\| \mathbb{B}_1 \|_{op}$, and $\lambda_2 =\| \mathbb{B}_2\|_{op}$. Assume $n \varrho^2 \ge 3 d$, where $\varrho$ is given in Equation \ref{eq:c3}, on page \pageref{eq:c3}. If $\Lambda_1 < n(2\lambda_1)^{-1}, \Lambda_2 < n(2\lambda_2)^{-1}$ satisfy $C_Y |\Lambda_1| < 1/6$ and $C_X |\Lambda_2| < 1/6$, then it holds
   \begin{equation}
      \mathbb{E} \exp \left( \frac{i \Lambda}{2n} \|W_{G_1}\|^2+ \frac{i \Lambda}{2n} \|W_{G_2}\|^2\right) \approx  \mathbb{E} \left[ \exp \left(\frac{i\Lambda_1}{2n} \Gamma^T \mathbb{B}_1 \Gamma + \frac{i\Lambda_2}{2n} \Gamma^T \mathbb{B}_2 \Gamma  \right)  \right], 
   \end{equation}
  under $d \gg 1$, and $ d^2 \ll n$.
\end{corollary}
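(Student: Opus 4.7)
The plan is to analyze the joint characteristic function $\Psi(\Lambda_1,\Lambda_2) = \mathbb{E}\exp\bigl(\tfrac{i\Lambda_1}{2n}\mathbb{Y}^T\mathbb{B}_1\mathbb{Y} + \tfrac{i\Lambda_2}{2n}\mathbb{Y}^T\mathbb{B}_2\mathbb{Y}\bigr)$ by linearizing each quadratic form via a Gaussian (Hubbard--Stratonovich) representation, then invoking the \hyperlink{SubGaussian}{sub-Gaussian condition} on each component to reduce to the purely Gaussian characteristic function. The corollary should follow from the single-quadratic-form approximation theorem that the excerpt attributes to \cite{spokoiny2023concentration}, extended by a joint decomposition.

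First, I would write the Gaussian integral identity
\[
\exp\!\Bigl(\tfrac{i\Lambda_k}{2n}\mathbb{Y}^T\mathbb{B}_k\mathbb{Y}\Bigr) = \mathbb{E}_{\xi_k}\exp\!\Bigl(\sqrt{i\Lambda_k/n}\;\xi_k^T\mathbb{B}_k^{1/2}\mathbb{Y}\Bigr),
\]
where $\xi_1,\xi_2\sim\mathcal{N}(0,\mathbb{I}_{nd})$ are independent auxiliary Gaussians, with the complex square root interpreted in the standard branch. Fubini then reduces $\Psi$ to
\[
\Psi(\Lambda_1,\Lambda_2) = \mathbb{E}_{\xi_1,\xi_2}\,\mathbb{E}_{\mathbb{Y}}\exp\!\bigl(\langle w(\xi_1,\xi_2),\mathbb{Y}\rangle\bigr),
\]
with $w=\sqrt{i\Lambda_1/n}\,\mathbb{B}_1^{1/2}\xi_1+\sqrt{i\Lambda_2/n}\,\mathbb{B}_2^{1/2}\xi_2$. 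Because $Y_1,\dots,Y_n$ are i.i.d., the inner expectation factorises as $\prod_{i=1}^n \mathbb{E}\exp(\langle w_i,Y_i\rangle)$, where $w_i\in\mathbb{R}^d$ is the $i$-th block of $w$. The conditions $\Lambda_k<n(2\lambda_k)^{-1}$ and $C_Y|\Lambda_k|<1/6$ guarantee that, on the Gaussian-typical event $\|\xi_k\|^2\lesssim nd$, each $\|w_i\|$ stays below the radius $g$ on which the sub-Gaussian characteristic function bound applies.

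Next I would Taylor expand $\log\mathbb{E}e^{\langle w_i,Y_i\rangle}$ at zero. By the definition of $\tau_3$ and $\tau_4$ in Equation~\eqref{eq:tau3&4} and $\mathrm{Var}(Y_i)\le \mathbb{I}_d$, the second order term is $\tfrac12\|w_i\|^2 + O(\tau_2\|w_i\|^2)$ and the cumulants of order three and four contribute $O(\tau_3\|w_i\|^3+\tau_4\|w_i\|^4)$. Summing over $i=1,\dots,n$ produces the quadratic piece $\tfrac12\sum_i\|w_i\|^2=\tfrac12 w^T w$, which, once reassembled, is exactly $\tfrac12(i\Lambda_1/n)\xi_1^T\mathbb{B}_1\xi_1+\tfrac12(i\Lambda_2/n)\xi_2^T\mathbb{B}_2\xi_2$ plus a cross term that integrates to zero against the independent Gaussians $\xi_1,\xi_2$. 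Reversing the Gaussian integral representation on this leading piece reproduces the right-hand side
\[
\mathbb{E}\exp\!\Bigl(\tfrac{i\Lambda_1}{2n}\Gamma^T\mathbb{B}_1\Gamma+\tfrac{i\Lambda_2}{2n}\Gamma^T\mathbb{B}_2\Gamma\Bigr),
\]
and the remainder collects the higher-order cumulant contributions.

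The main obstacle is controlling this remainder uniformly in $\xi_1,\xi_2$. The naïve per-component bound $O(\tau_3\|w_i\|^3+\tau_4\|w_i\|^4)$ summed over $i$ scales like $n\tau_3\bigl(\max_i\|w_i\|\bigr)^3+n\tau_4\bigl(\max_i\|w_i\|\bigr)^4$; since $\|w_i\|^2$ is on the order of $(\lambda_1+\lambda_2)d/n$ on the typical Gaussian event, this yields a correction of order $d^{3/2}/\sqrt{n}+d^2/n$, which vanishes precisely under the stated scaling $d\gg 1$, $d^2\ll n$ together with the geometric assumption $n\varrho^2\geq 3d$ (used to ensure that $\xi_1,\xi_2$ stay inside the tractable region without losing mass). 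I would therefore spend most of the effort on (i) the tilted-measure Taylor expansion using $\mathbb{E}_{\sqrt{i}\omega}$, borrowed from \cite{spokoiny2023concentration}, to handle complex arguments cleanly, and (ii) a truncation argument showing that the contribution of the atypical event $\{\|\xi_k\|\text{ large}\}$ is negligible under the prescribed bounds on $|\Lambda_k|$.
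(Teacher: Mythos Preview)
Your overall strategy is workable, but you miss the one-line reduction that the paper uses and you make a genuine error in the cross-term step.

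\textbf{The paper's route.} The paper never introduces two auxiliary Gaussians. It simply observes that
\[
\frac{i\Lambda_1}{2n}\mathbb{Y}^T\mathbb{B}_1\mathbb{Y}+\frac{i\Lambda_2}{2n}\mathbb{Y}^T\mathbb{B}_2\mathbb{Y}
=\frac{i\Lambda}{2n}\mathbb{Y}^T\mathbb{B}\mathbb{Y},
\qquad
\mathbb{B}\myeq\frac{\Lambda_1}{\Lambda}\mathbb{B}_1+\frac{\Lambda_2}{\Lambda}\mathbb{B}_2,\ \ \Lambda=|\Lambda_1|+|\Lambda_2|,
\]
so the joint characteristic function is already a \emph{single} quadratic form in $\mathbb{Y}$. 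One then applies the single-form approximation result (Theorem~\ref{Theorem:nd2criteria} via Corollary~\ref{Corollary:ndcriteria2}) directly, and the only thing left to check is that $p_Q\approx d$ for the graph matrices $\mathbb{B}_1,\mathbb{B}_2$ (done in Corollary~\ref{corollary:WGn}). The assumptions $C_Y|\Lambda_k|<1/6$ and $\Lambda_k<n(2\lambda_k)^{-1}$ are designed precisely so that the combined $\Lambda,\mathbb{B}$ satisfy the hypotheses of the single-form theorem. This collapses the corollary to three lines.

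\textbf{The gap in your argument.} Your two Hubbard--Stratonovich linearisations with independent $\xi_1,\xi_2$ do lead to the correct joint characteristic function, but your claim that the cross term $2\sqrt{(i\Lambda_1/n)(i\Lambda_2/n)}\,\xi_1^T\mathbb{B}_1^{1/2}\mathbb{B}_2^{1/2}\xi_2$ ``integrates to zero against the independent Gaussians'' is false: it sits inside an exponential, and dropping it would yield
\[
\det(\mathbb{I}-i\Lambda_1\mathbb{B}_1/n)^{-1/2}\det(\mathbb{I}-i\Lambda_2\mathbb{B}_2/n)^{-1/2},
\]
i.e.\ the product of marginals with \emph{independent} $\Gamma_1,\Gamma_2$, not the target $\mathbb{E}\exp\bigl(\tfrac{i\Lambda_1}{2n}\Gamma^T\mathbb{B}_1\Gamma+\tfrac{i\Lambda_2}{2n}\Gamma^T\mathbb{B}_2\Gamma\bigr)$ with a common $\Gamma$. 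The cross term is exactly what encodes the dependence. If you keep it, write $w=A\xi$ with $A=[A_1\ A_2]$ and $\xi=(\xi_1^T,\xi_2^T)^T$, then $\mathbb{E}_\xi\exp(\tfrac12\xi^TA^TA\xi)=\det(\mathbb{I}_{2nd}-A^TA)^{-1/2}=\det(\mathbb{I}_{nd}-AA^T)^{-1/2}$ by Sylvester, and $AA^T=i\Lambda_1\mathbb{B}_1/n+i\Lambda_2\mathbb{B}_2/n$, which is the right determinant. So your route can be repaired, but it is strictly more work than the paper's: you are re-proving Corollary~\ref{Corollary:ndcriteria2} from scratch instead of reducing to it.
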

See proof in Section \ref{Proof:F-approx} on page \pageref{Proof:F-approx}. Under the criteria of $d \gg 1$ and $d^2 \ll n$, the joint characteristic function of the sub-Gaussian quadratic forms approximates that of the Gaussian case. By contraction, the distribution of the sub-Gaussian quadratic ratio approximates that of the Gaussian case. This result can be applied to quadratic ratios that meet the above criteria. Hence, under the sub-Gaussian condition, $d \gg 1$ and $d^2 \ll n$, the distribution of the GSR test statistic approximates the distribution of the ratio of Gaussian quadratic forms, that is,
\begin{equation*}
    \frac{\|W_{G_1}\|^2}{\|W_{G_2}\|^2} \myApproxD \frac{\Gamma^T \mathbb{B}_1 \Gamma}{\Gamma^T \mathbb{B}_2 \Gamma}\myeqD \mathcal{F} \ \text{distribution}.
\end{equation*}
This result implies that if the constant $g$ in \hyperlink{SubGaussian}{sub-Gaussian condition} is sufficiently large and $d \gg 1$ and $d^2 \ll n$, the tail behavior of $\|W_{G_1}\|^2 / \|W_{G_2} \|^2$ is similar to that of the Gaussian case. When the criteria are satisfied, the theoretical results derived based on Gaussian data can be extended to observations of non-Gaussian data.

\subsection{Gaussian approximation of quadratic graph-spanning}\label{sec:GaussianApprox}

Our aim is to approximate the distribution of the quadratic graph-spanning $\| W_{G_n}\|^2$ which is consisted of $n$ i.i.d. random vectors to that of the Gaussian case in a non-asymptotic way. By the Laplace approximation from \cite{spokoiny2023concentration}, we can evaluate the approximation errors and find the conditions under which the distribution approximates the Gaussian case.

\subsubsection{Characteristic function of the Gaussian quadratic form}
For a later reference, let us first present the characteristic function of the sum of i.i.d. Gaussian quadratic forms. Define the imaginary part $i=\sqrt{-1}$, we state the following Lemma.
 
\begin{lemma}
\label{Lemma:GaussianSum}
Let $\gamma_1 \ldots \gamma_n$ be i.i.d. standard normal in $\mathbb{R}^d$ for $d \le \infty$. Denote $\gamma_i = (\gamma_{i,1}, \ldots, \gamma_{i,d})^T$.  Also, let $\Gamma = (\gamma_{1,1}, \ldots, \gamma_{1,d},\ldots, \gamma_{n,1}, \ldots, \gamma_{n,d})^T$ and let $\mathbb{B} \in \mathbb{R}^{nd \times nd}$ be a positive definite matrix. Denote the Gaussian quadratic form as $ ({\Gamma}/\sqrt{n})^T\mathbb{B}(\Gamma/\sqrt{n})$. For $\Lambda \in \mathbb{R}$, the characteristic function of the Gaussian quadratic form is
\begin{equation}
\label{eq:GaussianSumQuad}
    \mathbb{E}\exp{ \left( \frac{i \Lambda}{2}({\Gamma}/\sqrt{n})^T \mathbb{B} (\Gamma/\sqrt{n}) \right)}  = \det \left(\mathbb{I}_{nd} -  i \Lambda  \mathbb{B}/n\right)^{-1/2}.
\end{equation}
\end{lemma}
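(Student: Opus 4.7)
The plan is to prove this via spectral decomposition of $\mathbb{B}$ combined with the rotational invariance of the standard Gaussian distribution, reducing the multivariate quadratic form to a product of independent one-dimensional integrals whose characteristic functions we know in closed form.

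First I would observe that, since the coordinates $\gamma_{i,j}$ are i.i.d.\ standard normal, the concatenated vector $\Gamma \in \mathbb{R}^{nd}$ has distribution $\mathcal{N}(0, \mathbb{I}_{nd})$. Because $\mathbb{B}$ is symmetric and positive definite, the spectral theorem provides an orthogonal matrix $U \in \mathbb{R}^{nd \times nd}$ and eigenvalues $\lambda_1, \ldots, \lambda_{nd} > 0$ with $\mathbb{B} = U^T D U$, where $D = \mathrm{diag}(\lambda_1, \ldots, \lambda_{nd})$. Setting $Z = U\Gamma$, orthogonal invariance of the standard Gaussian gives $Z \sim \mathcal{N}(0, \mathbb{I}_{nd})$, so the components $Z_1, \ldots, Z_{nd}$ are i.i.d.\ standard normal. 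Consequently
\[
\frac{\Lambda}{2n} \Gamma^T \mathbb{B}\, \Gamma = \frac{\Lambda}{2n} Z^T D Z = \sum_{j=1}^{nd} \frac{\Lambda \lambda_j}{2n} Z_j^2.
\]

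Next I would use independence to factor the expectation:
\[
\mathbb{E}\exp\!\left(\frac{i \Lambda}{2}(\Gamma/\sqrt{n})^T \mathbb{B}(\Gamma/\sqrt{n})\right) = \prod_{j=1}^{nd} \mathbb{E}\exp\!\left(\frac{i \Lambda \lambda_j}{2n} Z_j^2\right).
\]
For each factor I would invoke the standard one-dimensional identity for the characteristic function of a $\chi^2_1$ variable, namely $\mathbb{E}\exp(it Z_j^2) = (1 - 2it)^{-1/2}$, which follows from completing the square in the Gaussian integral (and choosing the principal branch of the square root, which is well defined because $\mathrm{Re}(1 - 2it) = 1 > 0$). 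Applying this with $t = \Lambda \lambda_j/(2n)$ yields
\[
\prod_{j=1}^{nd} \Bigl(1 - \frac{i \Lambda \lambda_j}{n}\Bigr)^{-1/2}.
\]

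Finally, I would recognise the product as the determinant of a diagonal matrix and undo the orthogonal change of basis: the eigenvalues of $\mathbb{I}_{nd} - i\Lambda \mathbb{B}/n$ are precisely $1 - i\Lambda \lambda_j/n$, so
\[
\prod_{j=1}^{nd} \Bigl(1 - \frac{i \Lambda \lambda_j}{n}\Bigr) = \det\!\bigl(\mathbb{I}_{nd} - i\Lambda \mathbb{B}/n\bigr),
\]
and taking the $-1/2$ power (with a consistent branch choice) delivers the stated formula. There is no real obstacle in this argument; the only point requiring a line of care is the branch of the complex square root for the determinant, but since each factor $1 - i\Lambda \lambda_j/n$ has strictly positive real part for all real $\Lambda$, the principal branch applies uniformly and the identity holds without ambiguity.
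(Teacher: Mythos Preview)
Your argument is correct and is the standard derivation of this classical identity: diagonalise $\mathbb{B}$, use rotational invariance of the standard Gaussian, factor into independent one-dimensional $\chi^2_1$ characteristic functions, and reassemble as a determinant. The care you take with the principal branch of the square root is appropriate.

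The paper, by contrast, does not give a self-contained proof at all: it simply rewrites $(\Gamma/\sqrt{n})^T\mathbb{B}(\Gamma/\sqrt{n})$ as $\langle \mathbb{B}\Gamma,\Gamma\rangle/n$ and then invokes Lemma~A.2 of Spokoiny (2023), which is precisely this Gaussian quadratic-form identity. So the difference is one of presentation rather than mathematical strategy: you supply the underlying computation that the cited lemma encapsulates. Your version has the advantage of being self-contained; the paper's has the advantage of brevity once the reference is accepted.
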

\begin{proof}
    The result can be obtained by applying Lemma A.2. from \cite{spokoiny2023concentration}.
\begin{align*}
     \mathbb{E}\exp{ \left( \frac{i \Lambda}{2}({\Gamma}/\sqrt{n})^T \mathbb{B} (\Gamma/\sqrt{n}) \right)}  
    = \mathbb{E}\exp{\left( \frac{i \Lambda}{2n}\langle \mathbb{B}\Gamma, \Gamma \rangle \right) }
    =\det \left(\mathbb{I}_{nd} -  i \Lambda  \mathbb{B}/n \right)^{-1/2}. 
\end{align*}
\end{proof}

\subsubsection{Characteristic function of the sub-Gaussian quadratic form}
We show that under the sub-Gaussian condition and certain criteria, the distributions of the sum of i.i.d. quadratic forms of random vectors are similar to that of the Gaussian case. We first approximate the characteristic function of the quadratic form for one sub-Gaussian random vector to the Gaussian case. The approximation error is derived based on the error bound of the local Laplace approximation of \cite{spokoiny2023concentration}. 

Let us assume the following condition on the random vector $X$:
\noindent \hypertarget{SubGaussianX}{[\textcolor{blue}{Sub-Gaussian condition}]}
Let $X \in \mathbb{R}^p$ satisfy $\mathbb{E}(X) =0$. Let $\Var(X) \le \mathbb{I}_p$. For some $C_{X} >0$ and $g>0$, assume that the characteristic function of $X$ is well defined and fulfills:
\begin{equation}
    |\log \mathbb{E}e^{i\langle  u, X \rangle} | \le \frac{C_{X}\|u\|^2}{2}, \ \ \ \ \ \ u \in \mathbb{R}^p, \  \|u\| < g.
\end{equation}

The sub-Gaussian condition states that the logarithm of the characteristic function is bounded on a ball. We have the following setting similar to \cite{spokoiny2023concentration} but instead for the characteristic function. Let $Q$ be a linear mapping $Q:\mathbb{R}^p \rightarrow \mathbb{R}^q$ and define
\begin{equation}
\label{eq:pQ}
 p_Q \myeq \frac{\mathbb{E} \| Q \gamma \|^2}{\|QQ^T \|} = \frac{tr\{ QQ^T\}}{\|QQ^T\|}= \frac{\|Q\|_{Fr}}{\|QQ^T\|}.   
\end{equation}
Define function $\phi_X(u)$ as
\[
\phi_{X}(u) \myeq \log e^{\sqrt{i} \langle u, X \rangle}.
\]
For $w \in \R $,  define a measure $\mathds{P}_w$ and its corresponding expectation $\mathbb{E}_w$. For any random variable $\eta$,
\[ \mathbb{E}_{\sqrt{i}\omega} (\eta) \myeq \frac{\mathbb{E}\left( \eta e^{\langle \sqrt{i}\omega,X \rangle}\right)}{\mathbb{E} e^{\langle \sqrt{i}\omega, X \rangle}}.
\]
Moreover, let us fix some $g > 0$ and define $\tau_3$, and $\tau_4$ as
\begin{align}\label{eq:tau3&4A}
    &\tau_3 \myeq \sup_{\|w\| \le g} \sup_{u \in \mathbb{R}^p}  \frac{1}{\|u\|^3} |\mathbb{E}_{\sqrt{i}\omega} \langle \sqrt{i}u, X-\mathbb{E}_{\sqrt{i}\omega} X \rangle^3 |, \\
    &\tau_4 \myeq \sup_{\|w\| \le g} \sup_{u \in \mathbb{R}^p}  \frac{1}{\|u\|^4} |\mathbb{E}_{\sqrt{i}\omega} \langle \sqrt{i}u, X-\mathbb{E}_{\sqrt{i}\omega} X \rangle^4 - 3\{ \mathbb{E}_{\sqrt{i}\omega} \langle \sqrt{i}u, X - \mathbb{E}_{\sqrt{i}\omega} X \rangle^2\}^2|.
\end{align}
$\tau_3, \tau_4$ are typically very small and they depend on the distribution of $X$ and $g$.
We now show  $ \mathbb{E} \{\exp{i \Lambda \| QX\|^2/2} \} \approx \det (\mathbb{I}_q - i \Lambda \mathbb{B})^{-1/2}$, where $\mathbb{B}=Q\Var(X)Q^T$.

Analogy to Proposition 4.1. in \cite{spokoiny2023concentration}, we can approximate the characteristic function of the sub-Gaussian quadratic form to that of the Gaussian case. 
\begin{theorem}
\label{theorem:CfSubGauss}[Approximate the c.f. of a quadratic form to Gaussian]
Let random vector $X \in \mathbb{R}^p$ satisfy $\mathbb{E}X=0$, $\Var(X) \le \mathbb{I}_p$ and the \hyperlink{SubGaussian}{sub-Gaussian condition}. For any linear mapping $Q:\mathbb{R}^p \rightarrow \mathbb{R}^q$. Define $\mathbb{B} = Q\Var(X)Q^T$, and $\lambda = \|B\|_{op}$. Also, let $g$ and $\tau_3$ of \ref{eq:tau3&4A} satisfy $g^2 \ge 3 p_Q$ and $g \tau_3 \le 2/3$. If $\Lambda \le \lambda^{-1}$ satisfies $C_{X} |\Lambda|\le 1/3$, then it holds
\begin{equation}
|\mathbb{E}\exp (i \Lambda\|QX\|^2/2) -\det(\mathbb{I}_q - i \Lambda \mathbb{B})^{-1/2}| \le (\diamondsuit +\rho_\Lambda) |\det(\mathbb{I}_q - i \Lambda \mathbb{B})^{-1/2}| + \S,
\end{equation}
for some $\diamondsuit$, $\rho_\Lambda$, and $\S$ giving explicitly in the proof.

Furthermore, under $p_Q \gg 1$, and $(\tau_3^2+\tau_4) p_Q^2 \ll 1$
\begin{equation}
\mathbb{E}\exp (i \Lambda\|QX\|^2/2) \approx \det(\mathbb{I}_q - i \Lambda \mathbb{B})^{-1/2}.
\end{equation}
\end{theorem}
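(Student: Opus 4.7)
The plan is to adapt the local Laplace approximation of Spokoiny (2023, Proposition 4.1) from the moment-generating-function setting to the characteristic-function setting used here. The driving identity is the Gaussian representation
\[
\exp(i\Lambda\|QX\|^2/2)=\mathbb{E}_\gamma\exp\bigl(\sqrt{i\Lambda}\,\langle\gamma,QX\rangle\bigr),
\]
where $\gamma\sim\mathcal{N}(0,\mathbb{I}_q)$ is independent of $X$ (since $\mathbb{E}_\gamma\exp(c\langle z,\gamma\rangle)=\exp(c^2\|z\|^2/2)$ with $c=\sqrt{i\Lambda}$). Exchanging the order of integration gives
\[
\mathbb{E}\exp(i\Lambda\|QX\|^2/2)=\mathbb{E}_\gamma\exp\bigl(\phi_X(\sqrt{\Lambda}\,Q^T\gamma)\bigr),
\]
with $\phi_X(u)=\log\mathbb{E}\exp(\sqrt{i}\langle u,X\rangle)$. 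In the Gaussian case $\phi_X(u)=i\|u\|^2/2$, and Lemma~\ref{Lemma:GaussianSum} yields exactly the target $\det(\mathbb{I}_q-i\Lambda\mathbb{B})^{-1/2}$; the task is therefore to control the deviation of the general $\phi_X$ from this Gaussian exponent.

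First I would split the outer $\gamma$-integral into a bulk region $\mathcal{B}=\{\|\sqrt{\Lambda}\,Q^T\gamma\|<g\}$ and its complement. On $\mathcal{B}$ the sub-Gaussian hypothesis is in force, so I Taylor-expand $\phi_X(u)$ to fourth order around $u=0$. The quadratic coefficient equals $-\tfrac12\langle\mathrm{Var}(X)u,u\rangle$ (from $\sqrt{i}^2=i$), so integrating this piece against $\gamma$ reproduces exactly $\det(\mathbb{I}_q-i\Lambda\mathbb{B})^{-1/2}$ by Lemma~\ref{Lemma:GaussianSum}. The cubic and quartic Taylor remainders are controlled by the tilted-moment quantities $\tau_3$ and $\tau_4$ defined via $\mathbb{E}_{\sqrt{i}\omega}$; the assumption $g\tau_3\le 2/3$ ensures that the cubic remainder is small enough to expand $\exp(\text{remainder})$ to first order, and $C_X|\Lambda|\le 1/3$ together with $\Lambda\le\lambda^{-1}$ keeps $\mathbb{I}_q-i\Lambda\mathbb{B}$ uniformly away from singularity. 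After the Gaussian integration in $\gamma$, odd moments of $\gamma$ kill the linear-in-cubic-remainder contribution up to a second-order error of size $\tau_3^2\,p_Q^2$, while the quartic term contributes $\tau_4\,p_Q^2$; these produce the multiplicative error factors $\rho_\Lambda$ and $\diamondsuit$.

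Next I would estimate the complementary region $\{\|\sqrt{\Lambda}\,Q^T\gamma\|\ge g\}$. Laurent--Massart style deviation for Gaussian quadratic forms yields
\[
\mathbb{P}\bigl(\|\sqrt{\Lambda}\,Q^T\gamma\|\ge g\bigr)\le\exp(-c\,g^2),
\]
whenever $g^2\ge 3p_Q$, and on this set $|\exp(\phi_X(\sqrt{\Lambda}Q^T\gamma))|$ is bounded via the global sub-Gaussian bound and its analytic continuation, producing the additive error $\S$. Combining the bulk and tail estimates yields the non-asymptotic inequality stated in the theorem, and then the qualitative approximation $\mathbb{E}\exp(i\Lambda\|QX\|^2/2)\approx\det(\mathbb{I}_q-i\Lambda\mathbb{B})^{-1/2}$ follows immediately in the regime $p_Q\gg 1$ and $(\tau_3^2+\tau_4)p_Q^2\ll 1$, since under these conditions $\diamondsuit,\rho_\Lambda,\S\to 0$.

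The main obstacle is the complex-analytic bookkeeping. The tilted ``measure'' $\mathbb{P}_{\sqrt{i}\omega}$ is not a probability measure but only a complex weighting, and one must verify that each tilted moment appearing in the Taylor expansion is an absolutely convergent integral so that the real-variable Laplace-approximation algebra of Spokoiny (2023) transfers without loss. The definitions of $\tau_3$ and $\tau_4$ already package these complex moments in a usable form, but aligning the $\sqrt{i}$ factors with the real-valued formulas, and showing that the Taylor remainder bounds hold uniformly over $\|w\|\le g$, is the delicate step that carries the bulk of the technical work.
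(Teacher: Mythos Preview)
Your proposal is correct and follows essentially the same route as the paper: the Gaussian linearization identity, the bulk/tail split at $\|\sqrt{\Lambda}\,Q^T\gamma\|=g$, and the local Laplace machinery of Spokoiny (2023, Prop.~4.1) on the bulk. Two small calibrations: in the paper $\rho_\Lambda$ is the Gaussian truncation probability $\mathbb{P}(\|\sqrt{i\Lambda}\,Q^TD_\Lambda^{-1}\gamma\|>g)\le e^{-p_Q/2}$ rather than a Taylor-remainder contribution, and the tail term $\S$ is handled by the explicit splitting $\sqrt{i}=\tfrac{1}{\sqrt{2}}+\tfrac{i}{\sqrt{2}}$ (so that $|\exp\phi_X(u)|\le\mathbb{E}\exp(\tfrac{1}{\sqrt{2}}\langle u,X\rangle)$ and the real m.g.f.\ bound from Spokoiny's Prop.~4.1 applies directly) rather than by an abstract analytic-continuation argument.
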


\begin{proof}
Normalizing by $\|Q\|$ reduces the statement to $\|Q\|=1$ and $p_Q = tr(QQ^T)$ throughout the proof. Applying the local Laplace approximation from \cite{spokoiny2023concentration}, we approximate the characteristic function of the quadratic form to that of the Gaussian case, that is,
\[
\mathbb{E} e^{i\Lambda\|QX\|^2} \approx \det (\mathbb{I}_q-i\Lambda B)^{-1/2},
\]
where $\mathbb{B}=Q\Var(X)Q^T$.
And define a function 
\begin{equation}
   \phi_X(u) = \log \mathbb{E}\exp{\langle \sqrt{i} u,X\rangle}. 
\end{equation}
It holds $\phi_X(0)=0$, and $\nabla \phi_X(0)=0$. We follow the same steps as for proof Proposition 4.1 in \cite{spokoiny2023concentration} with $\phi_X(u)$ defined for the characteristic function as above.
Let $\gamma$ be standard Gaussian in $\mathbb{R}^d$ under $\mathbb{E}_r$ conditional on $X$. Denote $\mathbb{E}_\gamma=\mathbb{E}_{\gamma\approx \mathcal{N}(0,\mathbb{I}_p)}$. 
Then, we decompose the characteristic function as
\begin{align*}
\mathbb{E} \exp{i\Lambda \|QX\|^2/2} =& \mathbb{E}\mathbb{E}_{\gamma} \exp{\sqrt{i\Lambda}\langle Q^T \gamma, X\rangle}\\
=&\mathbb{E}_{\gamma} \mathbb{E} \exp{\sqrt{i}\sqrt{\Lambda}\langle Q^T\gamma,X\rangle}\\
=& \mathbb{E}_{\gamma}\exp{\phi_X(\sqrt{\Lambda}Q^T\gamma)} \mathbbm{1}{(\| \sqrt{\Lambda}Q^T \gamma\|\le g)}\\
& +\mathbb{E}_{\gamma}\exp{\phi_X(\sqrt{\Lambda}Q^T\gamma)} \mathbbm{1}{(\| \sqrt{\Lambda}Q^T \gamma\|> g)}.
\end{align*}
The approximation errors are from Part a: $\mathbb{E}_\gamma \exp{\phi_X(\sqrt{\Lambda}Q^T\gamma)} \mathbbm{1}{(\| \sqrt{\Lambda}Q^T \gamma\|\le g)}$, and Part b: $\mathbb{E}_\gamma \exp{\phi_X(\sqrt{\Lambda}Q^T\gamma)} \mathbbm{1}{(\| \sqrt{\Lambda}Q^T \gamma\| > g)}$.

With $\tau_3$, and $\tau_4$ defined in Equation \ref{eq:tau3&4A}, $\phi_X(u)$ satisfies the smoothness conditions:

\begin{equation}
    |\nabla^3\phi_X(x),u^{\otimes 3} | \le \tau_3 \|u\|^3, \ \ \ u \in  \mathbb{R}^p,
\end{equation}
and
\begin{equation}
    | \delta_4(u)| \myeq | \phi_X(u) - \frac{1}{2} \langle \phi_X^{''}(0),u^{\otimes 2}\rangle -\frac{1}{6}
 \langle \phi_X^{(3)}(0), u^{\otimes 3}\rangle | \le \frac{\tau_4}{24}\|u\|^4, \ \ \ \|u\| \le g. \end{equation}

\textbf{[Approximation error from Part a]} \\
Define $\mathcal{W}=\{ \w \in \mathbb{R}^p: \| \Lambda^{1/2} Qw \| \le g\}$.
Then with $\gamma \sim \mathcal{N}(0,\mathbb{I}_p)$
\[
\mathbb{E}_r \exp{\phi_X(\Lambda^{1/2}Q^T\gamma)\mathbbm{1}(\|\Lambda^{1/2}Q^T\gamma\| \le g) }= c_q \int_{\mathcal{W}} e^{f_\Lambda (w)} dw,
\]
where $c_q = (2\pi)^{-q/2} $ and for $w \in \mathbb{R}^q$
\[ 
f_{\Lambda}(w) = \phi(\Lambda^{1/2} Q^Tw) - \|w\|^2/2.
\]
$f_\Lambda(0)=0$, and $\nabla f_\Lambda(0)=0$. 
The function $f_\Lambda$ also satisfies the smoothness properties as for $\phi_X(\Lambda^{1/2}Q^Tw)$ with a factor, that is for any $w$ satisfying $\|\Lambda^{1/2} Q^Tw \| \le g$,
\begin{align*}
    &| \langle \nabla^3 f_\Lambda(w), u^{\otimes3}\rangle | \le \tau_3 \|\Lambda^{1/2} Q^T u\|^3,\\
    &| \langle \nabla^4 f_\Lambda(w), u^{\otimes4}\rangle | \le \tau_4 \|\Lambda^{1/2} Q^T u\|^4.
\end{align*}
W.l.o.g we assume $\| Q\|=1$, we define and evaluate the following quantities.
\begin{align*}
    &D^2_\Lambda \myeq - \nabla^2 f_\Lambda(0)= i \Lambda Q \Var(X) Q^T + \mathbb{I}_q = \mathbb{I}_q - i \Lambda B,\\
    &\mathbb{P}_\Lambda \myeq tr\{ D^{-2}_\Lambda (i\Lambda QQ^T) \} = i \Lambda tr(D^{-2}_\Lambda QQ^T) = \frac{i\Lambda}{1-i \Lambda} tr(QQ^T) = \frac{i\Lambda}{1-i \Lambda} p_Q,\\
    &\alpha_\Lambda \myeq \| D^{-1}_{\Lambda}(i\Lambda QQ^T) D^{-1}_\Lambda \| = i \Lambda \|D^{-2}_\Lambda  QQ^T\|.
\end{align*}
With $|\Lambda| \le \frac{1}{3C_X} <1/3$, and $\|B\| \le \|Q\| =1$, we can show that
\begin{equation}
\label{eq:p&alpha}
|\mathbb{P}_\Lambda| \le \sqrt{2} |\Lambda| p_Q \le (\sqrt{2}/3)p_Q , \ \ \ \  |\alpha_\Lambda| \le  \left|\frac{i \Lambda}{1+i \Lambda} \right| \le \sqrt{2}|\Lambda| \le \sqrt{2}/3.  
\end{equation}

The marginal difference is bounded.
\[
\left| \frac{\int_\mathcal{W} e^{f_\Lambda}(w)dw}{\int e^{-\| D_\Lambda w\|^2}dw} -1 \right| \le \diamondsuit + \rho_{\Lambda},
\]
where $\diamondsuit \le \frac{1}{2} (\sigma_G+\delta_{4,G)^2})^2+ \frac{5}{3} \epsilon^3_\Lambda \exp{(\epsilon^2_\Lambda)}$ by Proposition 3.1 in \cite{spokoiny2023concentration}.

We estimate $\diamondsuit$ by evaluating these quantities. Let $\mathcal{T}(u) = \langle \nabla^3 f_\Lambda(0), u^{\otimes 3} \rangle, \gamma_\Lambda \sim \mathcal{N}(0,D^{-2}_\Lambda)$. Under $g^2 = 3 p_Q$ and \ref{eq:p&alpha}, 

\begin{align*}
    &\epsilon_\Lambda = \frac{\tau_3 g^2 \sqrt{|\alpha_\Lambda|}}{2} \le \frac{12}{11}\tau_3p_Q,\\
    &\sigma^2_\Lambda =\mathbb{E} |\mathcal{T}^2(\gamma_\Lambda)| \le \sqrt{5/12} \tau_3 \mathbb{P}_{\Lambda} \le \frac{1}{3}\tau_3 p_Q, \\
    & \delta_{4,\Lambda} = \mathbb{E}_\mathcal{U} |\delta^2_4(\gamma_\Lambda)|\le \frac{1}{24} \tau_4 (\mathbb{P}_\Lambda + 3 |\alpha_\Lambda|)^2 \le \frac{1}{108} \tau_4(p_Q+3)^2. 
\end{align*}

\begin{align*}
    \rho_\Lambda 
    &\myeq \left|1- \frac{\int_\mathcal{W} e^{-\| D_\Lambda w\|^2}dw}{\int e^{-\| D_\Lambda w\|^2}dw}\right|
    =\mathds{P}(\|\sqrt{i\Lambda} Q^T D_\Lambda^{-1} \gamma\| >g)\\
    &=\mathds{P}(\|\sqrt{i\Lambda} Q^T D_\Lambda^{-1} \gamma\|^2 >g^2)
    =\mathds{P}\bigg(\| Q^T\gamma\|^2 > \sqrt{\frac{\Lambda(1+\Lambda^2)}{(1+\Lambda)^2}}g^2\bigg).
\end{align*}
We then evaluate $ \rho_\Lambda$ with $|\Lambda| \le 1/(3C_X) \le 1/3$, $g^2 \ge 3p_Q$, and by Theorem B.3 from \cite{spokoiny2023concentration}. 
\begin{align*}
    \rho_\Lambda =\mathds{P}(\|Q^T \gamma\|^2 > \sqrt{\frac{216}{5}} p_Q)
    \le \mathds{P}(\| Q^T\gamma\|^2 > 6 p_Q)
    \le e^{-p_Q/2}. 
\end{align*}
With the marginal error $\diamondsuit$ and $\rho_\Lambda$, we have 
\[
|\mathbb{E}e^{f_X(\sqrt{\Lambda}Q^T \gamma)} \mathbbm{1}(\|\sqrt{\Lambda}Q^T \gamma \| < g) -{det(\mathbb{I}_q+i\Lambda B)^{-1/2}} | \le (\diamondsuit+\rho_\Lambda){det(\mathbb{I}_q + i\Lambda B)^{-1/2}.}
\]

\textbf{[Approximation error from Part b]}\\
We now estimate the approximation error from 
$\mathbb{E}_\gamma [\exp{\phi_X(\sqrt{\Lambda}Q^T \gamma)} \mathbbm{1}(\| \sqrt{\Lambda}Q^T \gamma\| > g]$. 
Let us first show the proof for $\Lambda>0$. By the definition of the function $\phi_X(u)$ and  $\sqrt{i} = \frac{1}{\sqrt{2}} + \frac{i}{\sqrt{2}}$, we decompose $\phi_X(u)$ into real part and imaginary part.
\[\phi_X(u) = \log \mathbb{E}\exp(\sqrt{i}\langle u, X\rangle ) = \log \mathbb{E}\exp\bigg(\frac{1}{\sqrt{2}} \langle u, X\rangle  +\frac{i}{\sqrt{2}}  \langle u, X \rangle \bigg)
\]
We derive the following
\begin{align*}
 \mathbb{E}_\gamma & \bigg[ \exp{\phi_X(\sqrt{\Lambda}Q^T \gamma)} \cdot \mathbbm{1}(\| \sqrt{\Lambda}Q^T \gamma\| > g ) \bigg] \\
 &=  \mathbb{E}_\gamma \mathbb{E}\bigg[ \exp \bigg(\frac{1}{\sqrt{2}} \sqrt{\Lambda} \langle Q^T \gamma, X \rangle +\frac{i}{\sqrt{2}} \sqrt{\Lambda}   \langle Q^T \gamma, X \rangle \bigg)\cdot \mathbbm{1}(\| \sqrt{\Lambda}Q^T \gamma\| > g ) \bigg].\\ 
\end{align*}
Taking absolute value of the approximate error. Since $\big|\exp{(\frac{i}{\sqrt{2}} \langle u, X \rangle)} \big| = 1 $, we then obtain
\begin{align*}
 \bigg|\mathbb{E}_\gamma & \big[ \exp{\phi_X(\sqrt{\Lambda}Q^T \gamma)} \cdot \mathbbm{1}(\| \sqrt{\Lambda}Q^T) \gamma\| > g ) \big]\bigg| \\
 &= \left| \mathbb{E}_\gamma \mathbb{E}\bigg[ \exp \bigg(\frac{1}{\sqrt{2}} \sqrt{\Lambda} \langle Q^T \gamma, X \rangle +\frac{i}{\sqrt{2}} \sqrt{\Lambda}   \langle Q^T \gamma, X \rangle \bigg)\cdot \mathbbm{1}(\| \sqrt{\Lambda}Q^T \gamma\| > g ) \bigg] \right|\\
  & \le  \mathbb{E}_\gamma \mathbb{E}\bigg[ \exp \bigg(\frac{1}{\sqrt{2}} \sqrt{\Lambda} \langle Q^T \gamma, X \rangle  \bigg) \left| \exp \bigg(\frac{i}{\sqrt{2}} \sqrt{\Lambda}   \langle Q^T \gamma, X \rangle \bigg) \right| \cdot \mathbbm{1}(\| \sqrt{\Lambda}Q^T \gamma\| > g ) \bigg] \\
  & =  \mathbb{E}_\gamma \mathbb{E}\bigg[ \exp \bigg(\frac{1}{\sqrt{2}} \sqrt{\Lambda} \langle Q^T \gamma, X \rangle  \bigg) \cdot \mathbbm{1}(\| \sqrt{\Lambda}Q^T \gamma\| > g ) \bigg]. \\
\end{align*}

Following Proposition 4.1 in \cite{spokoiny2023concentration}, we can show that the approximation error from Part b is small, that is with $|\Lambda| \le 1/3$,

\begin{align*}
\mathbb{E}_\gamma &\mathbb{E}\bigg[ \exp \bigg(\frac{1}{\sqrt{2}} \sqrt{\Lambda} \langle Q^T \gamma, X \rangle  \bigg) \cdot \mathbbm{1}(\| \sqrt{\Lambda}Q^T \gamma\| > g ) \bigg] \\
&\le \mathbb{E}_\gamma \bigg[ \exp\bigg( \frac{1}{\sqrt{2}}C_X\Lambda \|Q^T \gamma \|^2\bigg) \cdot \mathbbm{1}(\| Q^T \gamma\|^2 > \Lambda^{-1} g^2 ) \bigg]\\
&\le \mathbb{E}_\gamma \bigg[ \exp\bigg( C_X\Lambda \|Q^T \gamma \|^2\bigg) \cdot \mathbbm{1}(\| Q^T \gamma\|^2 > \Lambda^{-1} g^2 ) \bigg]\\
&\le \frac{1}{1-\omega_\Lambda}\exp \bigg( C_X \Lambda p_Q/2 -(1-\omega_\Lambda)\mathfrak{Z}_\Lambda\bigg),
\end{align*}
where 
\begin{align*}
\mathfrak{Z}_\Lambda \myeq \frac{1}{4} ( \sqrt{C_X^{-1} \Lambda^{-1} g^2}-\sqrt{p_Q})^2, \ \ \ \omega_\Lambda \myeq C_X\Lambda+ C_X\Lambda\sqrt{p_Q/\mathfrak{Z}_\Lambda}. 
\end{align*}
As shown in Proposition 4.1 in \cite{spokoiny2023concentration}, $|\Lambda| \le 1/3$ ensures that $\mathfrak{Z}_\Lambda \ge (\sqrt{9p_Q}-\sqrt{p_Q})^2/4=p_Q$ and $\omega_\Lambda \le 2/3$ so that the above quantity is small as $p_Q$ large, that is
\[
\S \myeq \frac{1}{1-\omega_\Lambda}\exp \bigg( C_X \Lambda p_Q/2 -(1-\omega_\Lambda)\mathfrak{Z}_\Lambda\bigg)  \le 3 e^{-p_Q/6}.
\]

For $\Lambda \le 0$, similarly we use $\sqrt{-i} = \frac{1}{\sqrt{2}} - \frac{i}{\sqrt{2}}$, the result then follows.

In summary, with the approximation error $\diamondsuit$, and $\rho_\Lambda$ from Part a and $\S$ from Part b, we obtained,
\begin{equation}
\left|\mathbb{E}\exp (i \Lambda\|QX\|^2/2) -\det(\mathbb{I}_q - i \Lambda \mathbb{B})^{-1/2} \right|  \le (\diamondsuit +\rho_\Lambda) \left|\det(\mathbb{I}_q - i \Lambda \mathbb{B})^{-1/2} \right| + \S,
\end{equation}
The approximation error from Part a and Part b diminished as $p_Q \gg 1$ and $(\tau_3^2+\tau_4)p_Q \ll 1$ such that
\begin{equation}
\mathbb{E}\exp (i \Lambda\|QX\|^2/2) \approx \det(\mathbb{I}_q - i \Lambda \mathbb{B})^{-1/2}.
\end{equation}
\end{proof}

We are now ready to apply the above theorem to the sum of i.i.d. squared random vectors. Let $Y_1, \ldots, Y_n \in \mathbb{R}^d$ be centered i.i.d. random vectors. We assume each $Y_i$ to satisfy the sub-Gaussian condition and specify constants $c_3$ and $c_4$ as the following:
\begin{itemize}
    \item Major condition: $Y_i$ satisfies \hyperlink{SubGaussianX}{sub-Gaussian condition}, that is, $\mathbb{E}Y =0, \Var(Y) \le \mathbb{I}_d$, and the logarithm of the characteristic function $\log \mathbb{E} \langle u, Y_1\rangle $ is finite and satisfies that for some $C_Y$:
    \begin{equation}\label{eq:Cx}
    |\log \mathbb{E} \exp^{i\langle u, Y_1\rangle} | \le \frac{C_Y \| u||^2}{2}, \ \ u \in \mathbb{R}^p.        
    \end{equation}
    \item $c_3$, $c_4$ : For $\varrho>0 $ and some constant $c_3$ and $c_4$, it holds with $\mathbb{E}_\omega$,
    \begin{align}
    \label{eq:c3}
        &\sup_{\|w\| \le \varrho} \sup_{u \in \mathbb{R}^d}  \frac{1}{\|u\|^3} |\mathbb{E}_{\sqrt{i}\omega} \langle \sqrt{i} u, Y_1-\mathbb{E}_{\sqrt{i}\omega} Y_1 \rangle^3 | \le c_3.\\
        &\sup_{\|w\| \le \varrho} \sup_{u \in \mathbb{R}^d}   \frac{1}{\|u\|^4} |\mathbb{E}_\omega \langle \sqrt{i} u, Y_1-\mathbb{E}_{\sqrt{i}\omega} Y_1 \rangle^4 - 3\{ \mathbb{E}_{\sqrt{i}\omega} \langle \sqrt{i} u, Y_1 - \mathbb{E}_{\sqrt{i}\omega} Y_1 \rangle^2\}^2| \le c_4.
    \end{align}
\end{itemize}
We now state the result for the sum of i.i.d. squared random vectors.
\begin{theorem}
\label{Theorem:nd2criteria}
   Let $Y_1, \ldots, Y_n$ be i.i.d. in $\mathbb{R}^d$ satisfying $\mathbb{E} Y_i = 0$, and $\Var(Y_i) \le \mathbb{I}_d$ and the \hyperlink{SubGaussianX}{sub-Gaussian condition}. Denote $Y_i= ( y_{i,1}, \ldots, y_{i,d})^T$. Let 
   \[\mathbb{Y} = ( y_{1,1}, \ldots, y_{1,d}, y_{2,1},\ldots,y_{2,d},\ldots, y_{n,1}, \ldots, y_{n,d} )^T.\]
   Also let $\mathbb{B}$ be positive definite matrix and $\lambda = \|B\|_{op}$. Assume $n \varrho^2 \ge 3 p_Q$. For any $\Lambda < n\lambda^{-1}$ that satisfies $ C_{Y} |\Lambda|\le 1/3$, where $C_{Y}$ as in Equation \ref{eq:Cx}, it holds
      \begin{align*}
       \Bigl|\mathbb{E} \exp \Bigl( \frac{i \Lambda}{2} (\mathbb{Y}/\sqrt{n})^T \mathbb{B}  (\mathbb{Y}/\sqrt{n} ) \Bigr) &-\det \left(  \mathbb{I}_{nd}  - i \Lambda \mathbb{B}/n \right)^{-1/2} \Bigr| \\
       &\le (\diamondsuit +\rho_\Lambda) \Bigl| \det \left(  \mathbb{I}_{nd}  - i \Lambda \mathbb{B}/n \right)^{-1/2} \Bigr| +\S,   
      \end{align*}
   for some $\diamondsuit$, $\rho_\Lambda$ and $\S$. Furthermore, under $d \gg 1$, and $ {p_Q}^2 \ll n$,
   \begin{equation}
   \label{eq:ExpMomSubG}
       \mathbb{E} \exp \left( \frac{i \Lambda}{2} (\mathbb{Y}/\sqrt{n})^T \mathbb{B}  (\mathbb{Y}/\sqrt{n} ) \right)\approx \det \left(  \mathbb{I}_{nd}  - i \Lambda \mathbb{B}/n \right)^{-1/2}. 
   \end{equation}
  $p_Q$ is given in Equation \ref{eq:pQ} with $QQ^T=\mathbb{B}$.
\end{theorem}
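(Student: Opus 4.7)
The plan is to view $X := \mathbb{Y}/\sqrt{n}$ as a single random vector in $\mathbb{R}^{nd}$ and apply Theorem~\ref{theorem:CfSubGauss} directly, with $Q = \mathbb{B}^{1/2}$ (which exists since $\mathbb{B}$ is positive definite) so that $\|QX\|^2 = (\mathbb{Y}/\sqrt{n})^T \mathbb{B} (\mathbb{Y}/\sqrt{n})$.

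First I would check that $X$ inherits the hypotheses of Theorem~\ref{theorem:CfSubGauss}. Splitting any $u \in \mathbb{R}^{nd}$ into blocks $u = (u_1,\ldots,u_n)$ with $u_j \in \mathbb{R}^d$ and using independence of $Y_1,\ldots,Y_n$, the log characteristic function factorizes as
\begin{equation*}
\log \mathbb{E}\, e^{i \langle u, X\rangle} \;=\; \sum_{j=1}^n \log \mathbb{E}\, e^{i \langle u_j/\sqrt{n},\, Y_j\rangle},
\end{equation*}
so the sub-Gaussian bound on each $Y_j$ gives $|\log \mathbb{E}\, e^{i\langle u,X\rangle}| \le C_Y \|u\|^2/(2n)$ on the ball $\|u\| \le \sqrt{n}\,\varrho$. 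Hence $X$ is sub-Gaussian with constant $C_Y/n$ on a ball of radius $\sqrt{n}\varrho$, and $\Var(X) \le \mathbb{I}_{nd}/n$. With $Q = \mathbb{B}^{1/2}$, Sylvester's determinant identity gives
\begin{equation*}
\det(\mathbb{I}_{nd} - i\Lambda\, Q \Var(X) Q^T) \;=\; \det(\mathbb{I}_{nd} - i\Lambda\, \mathbb{B}/n),
\end{equation*}
so the Gaussian-side determinant produced by Theorem~\ref{theorem:CfSubGauss} is exactly the target in \eqref{eq:ExpMomSubG}. The conditions $\Lambda < n\lambda^{-1}$ and $C_Y|\Lambda| \le 1/3$ in the statement are precisely what translates the hypothesis $C_X|\Lambda| \le 1/3$ of Theorem~\ref{theorem:CfSubGauss} through the rescaling $C_X = C_Y/n$; the ball-size condition $g^2 \ge 3 p_Q$ becomes $n\varrho^2 \ge 3 p_Q$.

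The main obstacle is controlling the higher-moment constants $\tau_3^X, \tau_4^X$ of Theorem~\ref{theorem:CfSubGauss} for the stacked vector, since these involve the complex tilt $\mathbb{E}_{\sqrt{i}\omega}$ and their smallness governs the approximation error. The crucial observation is that $e^{\langle \sqrt{i}\omega, \mathbb{Y}\rangle} = \prod_j e^{\langle \sqrt{i}\omega_j, Y_j\rangle}$, so the complex tilt factorizes over blocks and the $Y_j$'s remain (complex) independent under $\mathbb{E}_{\sqrt{i}\omega}$. Cumulant additivity then yields, for $u = (u_1,\ldots,u_n)$,
\begin{equation*}
\mathbb{E}_{\sqrt{i}\omega}\, \langle \sqrt{i}u,\, X - \mathbb{E}_{\sqrt{i}\omega} X\rangle^3 \;=\; n^{-3/2} \sum_{j=1}^n \mathbb{E}_{\sqrt{i}\omega_j/\sqrt{n}}\, \langle \sqrt{i}u_j,\, Y_j - \mathbb{E}_{\sqrt{i}\omega_j/\sqrt{n}} Y_j\rangle^3,
\end{equation*}
and the analogous formula for the fourth cumulant. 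Combining the per-block bounds \eqref{eq:c3} with the elementary inequalities $\sum_j \|u_j\|^3 \le \|u\|^3$ and $\sum_j \|u_j\|^4 \le \|u\|^4$ (both consequences of $\sum_j \|u_j\|^2 = \|u\|^2$) produces the scalings $\tau_3^X \le c_3\, n^{-3/2}$ and $\tau_4^X \le c_4\, n^{-2}$.

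Finally I would substitute these scalings into the error terms $\diamondsuit, \rho_\Lambda, \S$ of Theorem~\ref{theorem:CfSubGauss}, which yields the non-asymptotic bound of Theorem~\ref{Theorem:nd2criteria}. The dominant pieces scale like $\tau_3^X p_Q \lesssim c_3 p_Q/n^{3/2}$ and $\tau_4^X p_Q^2 \lesssim c_4 p_Q^2/n^2$, both of which vanish under the regime $p_Q^2 \ll n$; the tail contributions $\rho_\Lambda \lesssim e^{-p_Q/2}$ and $\S \lesssim e^{-p_Q/6}$ vanish under $d \gg 1$ (where $p_Q \to \infty$ with $d$ for the graphs of interest). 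The asymptotic identity \eqref{eq:ExpMomSubG} is then immediate.
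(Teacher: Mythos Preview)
Your approach is essentially the paper's: apply Theorem~\ref{theorem:CfSubGauss} to the stacked vector $X=\mathbb{Y}/\sqrt{n}$ and exploit the i.i.d.\ block structure (factorization of the complex tilt, additivity of cumulants) to control $\tau_3,\tau_4$. Your cumulant bookkeeping actually yields the sharper scalings $\tau_3^X\le c_3 n^{-3/2}$ and $\tau_4^X\le c_4 n^{-2}$ (the paper records the looser $n^{-1/2}$ and $n^{-1}$ via a somewhat imprecise derivative formula), and your rescaling $C_X=C_Y/n$ would in fact only require $C_Y|\Lambda|\le n/3$ rather than $1/3$ --- both discrepancies are harmless since the stated hypotheses are stronger and the conclusion under $p_Q^2\ll n$, $d\gg 1$ follows either way.
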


\begin{proof}
We apply Theorem \ref{theorem:CfSubGauss} to approximate the characteristic function using the Laplace approximation. 
Recall that function $\phi_{Y}(u) \myeq \log \mathbb{E}e^{\sqrt{i}\langle u, Y_1 \rangle}$. Random vector $\mathbb{Y}$ is structured by concatenating $Y_1,\ldots, Y_n$, $Y_i \in \mathbb{R}^d$. Setting $v = (u_1^T \frown u_2 \frown  \ldots \frown  u_n^T)^T, u_i \in \mathbb{R}^d$, $ v \in \mathbb{R}^{nd}$, let us define the function 

\[\phi_{\mathbb{Y}}(v)= \log \mathbb{E} e^{\sqrt{i}\langle v, \mathbb{Y} \rangle} = n \log \mathbb{E}e^{\sqrt{i}\langle u, Y_1 \rangle} = n \phi_Y(u).\]
     
Similarly to Theorem \ref{theorem:CfSubGauss}, the approximation error of the Laplace approximation can be evaluated by taking the derivative of the function $\phi_{\mathbb{Y}}(v)$.
     
$k^{th}$ derivative of the function $\phi_\mathbb{Y}(v)$:
\begin{equation}
\label{eq:fk}
  \phi_\mathbb{Y}^{(k)} (v / \sqrt{n}) = n^{1-k/2} \phi_Y^{(k)} (u/\sqrt{n}).  
\end{equation}
Based on Equation \ref{eq:fk}, we obtain $\tau_3$ and $\tau_4$ (Equation \ref{eq:tau3&4A}) related to $c_3$ and $c_4$. For any $g $ with $g/\sqrt{n} \le \varrho$, we have
\[ 
\tau_3=n^{-1/2}c_3,  \ \ \ \ \ \ \tau_4 = n^{-1} c_4.
\]

By Theorem \ref{theorem:CfSubGauss}, $\rho_\Lambda \ll 1$ and $\S \ll 1$ under $p_Q \gg 1$, and $(\tau_3^2+\tau_4) p_Q^2 \ll 1$. Here, $p_Q$ depends on $\mathbb{B}$. Thus, $\diamondsuit \ll 1$ under $(c_3^2+c_4) n^{-1}  {p_Q}^2 \ll 1 $. Recall that $c_3$ and $c_4$ depend on $g$ and are usually small. For high-dimensional data, $p_Q \gg 1$.  Thus, the criterion ${p_Q}^2 \ll n $ ensures that the approximation error is $\ll 1$ such that Equation \ref{eq:ExpMomSubG} follows.  
\end{proof}

Comparing Equation \ref{eq:GaussianSumQuad} and Equation \ref{eq:ExpMomSubG}, it shows that the characteristic function of a sum of i.i.d squared random vector approaches that of a Gaussian case under the sub-Gaussian condition and the criteria of $p_Q \gg 1$ and ${p_Q}^2 \ll n$. Apply Theorem \ref{Theorem:nd2criteria} to the quadratic graph-spanning. Since $p_Q$ depends on $\mathbb{B}$, $p_Q$ is determined by the connectivity of the graph. For verification, we now check the $p_Q$ for the sum of i.i.d. quadratic norms and for different graph types. 
\begin{corollary}\label{corolary:Yn}
    Let $Y_1, \ldots,Y_n$ be i.i.d. in $\mathbb{R}^d$ satisfying $\mathbb{E}Y_i = 0$, $\Var(Y_i) \le \mathbb{I}_d $ and the \hyperlink{SubGaussianX}{sub-Gaussian condition}. Assume $n \varrho^2 \ge 3 d$. If $\Lambda < n$ satisfies $ C_{Y} |\Lambda| \le 1/3$, where $C_{Y}$ as in Equation \ref{eq:Cx}.
    It holds
    \begin{equation}
    \label{eq:AvgSubGaussianSumQuad}
     \mathbb{E}\Bigl(\exp{  \frac{i \Lambda}{2} \frac{1}{n} \sum_{j=1}^n \| Y_j \|^2}  \Bigr)
    \approx \det \Bigl( \Bigl( 1-  i  \frac{\Lambda}{n} \Bigr)  \mathbb{I}_{nd} \Bigr)^{-1/2},
\end{equation}
under $d \gg 1$ and $d^2 \ll n$.
\end{corollary}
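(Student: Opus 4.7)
The plan is to specialize Theorem \ref{Theorem:nd2criteria} to $\mathbb{B} = \mathbb{I}_{nd}$. First I would observe that, since $\mathbb{Y}$ is the concatenation of $Y_1,\ldots,Y_n$, we have $\|\mathbb{Y}\|^2 = \sum_{j=1}^n \|Y_j\|^2$, so
$$(\mathbb{Y}/\sqrt{n})^T \mathbb{I}_{nd}(\mathbb{Y}/\sqrt n) = \frac{1}{n}\sum_{j=1}^n\|Y_j\|^2,$$
which is precisely the exponent on the left-hand side of Equation \ref{eq:AvgSubGaussianSumQuad}. On the right, the determinant simplifies by scalar factorization:
$$\det(\mathbb{I}_{nd} - i\Lambda\mathbb{I}_{nd}/n)^{-1/2} = \det\bigl((1 - i\Lambda/n)\mathbb{I}_{nd}\bigr)^{-1/2},$$
matching the stated right-hand side exactly.

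Next I would verify the hypotheses. With $\mathbb{B} = \mathbb{I}_{nd}$ the operator norm is $\lambda = \|\mathbb{I}_{nd}\|_{op} = 1$, so the admissible range $\Lambda < n\lambda^{-1}$ reduces to the stated $\Lambda < n$, and the sub-Gaussian bound together with $C_Y|\Lambda|\le 1/3$ are assumed in the corollary. Because $\mathbb{I}_{nd}$ is block diagonal with $n$ identical $d\times d$ blocks, the characteristic function factorizes over the independent samples as $[\mathbb{E}\exp(i\Lambda\|Y_1\|^2/(2n))]^n$, which makes the approximation effectively a per-block Laplace estimate of dimension $d$.

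The one step requiring care is identifying the effective dimension $p_Q$ that controls the asymptotic criterion $(\tau_3^2+\tau_4)p_Q^2 \ll 1$ from the proof of Theorem \ref{Theorem:nd2criteria}. In this block-diagonal setting, $p_Q$ should be read as $d$ per sample, so the criterion $n\varrho^2 \ge 3p_Q$ specializes to the stated $n\varrho^2 \ge 3d$; and using $\tau_3 = n^{-1/2}c_3$ and $\tau_4 = n^{-1}c_4$ from that same proof, the total approximation error, accumulated across the $n$ independent factors, scales like $(c_3^2 + c_4)\,d^2/n$. This vanishes precisely under the stated regime $d \gg 1$ and $d^2 \ll n$. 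The main (and essentially only) obstacle is tracking this $p_Q = d$ specialization correctly; once it is pinned down, the conclusion is immediate from Theorem \ref{Theorem:nd2criteria}.
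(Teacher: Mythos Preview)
Your proposal is correct and follows essentially the same approach as the paper: specialize Theorem~\ref{Theorem:nd2criteria} to $\mathbb{B}=\mathbb{I}_{nd}$, identify $p_Q=d$, and read off the conclusion under $d\gg 1$, $d^2\ll n$. The only cosmetic difference is that the paper pins down $p_Q$ directly via the trace formula $p_Q=\frac{\operatorname{tr}(\mathbb{B})}{n\|\mathbb{B}\|}=\frac{nd}{n}=d$, whereas you argue it through the block-diagonal/per-sample factorization; both routes yield the same value and the rest is identical.
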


\begin{proof}
Let $\mathbb{Y}$ be defined as in \ref{eq:baseVector}. The characteristic function is then
\[
\mathbb{E}\Bigl(\exp{  \frac{i \Lambda}{2} \frac{1}{n} \sum_{j=1}^n \| Y_j \|^2}  \Bigr)=\mathbb{E}\Bigl( \exp{\frac{i \Lambda}{2n} \mathbb{Y}^T \mathbb{B} \mathbb{Y}}\Bigr),
\]
where $\mathbb{B}=\mathbb{I}_{nd}$. Apply Theorem \ref{Theorem:nd2criteria}, we have 
\begin{equation*}
 p_Q =  \frac{ \mathbb{E}\langle \mathbb{B}\gamma /\sqrt{n}, \gamma/
 \sqrt{n} \rangle}{\| \mathbb{B} \|} = \frac{tr\{ \mathbb{B}\}}{n \|\mathbb{B}\|} =\frac{nd}{n}=d.
\end{equation*}
Thus, under a sub-Gaussian condition, the characteristic function of $(1/n) \mathbb{Y}^T \mathbb{Y}$ is similar to that of the Gaussian case with the criteria of $d \gg 1$ and $d^2 \ll n$. The result is consistent with Theorem 2.4 in \cite{spokoiny2023concentration}.
\end{proof}

We now extend the result to the quadratic graph-spanning $\|W_{G_n} \|^2$ for various graph types, including the complete graph and the MST graph. 
\begin{corollary}\label{corollary:WGn}
    Let $\|W_{G_n}\|^2$ be specified as Equation \ref{eq:SumSquaredErrors}, where $Y_1, \ldots, Y_n$ are i.i.d. in $\mathbb{R}^d$ satisfying $\mathbb{E}Y_i$, $\Var(Y_i)\le \mathbb{I}_d$ and the \hyperlink{SubGaussianX}{sub-Gaussian condition}. Assume $n \varrho^2 \ge 3 d$. $\lambda = \| \mathbb{B}\|_{op}$ and $\lambda = \|B\|_{op}$. Assume $n \varrho^2 \ge 3 p_Q$. For any $\Lambda < n\lambda^{-1}$ that satisfies $ C_{Y} |\Lambda|\le 1/3$, where $C_{Y}$ as in Equation \ref{eq:Cx}, it holds
\begin{equation}
\mathbb{E} \exp \left( \frac{i \Lambda}{2} \|W_{G_n}\|^2/n \right) \approx  \det(\mathbb{I}_{nd}- i \Lambda \mathbb{B}/n ) )^{-1/2},
\end{equation}
under $d \gg 1$, and $ d^2 \ll n$.
\end{corollary}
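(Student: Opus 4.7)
The plan is to derive Corollary \ref{corollary:WGn} as a direct consequence of Theorem \ref{Theorem:nd2criteria}. First, I would express the quadratic graph-spanning in the form $\|W_{G_n}\|^2 = \mathbb{Y}^T \mathbb{B} \mathbb{Y}$ exactly as in Equation~(\ref{eq:SumSquaredErrors}), where $\mathbb{Y} \in \mathbb{R}^{nd}$ is the concatenation of $Y_1, \ldots, Y_n$ and $\mathbb{B} = \sum_{\{i,j\}\in E(G_n)} B_{i,j} I_{i,j}$ is the symmetric positive (semi-)definite matrix encoding the pairwise connectivity of $G_n$. The normalization $\tfrac{1}{n} \|W_{G_n}\|^2 = (\mathbb{Y}/\sqrt{n})^T \mathbb{B} (\mathbb{Y}/\sqrt{n})$ then matches precisely the quadratic form to which Theorem \ref{Theorem:nd2criteria} applies.

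Next, I would verify the hypotheses: the $Y_i$ are centered i.i.d.\ with $\Var(Y_i) \le \mathbb{I}_d$ and satisfy the sub-Gaussian condition, while $n\varrho^2 \ge 3 p_Q$ is assumed with $p_Q = \operatorname{tr}(\mathbb{B})/(n\|\mathbb{B}\|_{op})$. For any $\Lambda$ with $|\Lambda|<n\lambda^{-1}$ and $C_Y|\Lambda|\le 1/3$, Theorem \ref{Theorem:nd2criteria} then yields
\[
\Bigl| \mathbb{E}\exp\!\Bigl(\tfrac{i\Lambda}{2n}\|W_{G_n}\|^2\Bigr) - \det\!\bigl(\mathbb{I}_{nd} - i\Lambda \mathbb{B}/n\bigr)^{-1/2} \Bigr| \le (\diamondsuit+\rho_\Lambda)\bigl|\det(\mathbb{I}_{nd}- i\Lambda\mathbb{B}/n)^{-1/2}\bigr| + \S,
\]
and the asymptotic approximation follows once the criteria $p_Q \gg 1$ and $p_Q^2 \ll n$ are in force.

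The remaining content is to translate these abstract criteria into the stated $d \gg 1$, $d^2 \ll n$, i.e., to show $p_Q \asymp d$ for the graph families of interest. Writing $\mathbb{B} = L(G_n) \otimes \mathbb{I}_d$ (up to the natural coordinate permutation) with $L(G_n)$ the combinatorial graph Laplacian, one has $\operatorname{tr}(\mathbb{B}) = 2d\,|E(G_n)|$ and $\|\mathbb{B}\|_{op} = \|L(G_n)\|_{op}$, hence $p_Q = 2d\,|E(G_n)|/\bigl(n\|L(G_n)\|_{op}\bigr)$. For the complete graph, $|E| = \binom{n}{2}$ and $\|L\|_{op}=n$, giving $p_Q = d(n-1)/n \asymp d$; this is the easy case and mirrors Corollary \ref{corolary:Yn}. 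For MST and other sparse structures the same $p_Q \asymp d$ holds once one controls $\|L(G_n)\|_{op}$ via the maximum degree and the edge count $|E|=n-1$.

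The main obstacle is precisely this last verification, since $\|L(G_n)\|_{op}$ is highly graph-dependent: it can range from $O(1)$ for a path to $O(n)$ for a star, so a mild regularity condition on $G_n$ (bounded maximum degree or equivalently a spectral lower bound on the ratio $|E|/\|L\|_{op}$) is needed to guarantee $p_Q \asymp d$ uniformly. Once that is established, substituting $p_Q \asymp d$ into Theorem \ref{Theorem:nd2criteria} immediately collapses the abstract criteria to $d \gg 1$ and $d^2 \ll n$, completing the proof.
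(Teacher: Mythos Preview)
Your proposal is correct and follows essentially the same route as the paper: write $\|W_{G_n}\|^2=\mathbb{Y}^T\mathbb{B}\mathbb{Y}$, invoke Theorem~\ref{Theorem:nd2criteria}, and then check that $p_Q\asymp d$ for the relevant graph families so that the abstract criteria $p_Q\gg1$, $p_Q^2\ll n$ reduce to $d\gg1$, $d^2\ll n$.

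Two minor differences are worth noting. First, your Laplacian--Kronecker representation $\mathbb{B}=L(G_n)\otimes\mathbb{I}_d$ is cleaner than the paper's explicit matrix calculation and gives the complete-graph value $p_Q=d(n-1)/n$ directly via $\|L(K_n)\|_{op}=n$; the paper instead writes out $\mathbb{B}$ for $d=1$ and arrives at the same asymptotic $p_Q\approx d$ through a slightly different norm computation. Second, you are more careful than the paper about the MST (and sparse-graph) case: you correctly flag that $\|L(G_n)\|_{op}$ can range from $O(1)$ to $O(n)$ and that a bounded-degree or spectral regularity assumption is needed to guarantee $p_Q\asymp d$. The paper simply asserts ``Similarly, for the MST graph, $p_Q\approx d$'' without justification, so the obstacle you identify is real but is not resolved in the paper either.
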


\begin{proof}
    The graph-spanning distance can be decomposed to a quadratic form consisted of n i.i.d. squared random vectors, as stated in Equation \ref{eq:SumSquaredErrors}
\begin{equation*}
\| W_{G_n}\|^2= \mathbb{Y}^T \mathbb{B} \mathbb{Y}.
\end{equation*}
    Thus, we can apply the result of Theorem \ref{Theorem:nd2criteria}. Regardless of the type of graph, it has the same base vector $\mathbb{Y}$ as stated in Equation \ref{eq:SumSquaredErrors}. Thus, $\tau_3$ and $\tau_4$ are the same for all types of graph. The remaining is to find $p_Q$ that depends on $\mathbb{B}$.
For complete graph, w.l.o.g., let d=1, then
   \begin{equation*}
       \mathbb{B}= \begin{bmatrix}
(n-1) & -1 & -1 & \ldots & -1 \\
 -1 & (n-1) & -1 & \ldots & -1 \\
 \vdots & \vdots & \ddots &  & \vdots \\
  -1 & -1 & \ldots & (n-1) &-1   \\
 -1 & -1 & \ldots & -1 & (n-1)
\end{bmatrix}.
   \end{equation*} 
Thus, for all $d$
\begin{equation*}
\label{eq:pQ_CG}
 p_Q =  \frac{ \mathbb{E}\langle \mathbb{B}\gamma /\sqrt{n}, \gamma/
 \sqrt{n} \rangle}{\| \mathbb{B} \|} = \frac{tr(\mathbb{B})}{n \|\mathbb{B}\|} =\frac{n(n-1)d}{n\sqrt{n^2+n-1}} \approx d.
\end{equation*}

Similarly, for the MST graph, \begin{equation*}
\label{eq:pQ_MST}
 p_Q =  \frac{ \mathbb{E}\langle \mathbb{B}\gamma /\sqrt{n}, \gamma/
 \sqrt{n} \rangle}{\| \mathbb{B} \|} = \frac{tr(\mathbb{B})}{n \|\mathbb{B}\|}  \approx d.
\end{equation*}   
Apply Theorem \ref{Theorem:nd2criteria}, with $d \gg 1$, and $ d^2 \ll n$, it yields $\diamondsuit \ll 1$, $\rho_\Lambda \ll 1$, $\S \ll 1$ such that
\[   
\mathbb{E} \exp \left( \frac{i \Lambda}{2} \|W_{G_n}\|^2/n \right) \approx  \det(\mathbb{I}_{nd}- i \Lambda \mathbb{B}/n ) )^{-1/2}.
\] 
\end{proof}
We can then conclude that under the sub-Gaussian condition, with the criteria $d \gg 1 $ and $d^2 \ll n$, the distribution of the quadratic graph-spanning distance $\| W_{G_n}\|^2$ is approximately the same as that of the Gaussian quadratic form, that is, $\| W_{G_n}\|^2 \myApproxD  \langle \mathbb{B} \Gamma,\Gamma \rangle$, where $\|W_{G_n} \|^2$ defined as in Corollary \ref{corollary:WGn}. 

\subsection{Gaussian approximation of the quadratic graph-spanning ratio}
 We now study the distribution of the ratio of the quadratic graph-spanning $\|W_{G_1}\|^2/\|W_{G_2}\|^2$ applying the result of the Laplace approximation from the last section. Our aim is to show that the ratio of two quadratic forms of general random vectors $\|W_{G_1}\|^2/\|W_{G_2}\|^2$ follows a distribution similar to that of the Gaussian quadratic forms. To such an extent that the theoretical results derived based on the Gaussian distribution can be extended to general vectors, that is 

\begin{equation}
    \label{eq:distSimilar}
    \frac{\|W_{G_1}\|^2}{\|W_{G_2}\|^2} \myApproxD \frac{\langle \mathbb{B}_1 \Gamma,\Gamma \rangle}{\langle \mathbb{B}_2 \Gamma,\Gamma \rangle}.
\end{equation}



Under the null hypothesis (i.e., no change point), the observations
$Y_1, Y_2, \ldots, Y_n$ are i.i.d. from the same null distributions. Since the quadratic graph-spanning $\|W_{G_1}\|^2 = \sum_{i,j \in G_1} \|Y_i - Y_j\|^2 \cdot I_{i,j}$ and $\|W_{G_2}\|^2 = \sum_{i,j \in G_2} \|Y_i - Y_j\|^2 \cdot I_{i,j}$ is derived from the same observations $Y_1, Y_2, \ldots Y_n$, they could be correlated depending on the graph types. However, because $\|W_{G_1}\|^2$ and $\|W_{G_2}\|^2$ are based on the same observations, we can represent $\|W_{G_1}\|^2, \|W_{G_1}\|^2$ in quadratic forms of the same base vector $\mathbb{Y}$ as specified in Equation \ref{eq:baseVector}.
\begin{equation}
\|W_{G_1}\|^2 = \mathbb{Y}^T \mathbb{B}_1 \mathbb{Y}, \ \ \ \ \ \
\|W_{G_2}\|^2 = \mathbb{Y}^T \mathbb{B}_2 \mathbb{Y}.
\end{equation}
In order to show Equation \ref{eq:distSimilar}, we consider the joint distribution of $\|W_{G_1}\|^2, \|W_{G_2}\|^2$. Let $i \Lambda_1, i \Lambda_2$ be complex numbers, where $i=\sqrt{-1}$. The characteristic function of the joint distribution $\| W_{G_1}\|^2$ and $\| W_{G_1}\|^2$ is
\[ \mathbb{E} \exp \left(i \Lambda_1 \|W_{G_1}\|^2 + i \Lambda_2 \|W_{G_2}\|^2  \right).
\]
The objective now is to show that the joint characteristic function of squared norms of random vectors is similar to that of Gaussian quadratic forms, that is, \[ \mathbb{E} \left[ \exp \left(i \Lambda_1 \|W_{G_1}\|^2 + i \Lambda_2 \|W_{G_2}\|^2  \right)  \right] \approx  \mathbb{E} \left[ \exp \left(i \Lambda_1 \langle \mathbb{B}_1 \Gamma,\Gamma \rangle + i \Lambda_2 \langle \mathbb{B}_2 \Gamma,\Gamma \rangle \right)  \right].  \] In the following section, we first state the joint characteristic function of the Gaussian quadratic forms. Then we derive a joint characteristic function for quadratic norms of random vectors under the sub-Gaussian condition. We conclude that the joint characteristic function of the sub-Gaussian case is similar to that of the Gaussian case. 

\subsubsection{Joint characteristic function of Gaussian quadratic forms}\label{Sec:ExpMomentGaussian}
Let $\gamma_1,\ldots,\gamma_n$ be the i.i.d. standard normal in $\mathbb{R}^d$ for $d<\infty$.
Denote $\gamma_i = (\gamma_{i,1}, \ldots, \gamma_{i,d})^T$.  Also, let $\Gamma = (\gamma_{1,1}, \ldots, \gamma_{1,d},\ldots, \gamma_{n,1}, \ldots, \gamma_{n,d})^T$ and let $\mathbb{B}_1$ and $\mathbb{B}_2 \in \mathbb{R}^{nd \times nd}$ be a positive definite matrix. Denote two Gaussian quadratic forms as
\[
\Gamma^T\mathbb{B}_1\Gamma, \ \ \ \ \ \ \Gamma^T\mathbb{B}_2\Gamma.
\]

The joint distribution of the above Gaussian quadratic forms is stated as follows.

\begin{lemma}[Extension of Lemma A.2 in \cite{spokoiny2023concentration}]\label{lemma:ExtA.2}
  Let $\Gamma^T\mathbb{B}_1\Gamma$, $\Gamma^T\mathbb{B}_2\Gamma$ be the Gaussian quadratic forms stated above. And let $\mathbb{B}_1$ and $\mathbb{B}_2$ be positive definite matrices. For $\Lambda_1$, $\Lambda_2 \in \mathbb{R}$, it holds
\begin{equation}\label{eq:GaussianFormＡ}
  \mathbb{E} \exp \left(\frac{i \Lambda_1}{2n} \Gamma^T\mathbb{B}_1\Gamma + \frac{i \Lambda_2}{2n} \Gamma^T\mathbb{B}_2 \Gamma \right) \\
  = det \left(\mathbb{I}_{nd}-i \Lambda_1 \mathbb{B}_1/n -i \Lambda_1 \mathbb{B}_2/n \right) ^{-1/2}.
\end{equation}
\end{lemma}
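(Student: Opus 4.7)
The plan is to reduce the bivariate characteristic function to a univariate one, by exploiting the fact that both quadratic forms are built from the same Gaussian vector $\Gamma$, and then invoke the single quadratic form formula of Lemma \ref{Lemma:GaussianSum} (which is itself Lemma A.2 of \cite{spokoiny2023concentration} applied in the excerpt). The essential observation is that the two exponents combine linearly in the symmetric matrix sandwiched by $\Gamma$.

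First I would write
\[
\frac{i \Lambda_1}{2n} \Gamma^T \mathbb{B}_1 \Gamma + \frac{i \Lambda_2}{2n} \Gamma^T \mathbb{B}_2 \Gamma
= \frac{1}{2} \Gamma^T M \Gamma,
\qquad
M \;=\; \frac{i}{n}\bigl(\Lambda_1 \mathbb{B}_1 + \Lambda_2 \mathbb{B}_2\bigr),
\]
so that the joint exponent becomes a single Gaussian quadratic form in $\Gamma \sim \mathcal{N}(0,\mathbb{I}_{nd})$. Applying the formula
$\mathbb{E}\exp\bigl(\tfrac12 \Gamma^T A \Gamma\bigr) = \det(\mathbb{I}_{nd} - A)^{-1/2}$
(as used in the proof of Lemma \ref{Lemma:GaussianSum}) with $A = M$ would immediately give
\[
\mathbb{E}\exp\!\left( \frac{i\Lambda_1}{2n}\Gamma^T \mathbb{B}_1 \Gamma + \frac{i\Lambda_2}{2n}\Gamma^T \mathbb{B}_2 \Gamma \right)
= \det\!\left( \mathbb{I}_{nd} - \frac{i \Lambda_1}{n}\mathbb{B}_1 - \frac{i \Lambda_2}{n}\mathbb{B}_2 \right)^{-1/2},
\]
which is exactly the claim (the $\Lambda_1$ appearing twice on the right-hand side of \eqref{eq:GaussianFormＡ} is evidently a typo for $\Lambda_2$ in the second slot).

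The main subtlety, rather than an obstacle, is to justify the use of the Gaussian MGF identity for a purely imaginary argument. For real $s_1,s_2$ with $s_1 \mathbb{B}_1 + s_2 \mathbb{B}_2 \prec \mathbb{I}_{nd}$, the identity
$\mathbb{E}\exp\bigl(\tfrac{s_1}{2n}\Gamma^T \mathbb{B}_1 \Gamma + \tfrac{s_2}{2n}\Gamma^T \mathbb{B}_2 \Gamma\bigr) = \det(\mathbb{I}_{nd} - s_1 \mathbb{B}_1/n - s_2 \mathbb{B}_2/n)^{-1/2}$
is the standard multivariate Gaussian integral, computable by diagonalizing the symmetric matrix $s_1\mathbb{B}_1 + s_2 \mathbb{B}_2$ and reducing to a product of one-dimensional Gaussians. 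Both sides are holomorphic functions of $(s_1,s_2) \in \mathbb{C}^2$ on the tube where $\operatorname{Re}(s_1 \mathbb{B}_1 + s_2 \mathbb{B}_2) \prec \mathbb{I}_{nd}$ (the left side has uniformly bounded integrand there, justifying differentiation under the integral; the right side is holomorphic because $\det$ is a polynomial). Since this tube contains the purely imaginary axis $(s_1,s_2) = (i\Lambda_1, i\Lambda_2)$ with $\Lambda_1,\Lambda_2 \in \mathbb{R}$, the identity extends to it by analytic continuation, yielding the stated formula. The positive-definiteness of $\mathbb{B}_1, \mathbb{B}_2$ ensures that the determinant on the right never vanishes for real $\Lambda_1,\Lambda_2$, so the principal branch of $(\cdot)^{-1/2}$ is well defined throughout.
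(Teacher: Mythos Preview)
Your proposal is correct and follows essentially the same approach as the paper: both combine $\Lambda_1 \mathbb{B}_1 + \Lambda_2 \mathbb{B}_2$ into a single matrix and then invoke Lemma~\ref{Lemma:GaussianSum}. The paper does this via the normalization $\mathbb{B} = \frac{\Lambda_1}{\Lambda}\mathbb{B}_1 + \frac{\Lambda_2}{\Lambda}\mathbb{B}_2$ with $\Lambda = |\Lambda_1| + |\Lambda_2|$, whereas you absorb the scalars directly into $M$; your added discussion of analytic continuation and the typo in \eqref{eq:GaussianFormＡ} are both apt.
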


\begin{proof}
Let us define $\mathbb{B} \myeq \left({ \frac{\Lambda_1}{\Lambda} \mathbb{B}_1 + \frac{\Lambda_2}{\Lambda}\mathbb{B}_2 } \right)$, where $\Lambda = |\Lambda_1| +|\Lambda_2|$. We apply Lemma \ref{Lemma:GaussianSum}. 
\begin{align*}
   \mathbb{E} \exp \left(\frac{i \Lambda_1}{2n} \Gamma^T\mathbb{B}_1\Gamma + \frac{i \Lambda_2}{2n} \Gamma^T\mathbb{B}_2 \Gamma \right)
   &= \mathbb{E} \exp  \left( \frac{i \Lambda}{2n} \Gamma^T \mathbb{B} \Gamma \right)
   = det \left(\mathbb{I}_{nd}-i \Lambda_1 \mathbb{B}/n\right) ^{-1/2}\\
   &= det \left(\mathbb{I}_{nd}-i \Lambda_1 \mathbb{B}_1/n -i \Lambda_1 \mathbb{B}_2/n \right) ^{-1/2},
\end{align*}
for $\Lambda_1$, $\Lambda_2 \in \mathbb{R}$.
\end{proof}
\subsubsection{Joint characteristic function of sub-Gaussian quadratic forms}
In this section, we show that under the sub-Gaussian condition, the joint characteristic function of the general quadratic forms is similar to that of the Gaussian case, as shown in Section \ref{Sec:ExpMomentGaussian}.
\begin{theorem}\label{Theorem:2CfSubGauss}[Extension of Theorem \ref{theorem:CfSubGauss}]
Let random vector $Y \in \mathbb{R}^p$ satisfy $\mathbb{E}(Y)=0$, $\Var(Y) \le \mathbb{I}_d$, and the \hyperlink{SubGaussianX}{sub-Gaussian condition}. Let $g$, $\tau_3$, and $\tau_4$ from \ref{eq:tau3&4A} defined for $X$ satisfy $g \tau_3 \le 2/3$. And let $\mathbb{B}_1$ and $\mathbb{B}_2$ be positive definite matrices, $\lambda_1 = \|B_1\|_{op} $, and $\lambda_2 = \|B_2\|_{op} $. Also, let $\Lambda = |\Lambda_1| + |\Lambda_2|$, define 
\[
B \myeq \left({ \frac{\Lambda_1}{\Lambda}} B_1 + \frac{\Lambda_2}{\Lambda}B_2  \right).
\]
Assume $g^2 \ge 3 p_Q$, where $p_Q$ is defined in \ref{eq:pQ} with $QQ^T = B$. 
If $\Lambda_1 < (2\lambda_1)^{-1}, \Lambda_2 < (2\lambda_2)^{-1}$ satisfy $C_X |\Lambda_1| < 1/6$ and $C_X |\Lambda_2| < 1/6$, it holds 
\begin{align*}
      \Bigl| \mathbb{E} &\exp \Bigl( \frac{i \Lambda_1}{2} Y^T B_1 Y + \frac{i \Lambda_2}{2} Y^T B_2 Y \Bigr) -\det  \left(\mathbb{I}_{d}  - i \Lambda_1 B_1 -i \Lambda_2 B_2 \right)^{-1/2}\Bigr| \\
       &\le (\diamondsuit + \rho_\Lambda) \Bigl|\det  \left(\mathbb{I}_{d}  - i \Lambda_1 B_1 -i \Lambda_2 B_2 \right)^{-1/2}\Bigr| + \S, 
\end{align*}
for some $\diamondsuit$, $\rho_\Lambda$ and $\S$. Furthermore, under $p_Q \gg 1$, and $(\tau_3^2+\tau_4) p_Q^2 \ll 1$.
\begin{equation}
\label{eq:CfSumSubGaussianForm}
   \mathbb{E} \exp \left( \frac{i \Lambda_1}{2} Y^T B_1 Y + \frac{i \Lambda_2}{2} Y^T B_2 Y \right) \approx \det  \left(\mathbb{I}_{d}  - i \Lambda_1 B_1 -i \Lambda_2 B_2 \right)^{-1/2}.
\end{equation}
\end{theorem}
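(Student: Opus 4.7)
The plan is to collapse the joint characteristic function into a single quadratic form and then invoke Theorem \ref{theorem:CfSubGauss}. With $\Lambda := |\Lambda_1| + |\Lambda_2|$ and $B := (\Lambda_1/\Lambda) B_1 + (\Lambda_2/\Lambda) B_2$ as already defined in the statement, the pointwise identity $i\Lambda_1 B_1 + i\Lambda_2 B_2 = i\Lambda B$ immediately gives
\[
\frac{i\Lambda_1}{2} Y^T B_1 Y + \frac{i\Lambda_2}{2} Y^T B_2 Y = \frac{i\Lambda}{2} Y^T B Y.
\]
This is exactly the reduction used in the proof of Lemma \ref{lemma:ExtA.2} for the Gaussian case, and it transforms the problem into the single-quadratic-form setting to which Theorem \ref{theorem:CfSubGauss} applies.

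Next I would verify that $(\Lambda, B)$ satisfies the hypotheses of Theorem \ref{theorem:CfSubGauss}. Writing $\lambda := \|B\|_{op}$ and using the triangle inequality $\lambda \le (|\Lambda_1|\lambda_1 + |\Lambda_2|\lambda_2)/\Lambda$, the assumption $|\Lambda_i| < (2\lambda_i)^{-1}$ produces $\Lambda\lambda \le |\Lambda_1|\lambda_1 + |\Lambda_2|\lambda_2 < 1$, hence $\Lambda \le \lambda^{-1}$. Similarly, $C_X|\Lambda_i| \le 1/6$ for $i = 1, 2$ gives $C_X \Lambda \le 1/3$, which is the sub-Gaussian range required in Theorem \ref{theorem:CfSubGauss}. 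The conditions on $g$, $\tau_3$, $\tau_4$ depend only on the law of $Y$ and transfer verbatim, while $p_Q$ (defined via $QQ^T = B$) is already the quantity appearing in the standing hypothesis $g^2 \ge 3 p_Q$.

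Invoking Theorem \ref{theorem:CfSubGauss} with these choices then yields
\[
\bigl|\mathbb{E}\exp(i\Lambda Y^T B Y/2) - \det(\mathbb{I}_d - i\Lambda B)^{-1/2}\bigr| \le (\diamondsuit + \rho_\Lambda)\bigl|\det(\mathbb{I}_d - i\Lambda B)^{-1/2}\bigr| + \S,
\]
and since $\Lambda B = \Lambda_1 B_1 + \Lambda_2 B_2$ the determinant on the right equals $\det(\mathbb{I}_d - i\Lambda_1 B_1 - i\Lambda_2 B_2)$, exactly the target of the statement. The asymptotic approximation (\ref{eq:CfSumSubGaussianForm}) then follows because each of the error terms $\diamondsuit$, $\rho_\Lambda$, $\S$ vanishes under $p_Q \gg 1$ and $(\tau_3^2 + \tau_4) p_Q^2 \ll 1$, just as established in the proof of Theorem \ref{theorem:CfSubGauss}.

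The main obstacle I anticipate is that when $\Lambda_1$ and $\Lambda_2$ have opposite signs, the combined matrix $B$ need not be positive definite, while Theorem \ref{theorem:CfSubGauss} was stated with a p.d. $\mathbb{B} = QQ^T$ so that the Gaussian surrogate $\gamma$ can be introduced via $\|QX\|^2 = X^T QQ^T X$. To handle this I would diagonalize $B = UDU^T$ with $D$ a real signed diagonal, absorb each sign of $D_{jj}$ into the complex auxiliary exponent $\sqrt{i\Lambda_j \mathrm{sign}(D_{jj})}$, and rerun the local Laplace expansion component by component. The quantitative bounds controlling $\diamondsuit$, $\rho_\Lambda$, $\S$ depend only on $|\Lambda| = \Lambda$ and $\|B\|_{op}$, so the sign bookkeeping does not affect any of the estimates and the proof of the single-form theorem carries through unchanged.
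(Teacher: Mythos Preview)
Your proposal is correct and follows exactly the paper's route: collapse the two quadratic forms into $\tfrac{i\Lambda}{2}\,Y^{T}BY$ via $\Lambda B=\Lambda_1 B_1+\Lambda_2 B_2$, verify $\Lambda\|B\|_{op}<1$ and $C_X\Lambda\le 1/3$, and then invoke Theorem~\ref{theorem:CfSubGauss}. Your final paragraph on the possible indefiniteness of $B$ when $\Lambda_1,\Lambda_2$ carry opposite signs is a legitimate technical concern that the paper's own proof simply elides---it applies Theorem~\ref{theorem:CfSubGauss} ``by analogy'' with $B$ in place of $QQ^{T}$ without commenting on this point---so your diagonalization workaround is added rigor rather than a deviation from the intended argument.
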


\begin{proof}
We represent the quadratic form in terms of $\Lambda$ and $B$,
    \begin{align*}
    \frac{\Lambda_1}{2} Y^T B_1 Y   + \frac{\Lambda_2}{2} Y^T B_2 Y  = Y^T( \Lambda_1 B_1 + \Lambda_2 B_2) Y = \Lambda Y^T (B) Y,
    \end{align*}

For any $\Lambda_1 < (2 \lambda_1)^{-1}$ and $\Lambda_2 < (2 \lambda_2)^{-1}$, $\Lambda \|B\|_{op} <1$ such that the integral in the characteristic function converges. With $\Lambda = |\Lambda_1| + |\Lambda_2| < (3C_X)^{-1}$, we apply Theorem \ref{theorem:CfSubGauss} by analogy $\| QX\|^2 = \langle B Y, Y \rangle$, with $B$ replacing $(QQ^T)$. It holds for  $\Lambda_1 < (2 \lambda_1)^{-1}$ and $\Lambda_2 < (2 \lambda_2)^{-1}$ that 
    \begin{align*}
    \mathbb{E}&\exp \left( \frac{i \Lambda_1}{2} Y^T B_1 Y + \frac{i \Lambda_2}{2} Y^T B_2 Y \right)
    = \mathbb{E} \exp \left( \frac{i \Lambda_1}{2}  \Lambda Y^T (B) Y \right)\\
    &\approx \det  \left(\mathbb{I}_{d}  - i \Lambda B \right)^{-1/2} \\
    &\approx \det  \left(\mathbb{I}_{d}  - i \Lambda_1 B_1 -i \Lambda_2 B_2 \right)^{-1/2}, 
    \end{align*}
under $p_Q \gg 1$, and $(\tau_3^2+\tau_4) p_Q^2 \ll 1$.
\end{proof}
Now we extend the result to random vectors $\mathbb{Y}$. 

\begin{corollary}\label{Corollary:ndcriteria2}[Extension of Theorem \ref{Theorem:nd2criteria}]
Let $Y_1, \ldots, Y_n \in \mathbb{R}^d$ be centered i.i.d. random vectors satisfying $\mathbb{E} Y_i=0$ and $\Var(Y_i)\le \mathbb{I}_d$, and the \hyperlink{SubGaussianX}{sub-Gaussian condition}. Let $\mathbb{Y}$ be defined as in Equation \ref{eq:baseVector}. Let $g \tau_3 \le 2/3$. Given positive definite matrices $\mathbb{B}_1$, $\mathbb{B}_2$, let $\lambda_1 =\| \mathbb{B}_1 \|_{op}$, and $\lambda_2 =\| \mathbb{B}_2\|_{op}$. Assume $n \varrho^2 \ge 3 p_Q$.
Also, let $\Lambda = |\Lambda_1| + |\Lambda_2|$, define
$\mathbb{B} \myeq \left({ \frac{\Lambda_1}{\Lambda}}\mathbb{B}_1 + \frac{\Lambda_2}{\Lambda} \mathbb{B}_2  \right).
$
Assume $g^2 \ge 3 p_Q$, where $p_Q$ is defined in \ref{eq:pQ} with $QQ^T = \mathbb{B}$.
If $\Lambda_1 < n(2\lambda_1)^{-1}, \Lambda_2 < n(2\lambda_2)^{-1}$ satisfy $C_X |\Lambda_1| < 1/6$ and $C_X |\Lambda_2| < 1/6$, it holds 
\begin{equation}
   \mathbb{E} \exp \left( \frac{i \Lambda_1}{2n} \mathbb{Y}^T \mathbb{B}_1 \mathbb{Y} + \frac{i \Lambda_2}{2n} \mathbb{Y}^T \mathbb{B}_2 \mathbb{Y} \right)
   \approx \det  \left(\mathbb{I}_{nd}  - i \Lambda_1 \mathbb{B}_1/n -i \Lambda_2 \mathbb{B}_2/n \right)^{-1/2}, 
\end{equation}
under $p_Q \gg 1$, and $ p_Q^2 \ll n$.
\end{corollary}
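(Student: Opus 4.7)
}
The plan is to reduce the joint characteristic function to a single-quadratic-form characteristic function, and then invoke Theorem \ref{Theorem:nd2criteria}. This is exactly the same reduction that was used in Lemma \ref{lemma:ExtA.2} (for the Gaussian case) and in Theorem \ref{Theorem:2CfSubGauss} (for the single-vector sub-Gaussian case), so I would essentially be pushing that trick through one more layer.

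First I would combine the two quadratic forms into one. Set $\Lambda \myeq |\Lambda_1| + |\Lambda_2|$ and
\[
\mathbb{B} \myeq \frac{\Lambda_1}{\Lambda} \mathbb{B}_1 + \frac{\Lambda_2}{\Lambda} \mathbb{B}_2,
\]
so that $\Lambda \mathbb{B} = \Lambda_1 \mathbb{B}_1 + \Lambda_2 \mathbb{B}_2$. Consequently
\[
\frac{i \Lambda_1}{2n}\, \mathbb{Y}^T \mathbb{B}_1 \mathbb{Y} + \frac{i \Lambda_2}{2n}\, \mathbb{Y}^T \mathbb{B}_2 \mathbb{Y} = \frac{i \Lambda}{2n}\, \mathbb{Y}^T \mathbb{B} \mathbb{Y},
\]
and the target determinant likewise collapses, since
\[
\det\!\bigl(\mathbb{I}_{nd} - i\Lambda \mathbb{B}/n\bigr) = \det\!\bigl(\mathbb{I}_{nd} - i\Lambda_1 \mathbb{B}_1/n - i\Lambda_2 \mathbb{B}_2/n\bigr).
\]

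Next I would verify that $(\Lambda, \mathbb{B})$ satisfies all the hypotheses of Theorem \ref{Theorem:nd2criteria}. The sub-Gaussian condition, the bound $g\tau_3 \le 2/3$, and the smoothness constants $c_3, c_4$ are intrinsic to the law of $Y_1$ and carry over unchanged. The operator-norm bound is the only place where the sign ambiguity of the $\Lambda_i$ matters: using the triangle inequality,
\[
\Lambda \|\mathbb{B}\|_{op} = \|\Lambda_1 \mathbb{B}_1 + \Lambda_2 \mathbb{B}_2\|_{op} \le |\Lambda_1|\lambda_1 + |\Lambda_2|\lambda_2 < \tfrac{n}{2} + \tfrac{n}{2} = n,
\]
by the assumption $\Lambda_j < n(2\lambda_j)^{-1}$. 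Similarly $C_Y|\Lambda| = C_Y|\Lambda_1| + C_Y|\Lambda_2| < 1/6 + 1/6 = 1/3$. The standing assumption $n\varrho^2 \ge 3 p_Q$ for the combined matrix $\mathbb{B}$ is built into the statement, and $p_Q \gg 1$ together with $p_Q^2 \ll n$ is precisely the asymptotic regime in which Theorem \ref{Theorem:nd2criteria} delivers a vanishing approximation error (because $\diamondsuit, \rho_\Lambda, \S \ll 1$ simultaneously under these scaling conditions).

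Once the hypotheses are in place, applying Theorem \ref{Theorem:nd2criteria} to the single form $\mathbb{Y}^T \mathbb{B} \mathbb{Y}$ at parameter $\Lambda$ yields
\[
\mathbb{E}\exp\!\Bigl(\tfrac{i\Lambda}{2n}\mathbb{Y}^T \mathbb{B}\mathbb{Y}\Bigr) \approx \det\!\bigl(\mathbb{I}_{nd} - i\Lambda \mathbb{B}/n\bigr)^{-1/2},
\]
and unwinding the definitions of $\Lambda$ and $\mathbb{B}$ gives the claimed approximation. Corollary \ref{corollary:F-approx} then follows instantly by taking $\Lambda_1, \Lambda_2$ real and applying the corollary to the concrete spanning matrices. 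The main (and only) obstacle I anticipate is the one I flagged above: the combined matrix $\mathbb{B}$ is not in general positive definite when $\Lambda_1$ and $\Lambda_2$ have opposite signs, so one has to be careful that the Laplace-approximation route in Theorem \ref{Theorem:nd2criteria} really depends only on $\|\mathbb{B}\|_{op}$ and on the smoothness constants of $Y$, not on definiteness; fortunately the complex-analytic argument behind Theorem \ref{theorem:CfSubGauss} (decomposing $\sqrt{i}$ into real and imaginary parts and bounding the trivial phase factor by $1$) already handles arbitrary signs, so no new work is required.
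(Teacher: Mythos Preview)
Your proposal is correct and follows essentially the same route as the paper: combine the two quadratic forms into the single form $\Lambda\,\mathbb{Y}^T\mathbb{B}\,\mathbb{Y}$ via $\mathbb{B}=\frac{\Lambda_1}{\Lambda}\mathbb{B}_1+\frac{\Lambda_2}{\Lambda}\mathbb{B}_2$, check that $\Lambda\|\mathbb{B}\|_{op}<n$ and $C_Y|\Lambda|<1/3$, and invoke Theorem \ref{Theorem:nd2criteria}. You are in fact more careful than the paper in verifying the hypotheses and in flagging the sign/definiteness issue, which the paper's proof glosses over.
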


\begin{proof}
Similar to Theorem \ref{Theorem:2CfSubGauss}, with $\Lambda_1 < n(2 \lambda_1)^{-1}$ and $\Lambda_2 < n(2 \lambda_2)^{-1}$ so that $n^{-1}\Lambda \|\mathbb{B}\|_{op} <1$ the integral in the characteristic function converges. We then rewrite the quadratic form in terms of $\Lambda$ and $\mathbb{B}$,
    \begin{align*}
    \frac{\Lambda_1}{2} \mathbb{Y}^T \mathbb{B}_1 \mathbb{Y}   + \frac{\Lambda_2}{2} \mathbb{Y}^T \mathbb{B}_2 \mathbb{Y}  = \mathbb{Y}^T( \Lambda_1 \mathbb{B}_1 + \Lambda_2 \mathbb{B}_2) \mathbb{Y} = \Lambda \mathbb{Y}^T (\mathbb{B}) \mathbb{Y},
    \end{align*}
With $C_X |\Lambda| < 1/3$, we apply Theorem \ref{Theorem:nd2criteria}. It holds for $\Lambda_1 < n(2 \lambda_1)^{-1}$ and $\Lambda_2 < n(2 \lambda_2)^{-1}$ that 
    \begin{align*}
    \mathbb{E}&\exp \left( \frac{i \Lambda_1}{2n} \mathbb{Y}^T \mathbb{B}_1 \mathbb{Y} + \frac{i \Lambda_2}{2n} \mathbb{Y}^T \mathbb{B}_2 \mathbb{Y} \right)
    = \mathbb{E} \exp \left( \frac{i \Lambda_1}{2n}  \Lambda \mathbb{Y}^T (\mathbb{B}) \mathbb{Y} \right)\\
    &\approx \det  \left(\mathbb{I}_{nd}  - i \Lambda \mathbb{B} \right)^{-1/2}\\
    &\approx \det  \left(\mathbb{I}_{nd}  - i \Lambda_1 \mathbb{B}_1/n -i \Lambda_2 \mathbb{B}_2/n \right)^{-1/2}, 
    \end{align*}
 under $p_Q \gg 1$, and $ p_Q^2 \ll n$.
\end{proof}
Therefore, under high-dimensional condition $p_Q \gg 1$, and criteria $ {p_Q}^2 \ll n$, the joint characteristic function of the quadratic forms approximates that of the Gaussian case (\ref{eq:GaussianFormＡ}).
We now check the Gaussian approximation of the GSR test statistics.

\textbf{Proof of Corollary \ref{corollary:F-approx}}\label{Proof:F-approx}
\begin{corollary}[F-distribution approximation of the graph-spanning ratio]
  Let $Y_1, \ldots, Y_n \in \mathbb{R}^d$ be centered i.i.d. random vectors satisfying $\mathbb{E} Y_i=0$ and $\Var(Y_i)\le \mathbb{I_d}$, and the \hyperlink{SubGaussian}{sub-Gaussian condition}. Let $\mathbb{Y}$ be defined as in Equation \ref{eq:baseVector}. Let $\|W_{G_1}\|^2 = \mathbb{Y}^T \mathbb{B}_1 \mathbb{Y}$ and $\|W_{G_1}\|^2 = \mathbb{Y}^T \mathbb{B}_2 \mathbb{Y}$, where $\mathbb{B}_1$, and $\mathbb{B}_2$ are positive definite matrices. Denote $\lambda_1 =\| \mathbb{B}_1 \|_{op}$, and $\lambda_2 =\| \mathbb{B}_2\|_{op}$. Assume $n \varrho^2 \ge 3 d$. If $\Lambda_1 < n(2\lambda_1)^{-1}, \Lambda_2 < n(2\lambda_2)^{-1}$ satisfy $C_Y |\Lambda_1| < 1/6$ and $C_X |\Lambda_2| < 1/6$, then it holds
   \begin{equation}
      \mathbb{E} \exp \left( \frac{i \Lambda}{2n} \|W_{G_1}\|^2+ \frac{i \Lambda}{2n} \|W_{G_2}\|^2\right) \approx  \mathbb{E} \left[ \exp \left(\frac{i\Lambda_1}{2n} \Gamma^T \mathbb{B}_1 \Gamma + \frac{i\Lambda_2}{2n} \Gamma^T \mathbb{B}_2 \Gamma  \right)  \right], 
   \end{equation}
  under $d \gg 1$, and $ d^2 \ll n$.
\end{corollary}
\begin{proof}
We define a positive definite matrix $\mathbb{B}$ as in Corollary \ref{Corollary:ndcriteria2}.
According to Corollary \ref{corollary:WGn}, ${p_Q}_1=d$ and ${p_Q}_2=d$ for the graph structure. Therefore, $p_Q=d$ for matrix $QQ^T = \mathbb{B}$. 
Then by Corollary \ref{corollary:WGn}, the equation holds under $d \gg 1$, and $ d^2 \ll n$.
\end{proof}

Therefore by contraction, under the sub-Gaussian condition, $d \gg 1$ and $d^2 \ll n$, the distribution of the GSR test statistic approximates the distribution of the ratio of Gaussian quadratic forms.

\end{appendix}
\begin{funding}
KP acknowledges the partial support of the project PNRR - M4C2 - Investimento 1.3, Partenariato Esteso PE00000013 - ``FAIR - Future Artificial Intelligence Research'' - Spoke 1 ``Human-centered AI'', funded by the European Commission under the NextGeneration EU programme. 
\end{funding}

\bibliographystyle{plainnat}
\bibliography{ref}

\begin{thebibliography}{30}
\providecommand{\natexlab}[1]{#1}
\providecommand{\url}[1]{\texttt{#1}}
\expandafter\ifx\csname urlstyle\endcsname\relax
  \providecommand{\doi}[1]{doi: #1}\else
  \providecommand{\doi}{doi: \begingroup \urlstyle{rm}\Url}\fi

\bibitem[Aminikhanghahi and Cook(2017)]{aminikhanghahi2017survey}
Samaneh Aminikhanghahi and Diane~J Cook.
\newblock A survey of methods for time series change point detection.
\newblock \emph{Knowledge and information systems}, 51\penalty0 (2):\penalty0 339--367, 2017.

\bibitem[Arlot et~al.(2019)Arlot, Celisse, and Harchaoui]{arlot2019kernel}
Sylvain Arlot, Alain Celisse, and Zaid Harchaoui.
\newblock A kernel multiple change-point algorithm via model selection.
\newblock \emph{Journal of machine learning research}, 20\penalty0 (162):\penalty0 1--56, 2019.

\bibitem[Avanesov et~al.(2018)Avanesov, Buzun, et~al.]{avanesov2018change}
Valeriy Avanesov, Nazar Buzun, et~al.
\newblock Change-point detection in high-dimensional covariance structure.
\newblock \emph{Electronic Journal of Statistics}, 12\penalty0 (2):\penalty0 3254--3294, 2018.

\bibitem[Baraud(2002)]{baraud2002non}
Yannick Baraud.
\newblock Non-asymptotic minimax rates of testing in signal detection.
\newblock \emph{Bernoulli}, pages 577--606, 2002.

\bibitem[Baraud et~al.(2003)Baraud, Huet, and Laurent]{baraud2003adaptive}
Yannick Baraud, Sylvie Huet, and B{\'e}atrice Laurent.
\newblock Adaptive tests of linear hypotheses by model selection.
\newblock \emph{The Annals of Statistics}, 31\penalty0 (1):\penalty0 225--251, 2003.

\bibitem[Baringhaus and Gaigall(2017)]{baringhaus2017hotelling}
Ludwig Baringhaus and Daniel Gaigall.
\newblock Hotelling’s t2 tests in paired and independent survey samples: An efficiency comparison.
\newblock \emph{Journal of Multivariate Analysis}, 154:\penalty0 177--198, 2017.

\bibitem[Birg{\'e}(2001)]{birge2001alternative}
Lucien Birg{\'e}.
\newblock An alternative point of view on lepski's method.
\newblock \emph{Lecture Notes-Monograph Series}, pages 113--133, 2001.

\bibitem[Boracchi et~al.(2018)Boracchi, Carrera, Cervellera, and Maccio]{boracchi2018quanttree}
Giacomo Boracchi, Diego Carrera, Cristiano Cervellera, and Danilo Maccio.
\newblock Quanttree: Histograms for change detection in multivariate data streams.
\newblock In \emph{International Conference on Machine Learning}, pages 639--648. PMLR, 2018.

\bibitem[Chen et~al.(2015)Chen, Zhang, et~al.]{chen2015graph}
Hao Chen, Nancy Zhang, et~al.
\newblock Graph-based change-point detection.
\newblock \emph{The Annals of Statistics}, 43\penalty0 (1):\penalty0 139--176, 2015.

\bibitem[Chen and Gupta(2011)]{chen2011parametric}
Jie Chen and Arjun~K Gupta.
\newblock \emph{Parametric statistical change point analysis: with applications to genetics, medicine, and finance}.
\newblock Springer Science \& Business Media, 2011.

\bibitem[Enikeeva and Harchaoui(2019)]{enikeeva2019high}
Farida Enikeeva and Zaid Harchaoui.
\newblock High-dimensional change-point detection under sparse alternatives.
\newblock \emph{The Annals of Statistics}, 47\penalty0 (4):\penalty0 2051--2079, 2019.

\bibitem[Friedman and Rafsky(1979)]{friedman1979multivariate}
Jerome~H Friedman and Lawrence~C Rafsky.
\newblock Multivariate generalizations of the wald-wolfowitz and smirnov two-sample tests.
\newblock \emph{The Annals of Statistics}, pages 697--717, 1979.

\bibitem[Girshick and Rubin(1952)]{girshick1952bayes}
Meyer~A Girshick and Herman Rubin.
\newblock A bayes approach to a quality control model.
\newblock \emph{The Annals of mathematical statistics}, 23\penalty0 (1):\penalty0 114--125, 1952.

\bibitem[Grundy et~al.(2020)Grundy, Killick, and Mihaylov]{grundy2020high}
Thomas Grundy, Rebecca Killick, and Gueorgui Mihaylov.
\newblock High-dimensional changepoint detection via a geometrically inspired mapping.
\newblock \emph{Statistics and Computing}, 30\penalty0 (4):\penalty0 1155--1166, 2020.

\bibitem[Harchaoui et~al.(2009)Harchaoui, Moulines, and Bach]{harchaoui2009kernel}
Zaid Harchaoui, Eric Moulines, and Francis~R Bach.
\newblock Kernel change-point analysis.
\newblock In \emph{Advances in neural information processing systems}, pages 609--616, 2009.

\bibitem[James et~al.(1992)James, James, and Siegmund]{james1992asymptotic}
Barry James, Kang~Ling James, and David Siegmund.
\newblock Asymptotic approximations for likelihood ratio tests and confidence regions for a change-point in the mean of a multivariate normal distribution.
\newblock \emph{Statistica Sinica}, pages 69--90, 1992.

\bibitem[Kirch(2008)]{kirch2008bootstrapping}
Claudia Kirch.
\newblock Bootstrapping sequential change-point tests.
\newblock \emph{Sequential Analysis}, 27\penalty0 (3):\penalty0 330--349, 2008.

\bibitem[Kuleshov et~al.(2016)Kuleshov, Jiang, Zhou, Jahanbani, Batzoglou, and Snyder]{kuleshov2016synthetic}
Volodymyr Kuleshov, Chao Jiang, Wenyu Zhou, Fereshteh Jahanbani, Serafim Batzoglou, and Michael Snyder.
\newblock Synthetic long read sequencing reveals the composition and intraspecies diversity of the human microbiome.
\newblock \emph{Nature biotechnology}, 34\penalty0 (1):\penalty0 64, 2016.

\bibitem[Liu et~al.(2021)Liu, Gao, and Samworth]{liu2021minimax}
Haoyang Liu, Chao Gao, and Richard~J Samworth.
\newblock Minimax rates in sparse, high-dimensional change point detection.
\newblock \emph{The Annals of Statistics}, 49\penalty0 (2):\penalty0 1081--1112, 2021.

\bibitem[Lorden(1971)]{lorden1971procedures}
Gary Lorden.
\newblock Procedures for reacting to a change in distribution.
\newblock \emph{The annals of mathematical statistics}, pages 1897--1908, 1971.

\bibitem[Page(1954)]{page1954continuous}
Ewan~S Page.
\newblock Continuous inspection schemes.
\newblock \emph{Biometrika}, 41\penalty0 (1/2):\penalty0 100--115, 1954.

\bibitem[Rosenbaum(2005)]{rosenbaum2005exact}
Paul~R Rosenbaum.
\newblock An exact distribution-free test comparing two multivariate distributions based on adjacency.
\newblock \emph{Journal of the Royal Statistical Society: Series B (Statistical Methodology)}, 67\penalty0 (4):\penalty0 515--530, 2005.

\bibitem[Shiryaev(1961)]{shiryaev1961problem}
Albert~Nikolaevich Shiryaev.
\newblock The problem of quickest detection of a violation of stationary behavior.
\newblock In \emph{Doklady Akademii Nauk}, volume 138, pages 1039--1042. Russian Academy of Sciences, 1961.

\bibitem[Siegmund et~al.(2011)Siegmund, Yakir, and Zhang]{siegmund2011detecting}
David Siegmund, Benjamin Yakir, and Nancy~R Zhang.
\newblock Detecting simultaneous variant intervals in aligned sequences.
\newblock \emph{The Annals of Applied Statistics}, pages 645--668, 2011.

\bibitem[Spokoiny(2009)]{spokoiny2009multiscale}
Vladimir Spokoiny.
\newblock Multiscale local change point detection with applications to value-at-risk.
\newblock \emph{The Annals of Statistics}, 37\penalty0 (3):\penalty0 1405--1436, 2009.

\bibitem[Spokoiny(2023)]{spokoiny2023concentration}
Vladimir Spokoiny.
\newblock Concentration of a high dimensional sub-gaussian vector.
\newblock \emph{arXiv preprint arXiv:2305.07885}, 2023.

\bibitem[Spokoiny(1996)]{spokoiny1996adaptive}
Vladimir~G Spokoiny.
\newblock Adaptive hypothesis testing using wavelets.
\newblock \emph{The Annals of Statistics}, 24\penalty0 (6):\penalty0 2477--2498, 1996.

\bibitem[Wang and Samworth(2018)]{wang2018high}
Tengyao Wang and Richard~J Samworth.
\newblock High dimensional change point estimation via sparse projection.
\newblock \emph{Journal of the Royal Statistical Society: Series B (Statistical Methodology)}, 80\penalty0 (1):\penalty0 57--83, 2018.

\bibitem[Wellner et~al.(2013)]{wellner2013weak}
Jon Wellner et~al.
\newblock \emph{Weak convergence and empirical processes: with applications to statistics}.
\newblock Springer Science \& Business Media, 2013.

\bibitem[Zhilova(2022)]{zhilova2022new}
Mayya Zhilova.
\newblock New edgeworth-type expansions with finite sample guarantees.
\newblock \emph{The Annals of Statistics}, 50\penalty0 (5):\penalty0 2545--2561, 2022.

\end{thebibliography}

\end{document}